\newcommand{\rot}[1]{\multicolumn{1}{c}{\adjustbox{angle=30,lap=\width-1em}{#1}}}
\newcommand{\algorithmfootnote}[2][\footnotesize]{%
  \let\old@algocf@finish\@algocf@finish% Store algorithm finish macro
  \def\@algocf@finish{\old@algocf@finish% Update finish macro to insert "footnote"
    \leavevmode\rlap{\begin{minipage}{\linewidth}
    #1#2
    \end{minipage}}%
  }%
}
\def\xscale{1}
\def\yscale{1}
\def\nodescale{1}
\tikzset{->-/.style={decoration={
  markings,
  mark=at position .5 with {\arrow{>}}},postaction={decorate}}}
\newcommand{\xmark}{\ding{55}}%
\DeclareMathOperator*{\degr}{deg}
\DeclareMathOperator*{\height}{\mathcal{H}}
\DeclareMathOperator*{\roottree}{\mathcal{R}}
\DeclareMathOperator*{\child}{\mathcal{C}}
\DeclareMathOperator*{\leaves}{\mathcal{L}}
\DeclareMathOperator*{\fami}{\mathcal{F}} %famille
\DeclareMathOperator*{\des}{\mathcal{D}} %descendants
\DeclareMathOperator*{\parents}{\mathcal{P}} %parents
\DeclareMathOperator*{\subtrees}{\mathcal{S}}
\DeclareMathOperator*{\red}{\mathfrak{R}^\ast}
\renewcommand*{\O}{\mathcal{O}} %big O notation
\newcommand{\data}[2][]{\mathcal{X}_\text{\normalfont #2}^\text{#1}} %data sets
\newcommand{\origin}{o} % \DeclareMathOperator*{\origin}{origin}
\newcommand{\num}{N}
\DeclareMathOperator*{\presence}{\varphi}
\DeclareMathOperator*{\smstp}{f^\ast}
\DeclareMathOperator*{\stree}{\mathbb{T}_\mathcal{F_T}}
\DeclareMathOperator*{\sdag}{\mathbb{D}_\mathcal{F}}
\DeclareMathOperator*{\ntm}{\sigma} %%%
\begin{document}

\title{The Weight Function in the Subtree Kernel is Decisive}

\author{\name Romain Aza\"is \email romain.azais@inria.fr \\
       \addr Laboratoire Reproduction et D\'eveloppement des Plantes\\
       Univ Lyon, ENS de Lyon, UCB Lyon 1, CNRS, INRAE, Inria, F-69342, Lyon, France
       \AND
       \name Florian Ingels \email florian.ingels@inria.fr \\
       \addr Laboratoire Reproduction et D\'eveloppement des Plantes\\
       Univ Lyon, ENS de Lyon, UCB Lyon 1, CNRS, INRAE, Inria, F-69342, Lyon, France}

\editor{}

\maketitle

\begin{abstract}%   <- trailing '%' for backward compatibility of .sty file
Tree data are ubiquitous because they model a large variety of situations, e.g., the architecture of plants, the secondary structure of RNA, or the hierarchy of XML files. Nevertheless, the analysis of these non-Euclidean data is difficult per se. In this paper, we focus on the subtree kernel that is a convolution kernel for tree data introduced by Vishwanathan and Smola in the early 2000's. More precisely, we investigate the influence of the weight function from a theoretical perspective and in real data applications. We establish on a 2-classes stochastic model that the performance of the subtree kernel is improved when the weight of leaves vanishes, which motivates the definition of a new weight function, learned from the data and not fixed by the user as usually done. To this end, we define a unified framework for computing the subtree kernel from ordered or unordered trees, that is particularly suitable for tuning parameters. We show through eight real data classification problems the great efficiency of our approach, in particular for small data sets, which also states the high importance of the weight function. Finally, a visualization tool of the significant features is derived.
\end{abstract}

\begin{keywords}
classification of tree data; kernel methods; subtree kernel; weight function; tree compression
\end{keywords}

\section{Introduction}

\subsection{Analysis of tree data}
Tree data naturally appear in a wide range of scientific fields, from RNA secondary structures in biology \citep{LE1989461} to \verb+XML+ files \citep{10.1007/978-3-540-30116-5_15} in computer science through dendrimers \citep{PhysRevB.65.155116} in chemistry and physics. Consequently, the statistical analysis of tree data is of great interest. Nevertheless, investigating these data is difficult due to the intrinsic non-Euclidean nature of trees.

Several approaches have been considered in the literature to deal with this kind of data: edit distances between unordered or ordered trees \citep[see][and the references therein]{B2005}, coding processes for ordered trees \citep{Shen2014FunctionalDA}, with a special focus on conditioned Galton-Watson trees \citep{AGH19,KPDRV14}. One can also mention the approach developed in \citep{wang2007}. In the present paper, we focus on kernel methods, a complementary family of techniques that are well-adapted to non-Euclidean data.

Kernel methods consists in mapping the original data into a (inner product) feature space. Choosing the proper feature space and finding out the mapping might be very difficult. Furthermore, the curse of dimensionality takes place and the feature space may be extremely big, therefore impossible to use. Fortunately, a wide range of prediction algorithms do not need to access that feature space, but only the inner product between elements of the feature space. Building a function, called a kernel, that simulates an inner product in an implicit feature space, frees us from constructing a mapping. Indeed, $K:\mathcal{X}^2\to\mathbb{R}$ is said to be a kernel function on $\mathcal{X}$ if, for any $(x_1,\dots,x_n)\in\mathcal{X}^n$, the Gram matrix $\left[K(x_i,x_j)\right]_{1\leq i,j\leq n}$ is positive semidefinite. By virtue of Mercer's theorem \citeyearpar{mercer1909xvi}, there exists a (inner product) feature space $\mathcal{Y}$ and a mapping $\varphi:\mathcal{X}\to\mathcal{Y}$ such that, for any $(x,y)\in\mathcal{X}^2$, $K(x,y) = \langle\varphi(x) , \varphi(y)\rangle_{\mathcal{Y}}$. This technique is known as the kernel trick. Algorithms that can use kernels include Support Vector Machines (SVM), Principal Components Analyses (PCA) and many others. We refer the reader to the books \citep{cristianini2000introduction,Scholkopf:2001:LKS:559923,shawe2004kernel} and the references therein for more detailed explanations of theory and applications of kernels.

To use kernel-based algorithms with tree data, one needs to design kernel functions adapted to trees. Convolution kernels, introduced by Haussler \citeyearpar{haussler1999convolution}, measure the similarity between two complex combinatorial objects based on the similarity of their substructures. Based on this strategy, many authors have developed convolution kernels for trees, among them the subset tree kernel \citep{collins2002convolution}, the subtree kernel \citep{vishwanathan2002fast} and the subpath kernel \citep{kimura2011subpath}. A recent state-of-the-art on kernels for trees can be found in the thesis of Da San Martino \citeyearpar{da2009kernel}, as well as original contributions on related topics. In this article, we focus on the subtree kernel as defined by \cite{vishwanathan2002fast}. In this introduction, we develop some concepts on trees in Subsection~\ref{subsec:ordered}. They are required to deal with the precise definition of the subtree kernel in Subsection~\ref{ss:kernel} as well as the aim of the paper presented in Subsection~\ref{ss:aim}.

\subsection{Unordered and ordered rooted trees}\label{subsec:ordered}

\paragraph{Rooted trees} A rooted tree $T$ is a connected graph with no cycle such that there exists a unique vertex $\roottree(T)$, called the root, which has no parent, and any vertex different from the root has exactly one parent. The leaves $\leaves(T)$ are all the vertices without children. The height of a vertex $v$ of a tree $T$ can be recursively defined as $\height(v)=0$ if $v$ is a leaf of $T$ and
$$\height(v)=1+\max_{w\in\child(v)}\height(w)$$
otherwise. The height $\height(T)$ of the tree $T$ is defined as the height of its root, i.e., $\height(T)=\height(\roottree(T))$.
The outdegree of $T$ is the maximal branching factor that can be found in $T$, that is
$$\degr(T)=\max_{v\in T}\#\child(v),$$
where $\child(v)$ denotes the set of children of $v$.
For any vertex $v$ of $T$, the subtree $T[v]$ rooted in $v$ is the tree composed of $v$ and all its descendants $\des(v)$. $\subtrees(T)$ denotes the set of subtrees of $T$.

\paragraph{Unordered trees} Rooted trees are said unordered if the order between the sibling vertices of any vertex is not significant. The precise definition of unordered rooted trees, or simply unordered trees, is obtained from the following equivalence relation: two trees $T_1$ and $T_2$ are isomorphic (as unordered trees) if there exists a one-to-one correspondence $\Phi$ from the set of vertices of $T_1$ into the set of vertices of $T_2$ such that, if $w$ is a child of $v$ in $T_1$, then $\Phi(w)$ is a child of $\Phi(v)$ in $T_2$. The set of unordered trees is the quotient set of rooted trees by this equivalence relation.

\paragraph{Ordered trees} In ordered rooted trees, or simply ordered trees, the set of children of any vertex is ordered. As before, ordered trees can be defined as a quotient set if one adds the concept of order to the equivalence relation: two trees $T_1$ and $T_2$ are isomorphic (as ordered trees) if there exists a one-to-one correspondence $\Phi$ from the set of vertices of $T_1$ into the set of vertices of $T_2$ such that, if $w$ is the $r^{\text{th}}$ child of $v$ in $T_1$, then $\Phi(w)$ is the $r^\text{th}$ child of $\Phi(v)$ in $T_2$.

In the whole paper, $\mathcal{T}^\ast$ denotes the set of $\ast$-trees with $\ast \in \{\text{ordered},\text{unordered}\}$.

\subsection{Subtree kernel}\label{ss:kernel}

The subtree kernel has been introduced by \cite{vishwanathan2002fast} as a convolution kernel on trees for which the similarity between two trees is measured through the similarity of their subtrees. A subtree kernel $K$ on $\ast$-trees is defined as,
\begin{equation}\label{eq:defK}
\forall\,T_1,T_2\in\mathcal{T}^\ast,~K(T_1,T_2) = \sum_{\tau\in\mathcal{T}^\ast} w_\tau\,\kappa\left( \num_{\tau}(T_1) , \num_\tau(T_2) \right) ,
\end{equation}
where $w_\tau$ is the weight associated to $\tau$, $\num_{\tau}(T)$ counts the number of subtrees of $T$ that are isomorphic (as $\ast$-trees) to $\tau$ and $\kappa$ is a kernel function on $\mathbb{N}$, $\mathbb{Z}$ or $\mathbb{R}$ \citep[see][Section 2.3 for some classic examples]{Scholkopf:2001:LKS:559923}. Assuming $\kappa(0,n) = \kappa(n,0) = 0$, the formula $\eqref{eq:defK}$ of $K$ becomes
$$
	K(T_1,T_2) = \sum_{\tau\in\subtrees(T_1)\cap\subtrees(T_2)} w_\tau\,\kappa\left( \num_{\tau}(T_1) , \num_\tau(T_2) \right),
$$
making the sum finite. Indeed, all the subtrees $\tau\in\mathcal{T}^\ast\setminus\subtrees(T_1)\cap\subtrees(T_2)$ do not count in the sum \eqref{eq:defK}. In this paper, as for \cite{vishwanathan2002fast}, we assume that $\kappa(n,m) = nm$, then
\begin{equation}\label{eq:K:subtrees}
	K(T_1,T_2) = \sum_{\tau\in\subtrees(T_1)\cap\subtrees(T_2)} w_\tau\,\num_{\tau}(T_1) \num_\tau(T_2).
\end{equation}
which is the subtree kernel as introduced by \cite{vishwanathan2002fast}.

The weight function $\tau\mapsto w_\tau$ is the only parameter to be tuned. In the literature, the weight is always assumed to be a function of a quantity measuring the ``size'' of $\tau$, in particular its height $\height(\tau)$. Then $w_\tau$ is taken as an exponential decay of this quantity, $w_\tau = \lambda^{\height(\tau)}$ for some $\lambda\in[0,1]$ \citep{aiolli2006fast,collins2002convolution,da2009kernel,kimura2011subpath,vishwanathan2002fast}. This choice can be justified in the following manner. If a subtree $\tau$ is counted in the kernel, then all its subtrees are also counted. Then an exponential decay counterbalances the exponential growth of the number of subtrees.

In the literature, two algorithms have been proposed to compute the subtree kernel for ordered trees. The approach of \citep{vishwanathan2002fast} is based on string representations of trees, while the authors of \citep{aiolli2006fast, da2009kernel} extensively use DAG reduction of tree data, an algorithm that achieves lossly compression of trees. To the best of our knowledge, the case of unordered trees has only been considered through the arbitrary choice of a sibling order.

\subsection{Aim of the paper}\label{ss:aim}

The aim of the present paper is threefold:
\begin{enumerate}

\item We investigate the theoretical properties of the subtree kernel on a $2$-classes model of random trees in Section~\ref{sec:theory}. More precisely, we provide a lower-bound for the contrast of the kernel in Proposition~\ref{prop:ker:sep}. Indeed, the higher the contrast, the less data are required to achieve a given performance in prediction \citep[see][for general similarity functions and Corollary~\ref{cor:ker:size} for the subtree kernel]{DBLP:journals/ml/BalcanBS08}. We exploit this result to show in Subsection~\ref{ss:weight:leaves} that the contrast of the subtree kernel is improved if the weight of leaves vanishes. The relevance of the model is discussed in Remark~\ref{rem:model}.

\item We rely on \cite{aiolli2006fast,da2009kernel} on ordered trees to develop in Section~\ref{sec:dag} a unified framework based on DAG reduction for computing the subtree kernel from ordered or unordered trees, with or without labels on their vertices. Subsection~\ref{ss:dag:tree} is devoted to DAG reduction of unordered then ordered trees. DAG reduction of a forest is introduced in Subsection~\ref{ss:dag:forest}. Then, the subtree kernel is computed from the annotated DAG reduction of the data set is Subsection~\ref{ss:dag:data set}. We notice in Remark~\ref{rem:compl:sub} that DAG reduction of the data set is costly but makes possible super-fast repeated computations of the kernel, which is particularly adapted for tuning parameters. This is the main advantage of the DAG computation of the subtree kernel compared to the algorithm based on string representations \citep{vishwanathan2002fast}. Our method allows the implementation of any weighting function, while the recursive computation of the subtree kernel proposed by \cite[Chapter 6]{da2009kernel} also uses DAG reduction of tree data but makes an extensive use of the exponential form of the weight (combining equations (3.12) and (6.2) from \cite{da2009kernel}). We also investigate the theoretical complexities of the different steps of the DAG computation for both ordered and unordered trees (see Proposition~\ref{prop:dag:recompression:complexity} and Remark~\ref{rem:compl:sub}). This kind of question has been tackled in the literature only for ordered trees and from a numerical perspective \citep[Section~4]{aiolli2006fast}.

\item As aforementioned, we show in the context of a stochastic model that the performance of the subtree kernel is improved when the weight of leaves is $0$ (see Section~\ref{sec:theory}). Relying on this (see also Remark~\ref{rem:link:model:discr} on the possible generalization of this result), we define in Section~\ref{sec:weight} a new weight function, called discriminance, that is not a function of the size of the argument as in the literature, but is learned from the data. The learning step of the discriminance weight function strongly relies on the DAG computation of the subtree kernel presented above because it allows the enumeration of all the subtrees composing the data set without redundancies. We explore in Section~\ref{sec:data} the relevance of this new weighting scheme across several data sets, notably on the difficult prediction problem of the language of a Wikipedia article from its structure in Subsection~\ref{ss:wikipedia}. Beyond very good classification results, we show that the methodology developed in the paper can be used to extract the significant features of the problem and provide a visualization at a glance of the data set. In addition, we remark that the average discriminance weight decreases exponentially as a function of the height (except for leaves). Thus, the discriminance weight can be interpreted as the second order of the exponential weight introduced in the literature. Application to real-world data sets in Subsections~\ref{ss:markup}, \ref{ss:biol} and \ref{ss:logml} shows that the discriminance weight is particularly relevant for small databases when the classification problem is rather difficult, as depicted in Fig.\,\ref{fig:improvement}.

\end{enumerate}

Finally, concluding remarks are presented in Section~\ref{s:concl}. Technical proofs have been deferred into Appendices~\ref{app:proof:ker:sep} and \ref{app:proof:dag:recompression}.

%%%%%%%%%%%%%%%%%%%%%%%%%%%%%%%%%%%%%%%%%%%%%%%%%%
%%%%%%%%%%%%%%%%%%%%%%%%%%%%%%%%%%%%%%%%%%%%%%%%%%

\section{Theoretical study}
\label{sec:theory}

In this section, we define a stochastic model of $2$-classes tree data.
From this ideal data set, we prove the efficiency of the subtree kernel and derive the sufficient size of the training data set to get a classifier with a given prediction error.
We also state on this simple model that the weight of leaves should always be $0$.
We emphasize that this study is valid for both ordered and unordered trees.

\subsection{Two trees as different as possible}

Our goal is to build a $2$-classes data set of random trees. To this end, we first define two typical trees $T_0$ and $T_1$ that are as different as possible in terms of subtree kernel.

Let $T_0$ and $T_1$ be two trees that fulfill the following conditions:
\begin{enumerate}
\item $\forall\,i\in\{0,1\}$, $\forall\,u,v\in T_i \setminus \leaves(T_i)$, if $u\neq v$ then $T_i[u]\neq T_i[v]$, i.e, two subtrees of $T_i$ are not isomorphic (except leaves).
\item $\forall\,u\in T_0 \setminus \leaves(T_0)$, $\forall\,v\in T_1 \setminus \leaves(T_1)$, $T_0[u]\neq T_1[v]$, i.e., any subtree of $T_0$ is not isomorphic to a subtree of $T_1$ (except leaves).
\end{enumerate}

These two assumptions ensure that the trees $T_0$ and $T_1$ are as different as possible. Indeed, it is easy to see that
$$ K(T_0,T_1) = w_\bullet \#\leaves(T_0)\#\leaves(T_1),$$
which is the minimal value of the kernel and where $\omega_\bullet$ is the weight of leaves. We refer to Fig.\,\ref{fig:trees:ass12} for an example of trees that satisfy these conditions.

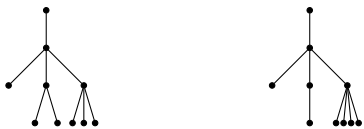
\begin{figure}[h]
\centering
\begin{tikzpicture}[xscale=0.1,yscale=0.1]
\tikzstyle{fleche}=[-,>=latex]
\tikzstyle{noeud}=[fill=black,circle,draw,scale=0.2]
\tikzstyle{etiquette}=[midway]

\def\DistanceInterNiveaux{5}
\def\DistanceInterFeuilles{5}

\def\NiveauA{(-0)*\DistanceInterNiveaux}
\def\NiveauB{(-1)*\DistanceInterNiveaux}
\def\NiveauC{(-2)*\DistanceInterNiveaux}
\def\NiveauD{(-3)*\DistanceInterNiveaux}
\def\InterFeuilles{(1)*\DistanceInterFeuilles}

\node[noeud] (A) at ({(0)*\InterFeuilles},{\NiveauA}) {};
\node[noeud] (B) at ({(0)*\InterFeuilles},{\NiveauB}) {};
\node[noeud] (C) at ({(-1)*\InterFeuilles},{\NiveauC}) {};
\node[noeud] (D) at ({(0)*\InterFeuilles},{\NiveauC}) {};
\node[noeud] (E) at ({(1)*\InterFeuilles},{\NiveauC}) {};
\node[noeud] (F) at ({(-0.3)*\InterFeuilles},{\NiveauD}) {};
\node[noeud] (G) at ({(0.3)*\InterFeuilles},{\NiveauD}) {};
\node[noeud] (H) at ({(0.7)*\InterFeuilles},{\NiveauD}) {};
\node[noeud] (I) at ({(1)*\InterFeuilles},{\NiveauD}) {};
\node[noeud] (J) at ({(1.3)*\InterFeuilles},{\NiveauD}) {};

\draw[fleche] (A)--(B) node[etiquette] {};
\draw[fleche] (B)--(C) node[etiquette] {};
\draw[fleche] (B)--(D) node[etiquette] {};
\draw[fleche] (B)--(E) node[etiquette] {};

\draw[fleche] (D)--(F) node[etiquette] {};
\draw[fleche] (D)--(G) node[etiquette] {};

\draw[fleche] (E)--(H) node[etiquette] {};
\draw[fleche] (E)--(I) node[etiquette] {};
\draw[fleche] (E)--(J) node[etiquette] {};
\end{tikzpicture}
\hspace{2cm}
\begin{tikzpicture}[xscale=0.1,yscale=0.1]
\tikzstyle{fleche}=[-,>=latex]
\tikzstyle{noeud}=[fill=black,circle,draw,scale=0.2]
\tikzstyle{etiquette}=[midway]

\def\DistanceInterNiveaux{5}
\def\DistanceInterFeuilles{5}

\def\NiveauA{(-0)*\DistanceInterNiveaux}
\def\NiveauB{(-1)*\DistanceInterNiveaux}
\def\NiveauC{(-2)*\DistanceInterNiveaux}
\def\NiveauD{(-3)*\DistanceInterNiveaux}
\def\InterFeuilles{(1)*\DistanceInterFeuilles}

\node[noeud] (A) at ({(0)*\InterFeuilles},{\NiveauA}) {};
\node[noeud] (B) at ({(0)*\InterFeuilles},{\NiveauB}) {};
\node[noeud] (C) at ({(-1)*\InterFeuilles},{\NiveauC}) {};
\node[noeud] (D) at ({(0)*\InterFeuilles},{\NiveauC}) {};
\node[noeud] (E) at ({(1)*\InterFeuilles},{\NiveauC}) {};
\node[noeud] (F) at ({(0)*\InterFeuilles},{\NiveauD}) {};
\node[noeud] (H) at ({(0.7)*\InterFeuilles},{\NiveauD}) {};
\node[noeud] (I) at ({(0.9)*\InterFeuilles},{\NiveauD}) {};
\node[noeud] (J) at ({(1.1)*\InterFeuilles},{\NiveauD}) {};
\node[noeud] (K) at ({(1.3)*\InterFeuilles},{\NiveauD}) {};

\draw[fleche] (A)--(B) node[etiquette] {};
\draw[fleche] (B)--(C) node[etiquette] {};
\draw[fleche] (B)--(D) node[etiquette] {};
\draw[fleche] (B)--(E) node[etiquette] {};

\draw[fleche] (D)--(F) node[etiquette] {};

\draw[fleche] (E)--(H) node[etiquette] {};
\draw[fleche] (E)--(I) node[etiquette] {};
\draw[fleche] (E)--(J) node[etiquette] {};
\draw[fleche] (E)--(K) node[etiquette] {};
\end{tikzpicture}
\caption{Two trees $T_0$ and $T_1$ that fulfill conditions 1 and 2.}
\label{fig:trees:ass12}
\end{figure}

Trees of class $i$ will be obtained as random editions of $T_i$. In the sequel, $T_i(v\mapsto\tau)$ denotes the tree $T_i$ in which the subtree rooted at $u$ has been replaced by $\tau$.
These random edits will tend to make trees of class $0$ closer to trees of class $1$. To this end, we introduce the following additional assumption. Let $(\tau_h)$ a sequence of trees such that $\height(\tau_h) = h$.
\begin{enumerate}
\setcounter{enumi}{2}
\item Let $u\in T_0$ and $v\in T_1$. We consider the edited trees $T_0' = T_0(u\mapsto\tau_{\height(u)})$ and $T_1'=T_1(v\mapsto\tau_{\height(v)})$. Then,
$\forall\,u'\in T_0'\setminus \left(\tau_{\height(u)} \cup \leaves(T_0')\right)$, $\forall\,v'\in T_1'\setminus \left(\tau_{\height(v)}  \cup\leaves(T_1')\right)$, $T_0'[u']\neq T_1'[v']$.
\end{enumerate}

In other words, if one replaces subtrees of $T_0$ and $T_1$ by subtrees of the same height, then any subtree of $T_0$ is not isomorphic to a subtree of $T_1$ (except the new subtrees and leaves). This means that the similarity between random edits of $T_0$ and $T_1$ will come only from the new subtrees and not from collateral modifications. We refer to Fig.\,\ref{fig:trees:ass123} for an example of trees that satisfy these conditions.

\begin{figure}[h]
\centering
\begin{tikzpicture}[xscale=0.1,yscale=0.1]
\tikzstyle{fleche}=[-,>=latex]
\tikzstyle{noeud}=[fill=black,circle,draw,scale=0.2]
\tikzstyle{etiquette}=[midway]

\def\DistanceInterNiveaux{5}
\def\DistanceInterFeuilles{5}

\def\NiveauA{(-0)*\DistanceInterNiveaux}
\def\NiveauB{(-1)*\DistanceInterNiveaux}
\def\NiveauC{(-2)*\DistanceInterNiveaux}
\def\NiveauD{(-3)*\DistanceInterNiveaux}
\def\InterFeuilles{(1)*\DistanceInterFeuilles}

\node[noeud] (A) at ({(0)*\InterFeuilles},{\NiveauA}) {};
\node[noeud] (B) at ({(0)*\InterFeuilles},{\NiveauB}) {};
\node[noeud] (C) at ({(-1)*\InterFeuilles},{\NiveauC}) {};
\node[noeud] (D) at ({(0)*\InterFeuilles},{\NiveauC}) {};
\node[noeud] (E) at ({(1)*\InterFeuilles},{\NiveauC}) {};
\node[noeud] (F) at ({(-0.3)*\InterFeuilles},{\NiveauD}) {};
\node[noeud] (G) at ({(0.3)*\InterFeuilles},{\NiveauD}) {};
\node[noeud] (H) at ({(0.7)*\InterFeuilles},{\NiveauD}) {};
\node[noeud] (I) at ({(1)*\InterFeuilles},{\NiveauD}) {};
\node[noeud] (J) at ({(1.3)*\InterFeuilles},{\NiveauD}) {};

\draw[fleche] (A)--(B) node[etiquette] {};
\draw[fleche] (B)--(C) node[etiquette] {};
\draw[fleche] (B)--(D) node[etiquette] {};
\draw[fleche] (B)--(E) node[etiquette] {};

\draw[fleche] (D)--(F) node[etiquette] {};
\draw[fleche] (D)--(G) node[etiquette] {};

\draw[fleche] (E)--(H) node[etiquette] {};
\draw[fleche] (E)--(I) node[etiquette] {};
\draw[fleche] (E)--(J) node[etiquette] {};
\end{tikzpicture}
\hspace{2cm}
\begin{tikzpicture}[xscale=0.1,yscale=0.1]
\tikzstyle{fleche}=[-,>=latex]
\tikzstyle{noeud}=[fill=black,circle,draw,scale=0.2]
\tikzstyle{etiquette}=[midway]

\def\DistanceInterNiveaux{5}
\def\DistanceInterFeuilles{5}

\def\NiveauA{(-0)*\DistanceInterNiveaux}
\def\NiveauB{(-1)*\DistanceInterNiveaux}
\def\NiveauC{(-2)*\DistanceInterNiveaux}
\def\NiveauD{(-3)*\DistanceInterNiveaux}
\def\InterFeuilles{(1)*\DistanceInterFeuilles}

\node[noeud] (A) at ({(0)*\InterFeuilles},{\NiveauA}) {};

\node[noeud] (AA) at ({(-1)*\InterFeuilles},{\NiveauB}) {};

\node[noeud] (B) at ({(0)*\InterFeuilles},{\NiveauB}) {};
\node[noeud] (C) at ({(-1)*\InterFeuilles},{\NiveauC}) {};
\node[noeud] (D) at ({(0)*\InterFeuilles},{\NiveauC}) {};
\node[noeud] (E) at ({(1)*\InterFeuilles},{\NiveauC}) {};
\node[noeud] (F) at ({(0)*\InterFeuilles},{\NiveauD}) {};
\node[noeud] (H) at ({(0.7)*\InterFeuilles},{\NiveauD}) {};
\node[noeud] (I) at ({(0.9)*\InterFeuilles},{\NiveauD}) {};
\node[noeud] (J) at ({(1.1)*\InterFeuilles},{\NiveauD}) {};
\node[noeud] (K) at ({(1.3)*\InterFeuilles},{\NiveauD}) {};

\draw[fleche] (A)--(AA) node[etiquette] {};
\draw[fleche] (A)--(B) node[etiquette] {};
\draw[fleche] (B)--(C) node[etiquette] {};
\draw[fleche] (B)--(D) node[etiquette] {};
\draw[fleche] (B)--(E) node[etiquette] {};

\draw[fleche] (D)--(F) node[etiquette] {};

\draw[fleche] (E)--(H) node[etiquette] {};
\draw[fleche] (E)--(I) node[etiquette] {};
\draw[fleche] (E)--(J) node[etiquette] {};
\draw[fleche] (E)--(K) node[etiquette] {};
\end{tikzpicture}
\caption{Two trees $T_0$ and $T_1$ that fulfill conditions 1, 2 and 3.}
\label{fig:trees:ass123}
\end{figure}
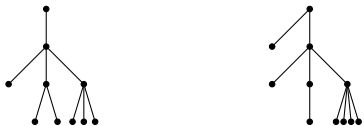

\subsection{A stochastic model of $2$-classes tree data}\label{ss:stomodel}

From now on, we assume that, for any $h>0$, $\tau_h$ is not a subtree of $T_0$ nor $T_1$. For the sake of simplicity, $T_0$ and $T_1$ have the same height $H$. In addition, if $u\in T_i$ then $T_i^u$ denotes $T_i(u\mapsto\tau_{\height(u)})$.

The stochastic model of $2$-classes tree data that we consider is defined from the binomial distribution $P_\rho = \mathcal{B}(H,\rho/H)$ on support $\{0,\dots,H\}$ with mean $\overline{P}_\rho = \rho$. The parameter $\rho \in [0,H]$ is fixed. In the data set, class $i$ is composed of random trees $T_i^u$, where the vertex $u$ has been picked uniformly at random among vertices of height $h$ in $T_i$, where $h$ follows $P_\rho$.
Furthermore, the considered training data set is well-balanced in the sense that it contains the same number of data of each class.

Intuitively, when $\rho$ increases, the trees are more degraded and thus two trees of different class are closer. $\rho$ somehow measures the similarity between the two classes. In other words, the larger $\rho$, the more difficult is the supervised classification problem.

\begin{remark}\label{rem:model}
The structure of a markup document such as an \verb+HTML+ page can be described by a tree (see Subsection~\ref{ss:prelim} and Fig.\,\ref{fig:htmltree} for more details). In this context, the tree $T_i$, $i\in\{0,1\}$, can be seen as a model of the structure of a webpage template. By assumption, the two templates of interest are as different as possible. However, they are completed in a similar manner, for example to present the same content in two different layouts. Edition of the templates is modeled by random edit operations. They tend to bring trees from different templates closer.
\end{remark}

\subsection{Theoretical guarantees on the subtree kernel}

\cite{DBLP:journals/ml/BalcanBS08} have introduced a theory that describes the effectiveness of a given kernel in terms of similarity-based properties. A similarity function over $\mathcal{X}$ is a pairwise function $K:\mathcal{X}^2\to[-1,1]$ \citep[Definition 1]{DBLP:journals/ml/BalcanBS08}. It is said $(\epsilon,\gamma)$-strongly good \citep[Definition 4]{DBLP:journals/ml/BalcanBS08} if, with probability at most $1-\epsilon$,
$$ \mathbb{E}_{x',y}[ K(x,x') - K(x,y) ] \geq \gamma,$$
where $\text{label}(x) = \text{label}(x')\neq \text{label}(y)$. From this definition, the authors derive the following simple classifier: the class of a new data $x$ is predicted by $1$ if $x$ is more similar on average to points in class $1$ than to points in class $0$, and $0$ otherwise. In addition, they prove \citep[Theorem 1]{DBLP:journals/ml/BalcanBS08} that a well-balanced training data set of size $32/\gamma^2\log(2/\delta)$ is sufficient so that, with probability at least $1-\delta$, the above algorithm applied to an $(\epsilon,\gamma)$-strongly good similarity function produces a classifier with error at most $\epsilon+\delta$.

We aim to prove comparable results for the subtree kernel that is not a similarity function. To this end, we focus for $i\in\{0,1\}$ on
\begin{equation}\label{eq:delta:def}
\Delta_x^i = \mathbb{E}_{u,v}[K(T_i^x,T_i^u) - K(T_i^x,T_{1-i}^v)]. 
\end{equation}

We emphasize that the two following results (Proposition~\ref{prop:ker:sep} and Corollary~\ref{cor:ker:size}) assume that the weight of leaves $\omega_\bullet$ is $0$. For the sake of readability, we introduce the following notations, for any $0\leq h\leq H$ and $i\in\{0,1\}$,

\begin{eqnarray*}
K_{i,h} &=& \max_{\{u\in T_i\,:\,\height(u) = h\}} K(T_i[u],T_i[u]),\\
C_{i,h} &=& \frac{K(T_i,T_i) - K_{i,h}}{\#\leaves(T_i)},\\
G_\rho(h) &=& 1 -\sum_{k=h+1}^H P_\rho(k). 
\end{eqnarray*}

The following results are expressed in terms of a parameter $0\leq h<H$. The statement is then true with probability $G_\rho(h)$. This is equivalent to state a result that is true with probability $1-\epsilon$, for any $\epsilon>0$.

\begin{proposition}
\label{prop:ker:sep}
If $w_{T_i}>0$ then $\Delta_x^i = 0$ if and only if $x=\roottree(T_i)$. In addition, if $\rho>H/2$, for any $0\leq h<H$, with probability $G_\rho(h)$, one has
\begin{equation}\label{eq:delta:min} \Delta_x^i \geq P_\rho(0) C_{i,h}.\end{equation}
\end{proposition}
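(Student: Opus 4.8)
The plan is to expand $\Delta_x^i$ using the definition \eqref{eq:delta:def} together with the subtree-formula \eqref{eq:K:subtrees} for $K$, exploiting the structure of the edited trees $T_i^x$, $T_i^u$ and $T_{1-i}^v$. First I would settle the ``if and only if'' claim: when $x = \roottree(T_i)$, the edit $T_i^x$ replaces the whole tree by $\tau_{\height(x)} = \tau_H$, so $T_i^x$ carries no subtree of $T_i$ except leaves (by the assumption that $\tau_h$ is never a subtree of $T_0$ or $T_1$), and by conditions 1--3 the only subtrees it shares with $T_i^u$ or $T_{1-i}^v$ are the $\tau$'s and the leaves; since $\omega_\bullet = 0$ and the $\tau$-contributions are symmetric in $u$ and $v$ (both edited trees on either side contain a copy of some $\tau_h$, and in expectation these contributions match because the edit distribution $P_\rho$ is the same for both classes), the two expectations cancel and $\Delta_x^i = 0$. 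Conversely, if $x \neq \roottree(T_i)$, then $T_i^x$ retains the subtree $T_i[\roottree(T_i)]$-minus-the-$x$-branch structure at the root — in particular $T_i[v]$ for every ancestor $v$ of $x$ survives in $T_i^x$ but (by condition 3) is not a subtree of $T_{1-i}^v$; these surviving non-leaf subtrees have strictly positive weight (in particular $w_{T_i} > 0$ handles the root itself), so $K(T_i^x, T_i^u)$ picks up positive terms absent from $K(T_i^x, T_{1-i}^v)$, forcing $\Delta_x^i > 0$.

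For the quantitative lower bound \eqref{eq:delta:min}, I would condition on the heights $h_u$ and $h_v$ of the vertices edited in $T_i^u$ and $T_{1-i}^v$. The key observation is that $K(T_i^x, T_{1-i}^v)$ contributes nothing beyond leaves: by condition 3, once one subtree of $T_{1-i}$ is replaced by a $\tau$, no non-leaf subtree of $T_{1-i}^v$ is isomorphic to a non-leaf subtree of $T_i^x$ except possibly a $\tau_h$ itself — but $T_i^x$ contains exactly one $\tau$, namely $\tau_{\height(x)}$, and $T_{1-i}^v$ contains $\tau_{\height(v)}$; I would argue that in the favorable event these do not coincide, or if they do their contribution is dominated. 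So on the good event the cross term $K(T_i^x, T_{1-i}^v)$ is (at most) $w_\bullet \cdot(\cdots) = 0$. Meanwhile $K(T_i^x, T_i^u)$ is at least the contribution of the subtrees of $T_i$ that survive \emph{both} edits, i.e.\ the subtrees of $T_i$ rooted at vertices that are neither $x$ nor $u$ nor their descendants; quantifying this surviving mass is exactly what $C_{i,h}$ measures — it is $K(T_i,T_i)$ minus the largest possible chunk $K_{i,h}$ removed by an edit at height $\le h$, normalized by $\#\leaves(T_i)$, the multiplicity of the leaf-class. The event ``both edit heights are $\le h$'' is what carries probability $G_\rho(h)$ (a union bound: $h_u \le h$ and $h_v \le h$, each failing with probability $\sum_{k=h+1}^H P_\rho(k)$, though one must be a little careful — $G_\rho(h) = 1 - \sum_{k=h+1}^H P_\rho(k)$ is a single tail, so the argument must be that only \emph{one} of the two edits needs to be controlled, presumably the one that matters for the bound). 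The extra factor $P_\rho(0)$ comes from further restricting to the sub-event where the edit in $T_i^u$ is at a leaf ($h_u = 0$), so that $T_i^u$ still contains \emph{all} non-leaf subtrees of $T_i$ and $K(T_i^x, T_i^u) \ge K(T_i, T_i) - K_{i,\height(x)} \ge$ the relevant quantity; the hypothesis $\rho > H/2$ should be what guarantees $P_\rho$ puts enough mass appropriately (or rather, is used elsewhere — perhaps to ensure the bound is non-vacuous, i.e.\ $C_{i,h} \ge 0$, or to compare $P_\rho(0)$ against tail terms).

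I would organize the write-up as: (i) expand both kernels via \eqref{eq:K:subtrees}, splitting subtrees into ``$\tau$-type'', ``leaf-type'', and ``original-$T_i$-type''; (ii) use conditions 1--3 to kill all cross-isomorphisms in $K(T_i^x, T_{1-i}^v)$ and in the $\tau$-contributions, reducing to the original-$T_i$-type terms; (iii) take expectations over $u$, and lower-bound by restricting to $\{h_u = 0\}$ (probability $P_\rho(0)$) and to $\{h_v \le h\}$ or the symmetric good event (total probability $G_\rho(h)$); (iv) identify the surviving sum with $K(T_i,T_i) - K_{i,\height(x)}$ and divide by $\#\leaves(T_i)$ — but here I must account for leaf multiplicity carefully, since $\num_\bullet(T_i^u)$ and $\num_\bullet(T_i^x)$ differ from $\num_\bullet(T_i)$ by a bounded amount, and the normalization by $\#\leaves(T_i)$ in $C_{i,h}$ is exactly engineered to absorb this. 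The main obstacle I anticipate is step (iv): tracking how the leaf counts and the counts $\num_\tau$ change under the edits (the edit both removes a subtree and inserts $\tau_h$, perturbing leaf multiplicities in both the removed and inserted parts), and verifying that these perturbations are dominated by the $C_{i,h}$ term rather than swamping the bound — this is where the precise definition $C_{i,h} = (K(T_i,T_i) - K_{i,h})/\#\leaves(T_i)$, with the $K_{i,h}$ chosen as a \emph{maximum} over height-$h$ vertices, does the work, and getting the inequality to go the right way will require care about which vertex achieves the max versus which vertex $x$ actually is.
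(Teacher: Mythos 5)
Your overall decomposition (split subtrees into $\tau$-type, leaf-type and original-$T_i$-type, and use conditions 1--3 to kill cross-isomorphisms) is the right starting point, but two steps as sketched would fail. The most serious is the cross term: $K(T_i^x,T_{1-i}^v)$ is \emph{not} zero and is not negligible on any ``good event''. By condition 3 it equals exactly $K(\tau_{\height(x)},\tau_{\height(v)})$, which is strictly positive (for instance whenever $\height(v)=\height(x)$ the two inserted trees coincide). The paper's mechanism is not to make this term vanish but to cancel it exactly in expectation: the same quantity $K(\tau_{\height(x)},\tau_{\height(u)})$ appears as an additive term in $K(T_i^x,T_i^u)$, and since $\height(u)$ and $\height(v)$ have the same law $P_\rho$, one gets $\mathbb{E}_u[K(\tau_{\height(x)},\tau_{\height(u)})]=\mathbb{E}_v[K(\tau_{\height(x)},\tau_{\height(v)})]$, leaving $\Delta_x^i=K(T_i,T_i)-\mathbb{E}_u\left[\sum_{z\in\mathcal{B}_{x,u}}w_{T_i[z]}\right]$ with $\mathcal{B}_{x,u}$ as in Lemma~\ref{lem:formula:delta}. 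Without this cancellation the lower bound cannot be obtained: restricting the $v$-expectation to a favorable sub-event shrinks the \emph{subtracted} term only on that event, so the inequality goes the wrong way on its complement.

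The second gap is where $P_\rho(0)$, $\#\leaves(T_i)$ and $\rho>H/2$ actually enter. In the paper the dominant contribution is the diagonal event $u=x$, which occurs with probability $P_\rho(\height(x))/c^i_{\height(x)}$, where $c^i_h$ is the number of vertices of height $h$ in $T_i$; on that event only the weights of $x$ and its descendants are removed, which sum exactly to $K(T_i[x],T_i[x])$, and one concludes with $P_\rho(0)\le P_\rho(\height(x))$ (this is precisely where $\rho>H/2$ is used, so that the binomial pmf is minimized at $0$) and $c^i_{\height(x)}\le\#\leaves(T_i)$. Your alternative route, restricting to $h_u=0$ so that $T_i^u=T_i$, removes the weights of the whole family $\fami(x)$, ancestors included, which exceeds $K(T_i[x],T_i[x])$ and does not reduce to $C_{i,h}$. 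Two smaller points: the probability $G_\rho(h)$ in the statement is over the random height of $x$ (the test point's own edit location), not over $u$ or $v$; and in your converse argument for the first claim, $T_i[v]$ does \emph{not} survive in $T_i^x$ for an ancestor $v$ of $x$ --- the surviving subtrees are those rooted outside $\fami(x)$, and positivity for $x\neq\roottree(T_i)$ comes from $K(T_i,T_i)-K(T_i[x],T_i[x])\geq w_{T_i}>0$.
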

\begin{proof} The proof lies in Appendix~\ref{app:proof:ker:sep}.\end{proof}

This result shows that the two classes can be well-separated by the subtree kernel. The only data that can not be separated are the trees completely edited. In addition, the lower-bound in \eqref{eq:delta:min} is of order $H\exp(-\rho)$ (up to a multiplicative constant).

\begin{corollary}
\label{cor:ker:size}
For any $0\leq h\leq H$, a well-balanced training data set of size
$$\frac{2\max_i K(T_i,T_i)^2}{\min_i C_{i,h}^2}\frac{\exp(2\rho)}{H^2} \log\left(\frac{2}{\delta}\right)$$
is sufficient so that, with probability at least $1-\delta$, the aforementioned classification algorithm produces a classifier with error at most $1-G_\rho(h)+\delta$.
\end{corollary}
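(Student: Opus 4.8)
The plan is to fit the subtree kernel into the similarity-based learning framework of \cite{DBLP:journals/ml/BalcanBS08} recalled above and then apply their Theorem~1. That framework works with similarity functions valued in $[-1,1]$, so the first step is to normalise. Because $K$ is positive semidefinite, Cauchy--Schwarz gives $|K(S,T)|\le\sqrt{K(S,S)K(T,T)}$ for all trees $S,T$; hence, provided every tree $T_i^u$ appearing in the data set satisfies $K(T_i^u,T_i^u)\le M:=\max_i K(T_i,T_i)$, the normalised kernel $\tilde K:=K/M$ takes values in $[-1,1]$ on the data set, i.e. it is a similarity function in the sense of \citep[Definition~1]{DBLP:journals/ml/BalcanBS08}. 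Establishing this diagonal bound is the place where conditions~1--3 on $T_0$, $T_1$, $(\tau_h)$ and the choices $\kappa(n,m)=nm$, $\omega_\bullet=0$ must be used: an edit $T_i\mapsto T_i^u$ only rewrites the subtrees on the branch from $\roottree(T_i)$ to $u$ and grafts the fresh tree $\tau_{\height(u)}$, so one has to verify that this keeps the multiplicities $\num_\tau(\cdot)$ in the diagonal form \eqref{eq:K:subtrees} of the kernel under control.

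Second I would read the strong-goodness parameters off Proposition~\ref{prop:ker:sep}. With $M$ as above, the quantity $\Delta_x^i$ of \eqref{eq:delta:def} equals $M\,\mathbb{E}_{x',y}[\tilde K(x,x')-\tilde K(x,y)]$, once one identifies $x$ with the class-$i$ test tree $T_i^x$, $x'$ with a class-$i$ training tree and $y$ with a class-$(1-i)$ one, so that $\text{label}(x)=\text{label}(x')\ne\text{label}(y)$. By Proposition~\ref{prop:ker:sep} (with $\omega_\bullet=0$), on the event that the edited vertex has height at most $h$ — an event of probability $G_\rho(h)$ — one has $\Delta_x^i\ge P_\rho(0)C_{i,h}$, so that
\[
\mathbb{E}_{x',y}[\tilde K(x,x')-\tilde K(x,y)]\;\ge\;\frac{P_\rho(0)\,C_{i,h}}{M}\;\ge\;\gamma,\qquad \gamma:=\frac{P_\rho(0)\,\min_i C_{i,h}}{\max_i K(T_i,T_i)}.
\]
Taking the worse of the two classes, $\tilde K$ is $(\epsilon,\gamma)$-strongly good in the sense of \citep[Definition~4]{DBLP:journals/ml/BalcanBS08} with $\epsilon=1-G_\rho(h)$.

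Third, \citep[Theorem~1]{DBLP:journals/ml/BalcanBS08} gives that a well-balanced training set of size $\frac{32}{\gamma^2}\log(2/\delta)$ is sufficient so that, with probability at least $1-\delta$, the nearest-average-similarity classifier of \cite{DBLP:journals/ml/BalcanBS08} — which is precisely the rule recalled just before Proposition~\ref{prop:ker:sep} and is invariant under the rescaling $K\mapsto\tilde K$ — has error at most $\epsilon+\delta=1-G_\rho(h)+\delta$. It then remains to bring $\frac{32}{\gamma^2}=\frac{32\max_i K(T_i,T_i)^2}{P_\rho(0)^2\min_i C_{i,h}^2}$ into the stated form; here I would invoke the elementary estimate of $P_\rho(0)=(1-\rho/H)^H$ behind the remark following Proposition~\ref{prop:ker:sep} — the lower bound in \eqref{eq:delta:min} being of order $H\exp(-\rho)$ — which produces the factor $\exp(2\rho)/H^2$ and turns the constant $32$ into $2$. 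The boundary case $h=H$ is trivial: then $C_{i,H}=0$, the stated size is infinite, and $G_\rho(H)=1$.

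The genuinely hard part will be the first step — proving that an arbitrary edit $T_i\mapsto T_i^u$ does not inflate the self-kernel, $K(T_i^u,T_i^u)\le\max_i K(T_i,T_i)$, or else pinning down the right normalising constant — since that is where the delicate combinatorial conditions~1--3 on $T_0$, $T_1$ and $(\tau_h)$ really enter. Everything after it is a mechanical substitution into the Balcan--Blum--Srebro bound together with a routine binomial estimate of $P_\rho(0)$.
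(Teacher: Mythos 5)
Your overall route is the paper's: read the margin $\gamma$ off Proposition~\ref{prop:ker:sep}, feed it into Theorem~1 of \cite{DBLP:journals/ml/BalcanBS08}, and convert $P_\rho(0)=(1-\rho/H)^H$ into the factor $\exp(2\rho)/H^2$ via the asymptotic $P_\rho(0)\sim H\exp(-\rho)$ invoked after Proposition~\ref{prop:ker:sep}. The one substantive divergence is how the two arguments cope with $K$ not being $[-1,1]$-valued. The paper does not normalise; it reruns the Hoeffding step inside the proof of the Balcan--Blum--Srebro theorem with the kernel bounded by $\max_i K(T_i,T_i)$, and this is precisely where the constant $2$ in the statement comes from (together with the factor accounting for the total data set size rather than the per-class size). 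You instead normalise $\tilde K=K/M$ and invoke the theorem as a black box, which yields $32M^2/\gamma^2$; your closing claim that the binomial estimate of $P_\rho(0)$ ``turns the constant 32 into 2'' is not a real step --- that estimate only produces the factor $\exp(2\rho)/H^2$ and cannot change a multiplicative constant. As written, your route proves the corollary only up to a constant factor ($32$ in place of $2$); to obtain the stated constant you must open up the Hoeffding bound as the paper does rather than cite the theorem verbatim.

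Concerning your ``genuinely hard part'': you are right that the reduction needs $K(T_i^u,T_i^u)\le\max_i K(T_i,T_i)$ on the whole data set (Cauchy--Schwarz then controls the off-diagonal entries), and the paper simply asserts this bound without proof. By Lemma~\ref{lem:formula:delta} with $u=v$ together with \eqref{eq:proof:sum1}, one has $K(T_i^u,T_i^u)=K(T_i,T_i)-K(T_i[u],T_i[u])+K(\tau_{\height(u)},\tau_{\height(u)})$, so the bound is equivalent to $K(\tau_{\height(u)},\tau_{\height(u)})\le K(T_i[u],T_i[u])$, which does not follow from conditions~1--3 alone and amounts to an implicit additional assumption on the sequence $(\tau_h)$. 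So you have correctly located a genuine gap, but it is one shared with (indeed inherited from) the paper's own proof, not one introduced by your normalisation.
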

\begin{proof} The proof is based on the demonstration of \cite[Theorem 1]{DBLP:journals/ml/BalcanBS08}. However, in our setting, the kernel $K$ is bounded by $\max_i K(T_i,T_i)$ and not by $1$. Consequently, by Hoeffding bounds, the sufficient size of the training data set if of order
\begin{equation}
\label{eq:data setsize}
2 \log\left(\frac{2}{\delta}\right)\frac{\max_iK(T_i,T_i)^2}{\gamma^2},
\end{equation}
where $\gamma$ can be read in Proposition~\ref{prop:ker:sep}, $\gamma = P_\rho(0) C_{i,h} \geq P_\rho(0) \min_i C_{i,h}$. The coefficient $2$ lies because we consider here the total size of the data set and not only the number of examples of each class. Together with $P_\rho(0)\sim H\exp(-\rho)$, we obtain the expected result. \end{proof}

\subsection{Weight of leaves}
\label{ss:weight:leaves}

Here $K^+$ is the subtree kernel obtained from the weights used in the computation of $K$ together with a positive weight on leaves, $w_\bullet>0$. We aim to show that $K^+$ separates the two classes less than $K$. $\Delta_x^{+,i}$ denotes the conditional expectation \eqref{eq:delta:def} computed from $K^+$.

\begin{proposition}
\label{prop:deltaplus}
For any $x\in T_i$,
$$
\Delta_x^{+,i} = \Delta_x^i + w_\bullet \#\leaves(T_i[x]) D_{i,1-i},
$$
where $D_{i,1-i} = \mathbb{E}_{u,v}[\#\leaves(T_i^u) - \#\leaves(T_{1-i}^v)]$.
\end{proposition}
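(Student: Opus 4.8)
The statement is a distributivity identity, so the plan is simply to make that precise in three short steps. First I isolate the effect of the leaf weight. The kernel $K^{+}$ uses exactly the weights of $K$ except that the one-vertex tree $\bullet$ now carries weight $w_{\bullet}>0$ instead of $0$. Since $\num_{\bullet}(T)$, the number of subtrees of $T$ isomorphic to $\bullet$, equals $\#\leaves(T)$, and since $\kappa(n,m)=nm$, equation \eqref{eq:defK} gives, for every pair of $\ast$-trees $A$ and $B$,
$$
K^{+}(A,B)=K(A,B)+w_{\bullet}\,\#\leaves(A)\,\#\leaves(B),
$$
because every subtree other than $\bullet$ contributes the same term to $K$ and to $K^{+}$. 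This identity is the only nontrivial ingredient.

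Second I substitute it into the definition of $\Delta_{x}^{+,i}$, which is \eqref{eq:delta:def} read with $K^{+}$ in place of $K$, namely $\Delta_{x}^{+,i}=\mathbb{E}_{u,v}\bigl[K^{+}(T_{i}^{x},T_{i}^{u})-K^{+}(T_{i}^{x},T_{1-i}^{v})\bigr]$. Expanding each $K^{+}$ via the first step, the two $K$-terms recombine into $\mathbb{E}_{u,v}[K(T_{i}^{x},T_{i}^{u})-K(T_{i}^{x},T_{1-i}^{v})]=\Delta_{x}^{i}$, and the leftover equals
$$
w_{\bullet}\,\mathbb{E}_{u,v}\!\left[\#\leaves(T_{i}^{x})\bigl(\#\leaves(T_{i}^{u})-\#\leaves(T_{1-i}^{v})\bigr)\right].
$$

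Third I use that in the stochastic model the expectation is taken only over the edited vertices $u\in T_{i}$ and $v\in T_{1-i}$, whereas $x$ — hence the fixed first argument and its number of leaves — does not depend on those variables. Therefore $\#\leaves(T_{i}^{x})$ pulls out of the expectation, leaving $w_{\bullet}\,\#\leaves(T_{i}^{x})\,\mathbb{E}_{u,v}[\#\leaves(T_{i}^{u})-\#\leaves(T_{1-i}^{v})]=w_{\bullet}\,\#\leaves(T_{i}^{x})\,D_{i,1-i}$, which is the asserted correction term. I do not expect any genuine difficulty: apart from the decomposition $K^{+}=K+w_{\bullet}\,\#\leaves(\cdot)\,\#\leaves(\cdot)$ of the first step, which is the one point worth spelling out, the argument is only linearity of expectation together with the observation that the first kernel argument is constant with respect to $(u,v)$.
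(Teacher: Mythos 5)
Your proposal is correct and takes essentially the same route as the paper: the decomposition $K^{+}(T_1,T_2)=K(T_1,T_2)+w_\bullet\#\leaves(T_1)\#\leaves(T_2)$ read off from \eqref{eq:K:subtrees}, substitution into \eqref{eq:delta:def}, and pulling the constant $\#\leaves(T_i^x)$ out of the expectation are exactly the steps of the paper's proof. Note only that both your derivation and the paper's own proof produce the prefactor $\#\leaves(T_i^x)$ rather than the $\#\leaves(T_i[x])$ appearing in the statement, a discrepancy that is already present in the paper and is not introduced by your argument.
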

\begin{proof} We have the following decomposition, for any trees $T_1$ and $T_2$,
$$K^+(T_1,T_2) = K(T_1,T_2) + w_\bullet\#\leaves(T_1)\#\leaves(T_2),$$
in light of the formula \eqref{eq:K:subtrees} of $K$. Thus, with \eqref{eq:delta:def},
\begin{eqnarray*}
\Delta_x^{+,i} &=& \mathbb{E}_{u,v}\left[K(T_i^x,T_i^u)+w_\bullet\#\leaves(T_i^x)\#\leaves(T_i^u) - K(T_i^x,T_{1-i}^v)-w_\bullet\#\leaves(T_i^x)\#\leaves(T_{1-i}^v)\right] \\
&=& \Delta_x^i + \mathbb{E}_{u,v}\left[w_\bullet \#\leaves(T_i^x)(\#\leaves(T_i^u)-\#\leaves(T_i^v))\right],
\end{eqnarray*}
which ends the proof.
\end{proof}

The sufficient number of data provided in Corollary~\ref{cor:ker:size} is obtained \eqref{eq:data setsize} through the square ratio of $\max_i K(T_i,T_i)$ over $\min_i \Delta_x^i$. First, it should be noticed that $K^+(T_i,T_i)>K(T_i,T_i)$. In addition, by virtue of Proposition~\ref{prop:deltaplus}, either $\Delta_x^{+,0}\leq \Delta_x^0$ or $\Delta_x^{+,1}\leq \Delta_x^1$ (and the inequality is strict if trees of classes $0$ and $1$ have not the same number of leaves on average). Consequently,
$$\min_i \Delta_x^{+,i} \leq \min_i \Delta_x^i,$$
and thus the sufficient number of data mentioned above is minimum for $\omega_\bullet = 0$.

\begin{remark}\label{rem:link:model:discr}
The results stated in this section establish that the subtree kernel is more efficient when the weight of leaves is $0$. It should be placed in perspective with the exponential weighting scheme of the literature \citep{aiolli2006fast,collins2002convolution,da2009kernel,kimura2011subpath,vishwanathan2002fast} for which the weight of leaves is maximal. We conjecture that the accuracy of the subtree kernel should be in general improved by imposing a null weight to any subtree present in two different classes. This can not be established from the model for which the only such subtrees are the leaves. Relying on this, one of the objectives of the sequel of the paper is to develop a learning method for the weight function that improves in practice the classification results (see Sections~\ref{sec:weight} and \ref{sec:data}).
\end{remark}

%%%%%%%%%%%%%%%%%%%%%%%%%%%%%%%%%%%%%%%%%%%%%%%%%%
%%%%%%%%%%%%%%%%%%%%%%%%%%%%%%%%%%%%%%%%%%%%%%%%%%
%%%%%%%%%%%%%%%%%%%%%%%%%%%%%%%%%%%%%%%%%%%%%%%%%%
%%%%%%%%%%%%%%%%%%%%%%%%%%%%%%%%%%%%%%%%%%%%%%%%%%

\section{DAG computation of the subtree kernel} \label{sec:dag}

In this section, we define DAG reduction, an algorithm that achieves both compression of data and enumeration of all subtrees of a tree without redundancies. DAG reduction of a tree is presented in Subsection~\ref{ss:dag:tree}, while Subsection~\ref{ss:dag:forest} is devoted to the compression of a forest. In Subsection~\ref{ss:dag:data set}, we state that the subtree kernel can be computed from the DAG reduction of data set of trees.

\subsection{DAG reduction of a tree} \label{ss:dag:tree}

Trees can present internal repetitions in their structure. Eliminating these structural redundancies defines a reduction of the initial data that can result in a Directed Acyclic Graph (DAG).
In particular, beginning with \cite{Sutherland:1963:SMG:1461551.1461591}, DAG representations of trees are also much used in computer graphics
where the process of condensing a tree into a graph is called object instancing \citep{Hart:1991:EAR:127719.122728}. DAG reduction can be computed upon unordered or ordered trees. We begin with the case of unordered trees.

\paragraph{Unordered trees} We consider the equivalence relation ``existence of an unordered tree isomorphism'' on the set of the
subtrees of a tree $T$: $Q(T)=(V,E)$ denotes the quotient graph obtained from $T$ using this
equivalence relation. $V$ is the set of equivalence classes on the subtrees of $T$, while $E$ is a set of
pairs of equivalence classes $(C_1,C_2)$ such that $\roottree(C_2)\in\child(\roottree(C_1))$ up to an isomorphism.
The graph $Q(T)$ is a DAG \citep[Proposition 1]{GODI} that is a connected directed graph without path from any vertex $v$ to itself.
Let $(C_1,C_2)$ be an edge of the DAG $Q(T)$. We define $L(C_1,C_2)$ as the number of occurrences of a
tree of $C_2$ just below the root of any tree of $C_1$. The tree reduction of $T$ is defined as the quotient graph $Q(T)$ augmented with labels $L(C_1,C_2)$
on its edges. We refer to Fig.\,\ref{fig:dag:reduction:unordered} for an example of DAG reduction of an unordered tree. Two different algorithms that allow the computation of the DAG reduction of an unordered tree but that share the same time-complexity in $\O(\#T^2\degr(T)\log(\degr(T)))$ are presented by \cite{GODI}.

\paragraph{Ordered trees} In the case of ordered trees, it is required to preserve the order of the children in the DAG reduction. As for unordered trees, we consider the quotient graph $Q(T)=(V,E)$ obtained from $T$ using the equivalence relation between ordered trees. $V$ is the set of equivalence classes on the subtrees of $T$. Here, the edges of the graph are ordered as follows. $(C_1,C_2)$ is the $r^\text{th}$ edge between $C_1$ and $C_2$ if $\roottree(C_2)$ is the $r^\text{th}$ child of $\roottree(C_1)$ up to an isomorphism. We obtain a DAG with ordered edges that compresses the initial tree $T$. An example of DAG reduction of an ordered tree is presented in Fig.\,\ref{fig:dag:reduction:ordered}. Polynomial algorithms have been developed to allow the computation of a DAG, with complexities ranging in $\O(\#T^2)$ to $\O(\#T)$ for ordered trees \citep{Downey:1980:VCS:322217.322228}.

\begin{figure}[h]
\centering
\begin{subfigure}[t]{0.49\textwidth}
\def\xscale{0.7}
\def\yscale{0.7}
\def\nodescale{0.7}
\begin{tikzpicture}[xscale=\xscale,yscale=\yscale]

\usetikzlibrary{decorations.markings}
\tikzset{->-/.style={decoration={
  markings,
  mark=at position .5 with {\arrow{>}}},postaction={decorate}}}

\tikzstyle{fleche}=[-,>=latex]
\tikzstyle{noeudcyan}=[draw,circle,fill=cyan,scale=\nodescale]
\tikzstyle{noeudyellow}=[draw,circle,fill=yellow,scale=\nodescale]
\tikzstyle{noeudmagenta}=[draw,circle,fill=magenta,scale=\nodescale]
\tikzstyle{noeudblue}=[draw,circle,fill=blue,scale=\nodescale]
\tikzstyle{noeudorange}=[draw,circle,fill=orange,scale=\nodescale]
\def\localnodescalea{1}
\node[noeudorange,scale=\localnodescalea] (4829600456) at ({0},{0}) {};
\node[noeudyellow,scale=\localnodescalea] (4819830824) at ({-1.5},{-1.0}) {};
\node[noeudmagenta,scale=\localnodescalea] (4701563536) at ({-2.5},{-2}) {};
\node[noeudcyan,scale=\localnodescalea] (4827613056) at ({-3.0},{-3}) {};
\node[noeudblue,scale=\localnodescalea] (4827655696) at ({-3.0},{-4}) {};
\node[noeudblue,scale=\localnodescalea] (4827720728) at ({-2.0},{-3}) {};
\node[noeudmagenta,scale=\localnodescalea] (4820693120) at ({-0.5},{-2}) {};
\node[noeudcyan,scale=\localnodescalea] (4820693512) at ({-1.0},{-3}) {};
\node[noeudblue,scale=\localnodescalea] (4820694240) at ({-1.0},{-4}) {};
\node[noeudblue,scale=\localnodescalea] (4820693904) at ({0.0},{-3}) {};
\node[noeudblue,scale=\localnodescalea] (4831129440) at ({1.0},{-1.0}) {};
\node[noeudmagenta,scale=\localnodescalea] (4831522376) at ({2.5},{-1.0}) {};
\node[noeudblue,scale=\localnodescalea] (4831529560) at ({2.0},{-2}) {};
\node[noeudcyan,scale=\localnodescalea] (4831169840) at ({3.0},{-2}) {};
\node[noeudblue,scale=\localnodescalea] (4831144032) at ({3.0},{-3}) {};
\draw[fleche] (4829600456)--(4819830824) {};
\draw[fleche] (4829600456)--(4831129440) {};
\draw[fleche] (4829600456)--(4831522376) {};
\draw[fleche] (4819830824)--(4701563536) {};
\draw[fleche] (4819830824)--(4820693120) {};
\draw[fleche] (4701563536)--(4827613056) {};
\draw[fleche] (4701563536)--(4827720728) {};
\draw[fleche] (4827613056)--(4827655696) {};
\draw[fleche] (4820693120)--(4820693512) {};
\draw[fleche] (4820693120)--(4820693904) {};
\draw[fleche] (4820693512)--(4820694240) {};
\draw[fleche] (4831522376)--(4831529560) {};
\draw[fleche] (4831522376)--(4831169840) {};
\draw[fleche] (4831169840)--(4831144032) {};

\def\xshift{6}
\def\yshift{-4}

\tikzstyle{arc}=[->-,>=latex]

\tikzstyle{fleche}=[-,>=latex]
\tikzstyle{noeud}=[draw,circle,fill=blue,scale=\nodescale*1]
\node[noeud] (0) at ({-0.5+\xshift},{0+\yshift}) {};
\tikzstyle{noeud}=[draw,circle,fill=cyan,scale=\nodescale*1]
\node[noeud] (1) at ({-0.5+\xshift},{1+\yshift}) {};
\tikzstyle{noeud}=[draw,circle,fill=magenta,scale=\nodescale*1]
\node[noeud] (2) at ({-0.5+\xshift},{2+\yshift}) {};
\tikzstyle{noeud}=[draw,circle,fill=yellow,scale=\nodescale*1]
\node[noeud] (3) at ({-0.5+\xshift},{3+\yshift}) {};
\tikzstyle{noeud}=[draw,circle,fill=orange,scale=\nodescale*1]
\node[noeud] (4) at ({-0.5+\xshift},{4+\yshift}) {};
\draw[arc] (1)--(0) {};
\draw[arc] (2) to [bend left=45](0) {};
\draw[arc] (2)--(1) {};
\draw[arc] (3)--(2) {} node [right,midway] {2};
\draw[arc] (4) to [bend left=45](0) {};
\draw[arc] (4) to [bend right=45](2) {};
\draw[arc] (4)--(3) {};
\end{tikzpicture}
\caption{Unordered\label{fig:dag:reduction:unordered}}
\end{subfigure}
\begin{subfigure}[t]{0.49\textwidth}
\def\xscale{0.7}
\def\yscale{0.7}
\def\nodescale{0.7}
\begin{tikzpicture}[xscale=\xscale,yscale=\yscale]

\usetikzlibrary{decorations.markings}
\tikzset{->-/.style={decoration={
  markings,
  mark=at position .5 with {\arrow{>}}},postaction={decorate}}}

\def\xshift{6}
\def\yshift{-4}

\tikzstyle{fleche}=[-,>=latex]
\tikzstyle{noeudcyan}=[draw,circle,fill=cyan,scale=\nodescale]
\tikzstyle{noeudyellow}=[draw,circle,fill=yellow,scale=\nodescale]
\tikzstyle{noeudmagenta}=[draw,circle,fill=magenta,scale=\nodescale]
\tikzstyle{noeudblue}=[draw,circle,fill=blue,scale=\nodescale]
\tikzstyle{noeudorange}=[draw,circle,fill=violet,scale=\nodescale]
\tikzstyle{noeudbrown}=[draw,circle,fill=brown,scale=\nodescale]
\def\localnodescalea{1}
\node[noeudbrown,scale=\localnodescalea] (4829600456) at ({0},{0}) {};
\node[noeudyellow,scale=\localnodescalea] (4819830824) at ({-1.5},{-1.0}) {};
\node[noeudmagenta,scale=\localnodescalea] (4701563536) at ({-2.5},{-2}) {};
\node[noeudcyan,scale=\localnodescalea] (4827613056) at ({-3.0},{-3}) {};
\node[noeudblue,scale=\localnodescalea] (4827655696) at ({-3.0},{-4}) {};
\node[noeudblue,scale=\localnodescalea] (4827720728) at ({-2.0},{-3}) {};
\node[noeudmagenta,scale=\localnodescalea] (4820693120) at ({-0.5},{-2}) {};
\node[noeudcyan,scale=\localnodescalea] (4820693512) at ({-1.0},{-3}) {};
\node[noeudblue,scale=\localnodescalea] (4820694240) at ({-1.0},{-4}) {};
\node[noeudblue,scale=\localnodescalea] (4820693904) at ({0.0},{-3}) {};
\node[noeudblue,scale=\localnodescalea] (4831129440) at ({1.0},{-1.0}) {};
\node[noeudorange,scale=\localnodescalea] (4831522376) at ({2.5},{-1.0}) {};
\node[noeudblue,scale=\localnodescalea] (4831529560) at ({2.0},{-2}) {};
\node[noeudcyan,scale=\localnodescalea] (4831169840) at ({3.0},{-2}) {};
\node[noeudblue,scale=\localnodescalea] (4831144032) at ({3.0},{-3}) {};
\draw[fleche] (4829600456)--(4819830824) {};
\draw[fleche] (4829600456)--(4831129440) {};
\draw[fleche] (4829600456)--(4831522376) {};
\draw[fleche] (4819830824)--(4701563536) {};
\draw[fleche] (4819830824)--(4820693120) {};
\draw[fleche] (4701563536)--(4827613056) {};
\draw[fleche] (4701563536)--(4827720728) {};
\draw[fleche] (4827613056)--(4827655696) {};
\draw[fleche] (4820693120)--(4820693512) {};
\draw[fleche] (4820693120)--(4820693904) {};
\draw[fleche] (4820693512)--(4820694240) {};
\draw[fleche] (4831522376)--(4831529560) {};
\draw[fleche] (4831522376)--(4831169840) {};
\draw[fleche] (4831169840)--(4831144032) {};

\tikzstyle{arc}=[->-,>=latex]
\tikzstyle{noeud}=[draw,circle,fill=blue,scale=\nodescale*1]
\node[noeud] (0) at ({-0.5+\xshift},{0+\yshift}) {};
\tikzstyle{noeud}=[draw,circle,fill=cyan,scale=\nodescale*1]
\node[noeud] (1) at ({-0.5+\xshift},{1+\yshift}) {};
\tikzstyle{noeud}=[draw,circle,fill=magenta,scale=\nodescale*1]
\node[noeud] (2) at ({-1.5+\xshift},{2+\yshift}) {};
\tikzstyle{noeud}=[draw,circle,fill=violet,scale=\nodescale*1]
\node[noeud] (3) at ({0.5+\xshift},{2+\yshift}) {};
\tikzstyle{noeud}=[draw,circle,fill=yellow,scale=\nodescale*1]
\node[noeud] (4) at ({-1.5+\xshift},{3+\yshift}) {};
\tikzstyle{noeud}=[draw,circle,fill=brown,scale=\nodescale*1]
\node[noeud] (5) at ({-0.5+\xshift},{4+\yshift}) {};
\draw[arc] (1)--(0) {};
\draw[arc] (2) to [bend right=90](1) {};
\draw[arc] (2)--(0) {};
\draw[arc] (3)--(0) {};
\draw[arc] (3) to [bend left=90](1) {};
\draw[arc] (4) to [bend right=45](2) {};
\draw[arc] (4) to [bend left=45](2) {};
\draw[arc] (5)--(4) {} ;
\draw[arc] (5) to [in=0, out=-90] (0) {} ;
\draw[arc] (5)--(3) {} ;
\end{tikzpicture}
\caption{Ordered\label{fig:dag:reduction:ordered}}
\end{subfigure}
\caption{A tree (left) and its DAG reduction (right) seen (\subref{fig:dag:reduction:unordered}) as an unordered tree and (\subref{fig:dag:reduction:ordered}) as an ordered tree. In each figure, roots of isomorphic subtrees are displayed with the same color, which is reproduced on the corresponding vertex of the DAG. Note that the subtree on the left is colored differently in the two cases, whether the order of its children is relevant or not. If no label is specified on an edge (in the unordered case), it is equal to $1$. }
\label{fig:dag:reduction:both}
\end{figure}
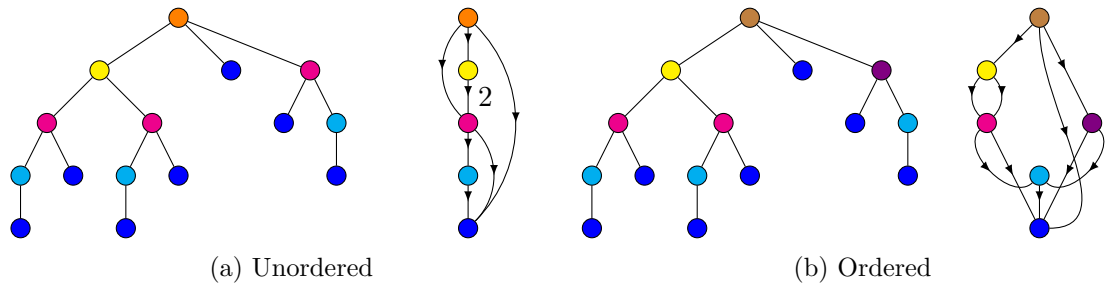

In this paper, $\red(T)$ denotes the DAG reduction of $T$ as $\ast$-tree, $\ast\in\{\text{ordered},\text{unordered}\}$. It is crucial to notice that the function $\red$ is a one-to-one correspondence, which means that DAG reduction is a lossless compression algorithm. In other words, $T$ can be reconstructed from $\red(T)$ and $(\red)^{-1}$ stands for the inverse function.

The DAG structure inherits of some properties of trees. For a vertex $\nu$ in a DAG $D$, we will denote by $\child(\nu)$ ($\parents(\nu)$, respectively) the set of children (parents, respectively) of $\nu$. $\height(\nu)$ and $\deg(\nu)$ are inherited as well. Similarly to trees, we denote by $D[\nu]$ the subDAG rooted in $\nu$ composed of $\nu$ and all its descendants in $D$.

\subsection{DAG reduction of a forest} \label{ss:dag:forest}

Let $\stree$ be the super-tree obtained from a forest of $\ast$-trees $\mathcal{F_T}=(T_1,\dots,T_N)$ by placing in this order each $T_i$ as a subtree of an artificial root. We define the DAG reduction of the forest $\mathcal{F_T}$ as $\red(\mathcal{F_T}) = \red(\stree)$.

However, if the forest $\mathcal{F_T}$ is stocked as a forest of compressed DAGs, that is, $\mathcal{F_D}=(D_1,\dots,D_N)$ (with $D_i= \red(T_i)$), it would be superfluous to decompress all trees before reducing the super-tree. So, one would rather compute $\red(\mathcal{F_T})$ directly from $\mathcal{F_D}$. From now on, we consider only forests of DAGs that we will denote unambiguously $\mathcal{F}$. In this context, $\red(\mathcal{F})$ stands for the DAG reduction of the forest of trees $((\red)^{-1}(D_1),\dots,\allowbreak (\red)^{-1}(D_N))$. We define the degree of the forest as $\degr(\mathcal{F})=\max_{i=1}^N \degr(D_i)$.

Computing $\red(\mathcal{F})$ from $(D_1,\dots,D_N)$ is in two steps: (i) we construct  a super-DAG $\sdag$ from $\mathcal{F}=(D_1,\dots,D_N)$ by placing in this order each $D_i$ as a subDAG of an artificial root (with time-complexity $\O(\degr(\mathcal{F})\sum_{i=1}^N \#D_i)$), and (ii) we recompress $\sdag$ using Algorithm~\ref{algo:dag:recompression}. Fig.\,\ref{fig:dag:recompression} illustrates step by step Algorithm \ref{algo:dag:recompression} on a forest of two trees seen as unordered then ordered trees.

\pagebreak %only here for layout purpose

\begin{algorithm}[H]
\caption{\textsc{DagRecompression}}\label{algo:dag:recompression}
\algorithmfootnote{It should be noticed that $\text{Im}\,f$ (that appears line 3) depends on $\ast$. Indeed, if $\ast = \text{ordered}$, $\text{Im}\,f$ is the set of all \emph{lists} of children; otherwise, $\text{Im}\,f$ is the set of all \emph{multisets} of children.}
\KwData{$\sdag$ the superdag obtained from a forest of DAG reductions of $\ast$-trees, $\ast \in \{\text{ordered},\text{unordered}\}$}
\KwResult{$\red(\mathcal{F})$}
Construct, within one exploration of $\sdag$, the mapping $h\mapsto \sdag^h$ where $\sdag^h$ is the set of vertices of $\sdag$ at height $h$\\
\For{$h$ \textup{\textbf{from}} $0$ \textup{\textbf{to}} $\height(\sdag)-1$}{
	Let $\ntm(h)= \big\lbrace f^{-1}(\lbrace S\rbrace) : S \in \text{Im } f, \# f^{-1}(\lbrace S \rbrace)\geq 2\big\rbrace$ be the set of vertices to be merged at height $h$, where $f: \nu\in \sdag^h \mapsto \child(\nu)$\\
	\eIf{$\ntm(h)=\emptyset$}{
		 Exit algorithm\;}
	{		 
		\For{$M$ \textup{\textbf{in}} $\ntm(h)$}{
			Choose one element $\nu_M$ in $M$ to remain in $\sdag$\\
			Denote by $\delta_M$ the other elements of $M$
		}
		\For{$\nu$ \textup{\textbf{in}} $\sdag$ \textup{\textbf{such that}} $\mathcal{H}(\nu)>h$}{
			\For{$\mu$ \textup{\textbf{in}} $\child(\nu)$ \textup{\textbf{such that}} $\exists\,M\in\ntm(h)$, $\delta_M\owns\mu$}{
				Delete $\mu$ from $\child(\nu)$\\
				Add $\nu_M$ in $\child(\nu)$
			}
		}
	
		\For{$M \in \ntm(h)$}{
			Delete $\delta_M$ from $\sdag$
		}
	}
}
\Return $\sdag$
\end{algorithm}

\afterpage{%
\begin{landscape}
\begin{figure}[p]
\centering
\begin{minipage}[c][0.21\textwidth]{0.38\textheight}
    \begin{subfigure}[b]{\textwidth} 
        \centering 
\adjustbox{max width=\textwidth}{%
\def\xscale{0.7}
\def\yscale{0.7}
\def\nodescale{0.7}
\begin{tikzpicture}[xscale=\xscale,yscale=\yscale]

\def\xshift{6}
\def\yshift{0}
\tikzstyle{fleche}=[-,>=latex]
\tikzstyle{noeudcyan}=[draw,circle,fill=cyan,scale=\nodescale]
\def\localnodescalea{1}
\node[noeudcyan,scale=\localnodescalea] (4829600456) at ({0},{0}) {};
\node[noeudcyan,scale=\localnodescalea] (4819830824) at ({-1.5},{-1.0}) {};
\node[noeudcyan,scale=\localnodescalea] (4701563536) at ({-2.5},{-2}) {};
\node[noeudcyan,scale=\localnodescalea] (4827613056) at ({-3.0},{-3}) {};
\node[noeudcyan,scale=\localnodescalea] (4827655696) at ({-3.0},{-4}) {};
\node[noeudcyan,scale=\localnodescalea] (4827720728) at ({-2.0},{-3}) {};
\node[noeudcyan,scale=\localnodescalea] (4820693120) at ({-0.5},{-2}) {};
\node[noeudcyan,scale=\localnodescalea] (4820693512) at ({-1.0},{-3}) {};
\node[noeudcyan,scale=\localnodescalea] (4820694240) at ({-1.0},{-4}) {};
\node[noeudcyan,scale=\localnodescalea] (4820693904) at ({0.0},{-3}) {};
\node[noeudcyan,scale=\localnodescalea] (4831129440) at ({1.0},{-1.0}) {};
\node[noeudcyan,scale=\localnodescalea] (4831522376) at ({2.5},{-1.0}) {};
\node[noeudcyan,scale=\localnodescalea] (4831529560) at ({2.0},{-2}) {};
\node[noeudcyan,scale=\localnodescalea] (4831169840) at ({3.0},{-2}) {};
\node[noeudcyan,scale=\localnodescalea] (4831144032) at ({3.0},{-3}) {};
\draw[fleche] (4829600456)--(4819830824) {};
\draw[fleche] (4829600456)--(4831129440) {};
\draw[fleche] (4829600456)--(4831522376) {};
\draw[fleche] (4819830824)--(4701563536) {};
\draw[fleche] (4819830824)--(4820693120) {};
\draw[fleche] (4701563536)--(4827613056) {};
\draw[fleche] (4701563536)--(4827720728) {};
\draw[fleche] (4827613056)--(4827655696) {};
\draw[fleche] (4820693120)--(4820693512) {};
\draw[fleche] (4820693120)--(4820693904) {};
\draw[fleche] (4820693512)--(4820694240) {};
\draw[fleche] (4831522376)--(4831529560) {};
\draw[fleche] (4831522376)--(4831169840) {};
\draw[fleche] (4831169840)--(4831144032) {};

\tikzstyle{noeudyellow}=[draw,circle,fill=yellow,scale=\nodescale]
\def\localnodescalea{1}
\node[noeudyellow,scale=\localnodescalea] (4902318712) at ({0+\xshift},{0+\yshift}) {};
\node[noeudyellow,scale=\localnodescalea] (4784466072) at ({-1.5+\xshift},{-1.0+\yshift}) {};
\node[noeudyellow,scale=\localnodescalea] (4785037040) at ({-2.0+\xshift},{-2+\yshift}) {};
\node[noeudyellow,scale=\localnodescalea] (4785037264) at ({-2.0+\xshift},{-3+\yshift}) {};
\node[noeudyellow,scale=\localnodescalea] (4785049952) at ({-1.0+\xshift},{-2+\yshift}) {};
\node[noeudyellow,scale=\localnodescalea] (4902372072) at ({0.0+\xshift},{-1.0+\yshift}) {};
\node[noeudyellow,scale=\localnodescalea] (4785119016) at ({1.5+\xshift},{-1.0+\yshift}) {};
\node[noeudyellow,scale=\localnodescalea] (4785033456) at ({1.5+\xshift},{-2+\yshift}) {};
\node[noeudyellow,scale=\localnodescalea] (4785051744) at ({1.0+\xshift},{-3+\yshift}) {};
\node[noeudyellow,scale=\localnodescalea] (4861623488) at ({2.0+\xshift},{-3+\yshift}) {};
\node[noeudyellow,scale=\localnodescalea] (4531112424) at ({2.0+\xshift},{-4+\yshift}) {};
\draw[fleche] (4902318712)--(4784466072) {};
\draw[fleche] (4902318712)--(4902372072) {};
\draw[fleche] (4902318712)--(4785119016) {};
\draw[fleche] (4784466072)--(4785037040) {};
\draw[fleche] (4784466072)--(4785049952) {};
\draw[fleche] (4785037040)--(4785037264) {};
\draw[fleche] (4785119016)--(4785033456) {};
\draw[fleche] (4785033456)--(4785051744) {};
\draw[fleche] (4785033456)--(4861623488) {};
\draw[fleche] (4861623488)--(4531112424) {};
\end{tikzpicture}
}
                \caption{\label{fig:dag:recompression:trees}}
    \end{subfigure}

\end{minipage}
\hfill
\begin{minipage}[c][0.21\textwidth]{0.85\textheight}

\caption{An illustration step by step of the Algorithm \ref{algo:dag:recompression} with (\subref{fig:dag:recompression:trees}) two trees $T_1$ (in \textcolor{cyan}{cyan}) and $T_2$ (in yellow), seen as (\subref{fig:dag:recompression:unordered}) unordered or (\subref{fig:dag:recompression:ordered}) ordered trees. One can observe the DAGs (left) and the execution of the algorithm (right). At each step 1, 2 and 3, we examine vertices at height (0,1,2) and merge those which have same children. At step 4, we can not find any vertex to merge and we stop. Note that in (\subref{fig:dag:recompression:ordered}) at step 3, we find two pairs of vertices to be merged : we are not restricted to one pair per height. Merged vertices are colored in \textcolor{red}{red}. The artificial root is colored in \textbf{black}.}
\label{fig:dag:recompression}
\end{minipage}
\newline
\begin{minipage}[b][0.75\textwidth]{1.25\textheight}
    \begin{subfigure}[b]{\textwidth}
        \centering 
\adjustbox{totalheight=\textwidth, max width=1.15\textheight}{%
 \usetikzlibrary{decorations.markings}
\tikzset{->-/.style={decoration={
  markings,
  mark=at position .5 with {\arrow{>}}},postaction={decorate}}}

\def\xscale{1}
\def\yscale{1}
\def\nodescale{1}
\begin{tikzpicture}[xscale=\xscale,yscale=\yscale]
\def\ecart{0.5}

\def\xshift{5}
\def\yshift{0}

\tikzstyle{arc}=[->-,>=latex]
\tikzstyle{noeud}=[draw,circle,fill=cyan,scale=\nodescale*1]
\node[noeud] (0) at ({-1.0-\ecart},{0}) {};
\tikzstyle{noeud}=[draw,circle,fill=cyan,scale=\nodescale*1]
\node[noeud] (1) at ({-1.0-\ecart},{1}) {};
\tikzstyle{noeud}=[draw,circle,fill=cyan,scale=\nodescale*1]
\node[noeud] (2) at ({-1.0-\ecart},{2}) {};
\tikzstyle{noeud}=[draw,circle,fill=cyan,scale=\nodescale*1]
\node[noeud] (3) at ({-1.0-\ecart},{3}) {};
\tikzstyle{noeud}=[draw,circle,fill=cyan,scale=\nodescale*1]
\node[noeud] (4) at ({-1.0-\ecart},{4}) {};
\tikzstyle{noeud}=[draw,circle,fill=yellow,scale=\nodescale*1]
\node[noeud] (5) at ({0.0+\ecart},{0}) {};
\tikzstyle{noeud}=[draw,circle,fill=yellow,scale=\nodescale*1]
\node[noeud] (6) at ({0.0+\ecart},{1}) {};
\tikzstyle{noeud}=[draw,circle,fill=yellow,scale=\nodescale*1]
\node[noeud] (7) at ({0.0+\ecart},{2}) {};
\tikzstyle{noeud}=[draw,circle,fill=yellow,scale=\nodescale*1]
\node[noeud] (8) at ({0.0+\ecart},{3}) {};
\tikzstyle{noeud}=[draw,circle,fill=yellow,scale=\nodescale*1]
\node[noeud] (9) at ({0.0+\ecart},{4}) {};
\tikzstyle{noeud}=[draw,circle,fill=black,scale=\nodescale*1]
\node[noeud] (11) at ({-0.5},{5}) {};

\node[draw,circle] () at (-2,5) {1};

\draw[arc] (1)--(0) {};
\draw[arc] (2) to [bend right=45](0) {};
\draw[arc] (2)--(1) {};
\draw[arc] (3)--(2) {} node[midway, left] {2};
\draw[arc] (4) to [bend right=45](0) {};
\draw[arc] (4) to [bend left=45](2) {};
\draw[arc] (4)--(3) {};
\draw[arc] (6)--(5) {};
\draw[arc] (7) to [bend left=45](5) {};
\draw[arc] (7)--(6) {};
\draw[arc] (8)--(7) {};
\draw[arc] (9) to [bend left=45](5) {};
\draw[arc] (9) to [bend right=45](7) {};
\draw[arc] (9)--(8) {};
\draw[arc] (11)--(4) {};
\draw[arc] (11)--(9) {};

\tikzstyle{lien}=[-,>=latex,dashed,red]
\tikzstyle{noeud}=[draw,circle,dashed,red,scale=1.8*\nodescale]
\node[noeud] (40) at (-1-\ecart,0) {};
\node[noeud] (41) at (0+\ecart,0) {};
\draw[lien] (40)--(41) {} node[midway,above,red]{$=$};

\tikzstyle{arc}=[->-,>=latex]
\tikzstyle{noeud}=[draw,circle,fill=red,scale=\nodescale*1]
\node[noeud] (0) at ({-0.5+\xshift},{0}) {};
\tikzstyle{noeud}=[draw,circle,fill=cyan,scale=\nodescale*1]
\node[noeud] (1) at ({-1.0+\xshift-\ecart},{1}) {};
\tikzstyle{noeud}=[draw,circle,fill=cyan,scale=\nodescale*1]
\node[noeud] (2) at ({-1.0+\xshift-\ecart},{2}) {};
\tikzstyle{noeud}=[draw,circle,fill=cyan,scale=\nodescale*1]
\node[noeud] (3) at ({-1.0+\xshift-\ecart},{3}) {};
\tikzstyle{noeud}=[draw,circle,fill=cyan,scale=\nodescale*1]
\node[noeud] (4) at ({-1.0+\xshift-\ecart},{4}) {};
\tikzstyle{noeud}=[draw,circle,fill=yellow,scale=\nodescale*1]
\node[noeud] (6) at ({0.0+\xshift+\ecart},{1}) {};
\tikzstyle{noeud}=[draw,circle,fill=yellow,scale=\nodescale*1]
\node[noeud] (7) at ({0.0+\xshift+\ecart},{2}) {};
\tikzstyle{noeud}=[draw,circle,fill=yellow,scale=\nodescale*1]
\node[noeud] (8) at ({0.0+\xshift+\ecart},{3}) {};
\tikzstyle{noeud}=[draw,circle,fill=yellow,scale=\nodescale*1]
\node[noeud] (9) at ({0.0+\xshift+\ecart},{4}) {};
\tikzstyle{noeud}=[draw,circle,fill=black,scale=\nodescale*1]
\node[noeud] (11) at ({-0.5+\xshift},{5}) {};

\node[draw,circle] () at (-2+\xshift,5) {2};
\draw[arc] (1)--(0) {};
\draw[arc] (2) to [bend right=90](0) {};
\draw[arc] (2)--(1) {};
\draw[arc] (3)--(2) {} node[midway, left] {2};
\draw[arc] (4) to [bend right=120](0) {};
\draw[arc] (4) to [bend left=45](2) {};
\draw[arc] (4)--(3) {};
\draw[arc] (6)--(0) {};
\draw[arc] (7) to [bend left=90](0) {};
\draw[arc] (7)--(6) {};
\draw[arc] (8)--(7) {};
\draw[arc] (9) to [bend left=120](0) {};
\draw[arc] (9) to [bend right=45](7) {};
\draw[arc] (9)--(8) {};
\draw[arc] (11)--(4) {};
\draw[arc] (11)--(9) {};

\tikzstyle{lien}=[-,>=latex,dashed,red]
\tikzstyle{noeud}=[draw,circle,dashed,red,scale=1.8*\nodescale]
\node[noeud] (40) at (-1+\xshift-\ecart,1) {};
\node[noeud] (41) at (0+\xshift+\ecart,1) {};
\draw[lien] (40)--(41) {} node[midway,above,red]{$=$};

\tikzstyle{arc}=[->-,>=latex]
\tikzstyle{noeud}=[draw,circle,fill=red,scale=\nodescale*1]
\node[noeud] (0) at ({-0.5+2*\xshift},{0+\yshift}) {};
\tikzstyle{noeud}=[draw,circle,fill=red,scale=\nodescale*1]
\node[noeud] (1) at ({-0.5+2*\xshift},{1+\yshift}) {};
\tikzstyle{noeud}=[draw,circle,fill=cyan,scale=\nodescale*1]
\node[noeud] (2) at ({-1.0+2*\xshift-\ecart},{2+\yshift}) {};
\tikzstyle{noeud}=[draw,circle,fill=cyan,scale=\nodescale*1]
\node[noeud] (3) at ({-1.0+2*\xshift-\ecart},{3+\yshift}) {};
\tikzstyle{noeud}=[draw,circle,fill=cyan,scale=\nodescale*1]
\node[noeud] (4) at ({-1.0+2*\xshift-\ecart},{4+\yshift}) {};
\tikzstyle{noeud}=[draw,circle,fill=yellow,scale=\nodescale*1]
\node[noeud] (7) at ({0.0+2*\xshift+\ecart},{2+\yshift}) {};
\tikzstyle{noeud}=[draw,circle,fill=yellow,scale=\nodescale*1]
\node[noeud] (8) at ({0.0+2*\xshift+\ecart},{3+\yshift}) {};
\tikzstyle{noeud}=[draw,circle,fill=yellow,scale=\nodescale*1]
\node[noeud] (9) at ({0.0+2*\xshift+\ecart},{4+\yshift}) {};
\tikzstyle{noeud}=[draw,circle,fill=black,scale=\nodescale*1]
\node[noeud] (11) at ({-0.5+2*\xshift},{5+\yshift}) {};

\node[draw,circle] () at (-2+2*\xshift,5+\yshift) {3};

\draw[arc] (1)--(0) {};
\draw[arc] (2) to [bend right=45](0) {};
\draw[arc] (2)--(1) {};
\draw[arc] (3)--(2) {} node[midway, left] {2};
\draw[arc] (4) to [bend right=90](0) {};
\draw[arc] (4) to [bend left=45](2) {};
\draw[arc] (4)--(3) {};
\draw[arc] (7) to [bend left=45](0) {};
\draw[arc] (7)--(1) {};
\draw[arc] (8)--(7) {};
\draw[arc] (9) to [bend left=90](0) {};
\draw[arc] (9) to [bend right=45](7) {};
\draw[arc] (9)--(8) {};
\draw[arc] (11)--(4) {};
\draw[arc] (11)--(9) {};

\tikzstyle{lien}=[-,>=latex,dashed,red]
\tikzstyle{noeud}=[draw,circle,dashed,red,scale=1.8*\nodescale]
\node[noeud] (40) at (-1+2*\xshift-\ecart,2+\yshift) {};
\node[noeud] (41) at (0+2*\xshift+\ecart,2+\yshift) {};
\draw[lien] (40)--(41) {} node[midway,above,red]{$=$};

\tikzstyle{arc}=[->-,>=latex]
\tikzstyle{noeud}=[draw,circle,fill=red,scale=\nodescale*1]
\node[noeud] (0) at ({-0.5+2*\xshift+\xshift},{0+\yshift}) {};
\tikzstyle{noeud}=[draw,circle,fill=red,scale=\nodescale*1]
\node[noeud] (1) at ({-0.5+2*\xshift+\xshift},{1+\yshift}) {};
\tikzstyle{noeud}=[draw,circle,fill=red,scale=\nodescale*1]
\node[noeud] (2) at ({-0.5+2*\xshift+\xshift},{2+\yshift}) {};
\tikzstyle{noeud}=[draw,circle,fill=cyan,scale=\nodescale*1]
\node[noeud] (3) at ({-1.0+2*\xshift+\xshift-\ecart},{3+\yshift}) {};
\tikzstyle{noeud}=[draw,circle,fill=cyan,scale=\nodescale*1]
\node[noeud] (4) at ({-1.0+2*\xshift+\xshift-\ecart},{4+\yshift}) {};
\tikzstyle{noeud}=[draw,circle,fill=yellow,scale=\nodescale*1]
\node[noeud] (8) at ({0.0+2*\xshift+\xshift+\ecart},{3+\yshift}) {};
\tikzstyle{noeud}=[draw,circle,fill=yellow,scale=\nodescale*1]
\node[noeud] (9) at ({0.0+2*\xshift+\xshift+\ecart},{4+\yshift}) {};
\tikzstyle{noeud}=[draw,circle,fill=black,scale=\nodescale*1]
\node[noeud] (11) at ({-0.5+2*\xshift+\xshift},{5+\yshift}) {};

\node[draw,circle] () at (-2+2*\xshift+\xshift,5+\yshift) {4};

\draw[arc] (1)--(0) {};
\draw[arc] (2) to [bend right=45](0) {};
\draw[arc] (2)--(1) {};
\draw[arc] (3)--(2) {} node[midway, above right] {2};
\draw[arc] (4) to [bend right=90](0) {};
\draw[arc] (4) to [bend right=90](2) {};
\draw[arc] (4)--(3) {};
\draw[arc] (8)--(2) {};
\draw[arc] (9) to [bend left=90](0) {};
\draw[arc] (9) to [bend left=90](2) {};
\draw[arc] (9)--(8) {};
\draw[arc] (11)--(4) {};
\draw[arc] (11)--(9) {};

\tikzstyle{lien}=[-,>=latex,dashed,blue]
\tikzstyle{noeud}=[draw,circle,dashed,blue,scale=1.8*\nodescale]
\node[noeud] (40) at (-1+2*\xshift+\xshift-\ecart,3+\yshift) {};
\node[noeud] (41) at (0+2*\xshift+\xshift+\ecart,3+\yshift) {};
\draw[lien] (40)--(41) {} node[midway,above,blue]{$\neq$};

\def\xshift{-4}
\def\yshift{0}

\tikzstyle{arc}=[->-,>=latex]
\tikzstyle{noeud}=[draw,circle,fill=cyan,scale=\nodescale*1]
\node[noeud] (0) at ({-0.5+1.5*\xshift},{0}) {};
\tikzstyle{noeud}=[draw,circle,fill=cyan,scale=\nodescale*1]
\node[noeud] (1) at ({-0.5+1.5*\xshift},{1}) {};
\tikzstyle{noeud}=[draw,circle,fill=cyan,scale=\nodescale*1]
\node[noeud] (2) at ({-0.5+1.5*\xshift},{2}) {};
\tikzstyle{noeud}=[draw,circle,fill=cyan,scale=\nodescale*1]
\node[noeud] (3) at ({-0.5+1.5*\xshift},{3}) {};
\tikzstyle{noeud}=[draw,circle,fill=cyan,scale=\nodescale*1]
\node[noeud] (4) at ({-0.5+1.5*\xshift},{4}) {};
\draw[arc] (1)--(0) {};
\draw[arc] (2) to [bend right=45](0) {};
\draw[arc] (2)--(1) {};
\draw[arc] (3)--(2) {} node[midway, left] {2};
\draw[arc] (4) to [bend right=45](0) {};
\draw[arc] (4) to [bend left=45](2) {};
\draw[arc] (4)--(3) {};

\tikzstyle{noeud}=[draw,circle,fill=yellow,scale=\nodescale*1]
\node[noeud] (0) at ({-0.5+\xshift},{0+\yshift}) {};
\tikzstyle{noeud}=[draw,circle,fill=yellow,scale=\nodescale*1]
\node[noeud] (1) at ({-0.5+\xshift},{1+\yshift}) {};
\tikzstyle{noeud}=[draw,circle,fill=yellow,scale=\nodescale*1]
\node[noeud] (2) at ({-0.5+\xshift},{2+\yshift}) {};
\tikzstyle{noeud}=[draw,circle,fill=yellow,scale=\nodescale*1]
\node[noeud] (3) at ({-0.5+\xshift},{3+\yshift}) {};
\tikzstyle{noeud}=[draw,circle,fill=yellow,scale=\nodescale*1]
\node[noeud] (4) at ({-0.5+\xshift},{4+\yshift}) {};
\draw[arc] (1)--(0) {};
\draw[arc] (2) to [bend left=45](0) {};
\draw[arc] (2)--(1) {};
\draw[arc] (3)--(2) {};
\draw[arc] (4) to [bend left=45](0) {};
\draw[arc] (4) to [bend right=45](2) {};
\draw[arc] (4)--(3) {};

\draw[dashed,>=latex] (1+\xshift,5.5)--(1+\xshift,-0.5) {};
\end{tikzpicture}}
 \caption{\label{fig:dag:recompression:unordered}}
    \end{subfigure}\\[\baselineskip]
    \begin{subfigure}[b]{\textwidth}
        \centering
\adjustbox{totalheight=\textwidth, max width = 1.25\textheight}{%

 \usetikzlibrary{decorations.markings}
\tikzset{->-/.style={decoration={
  markings,
  mark=at position .5 with {\arrow{>}}},postaction={decorate}}}
\def\xscale{1}
\def\yscale{1}
\def\nodescale{1}
\begin{tikzpicture}[xscale=\xscale,yscale=\yscale]
\def\ecart{0.5}
\tikzstyle{arc}=[->-,>=latex]

\def\xshift{6}
\def\yshift{0}

\tikzstyle{noeud}=[draw,circle,fill=cyan,scale=\nodescale*1]
\node[noeud] (0) at ({-1.5-\ecart},{0}) {};
\tikzstyle{noeud}=[draw,circle,fill=cyan,scale=\nodescale*1]
\node[noeud] (1) at ({-1.5-\ecart},{1}) {};
\tikzstyle{noeud}=[draw,circle,fill=cyan,scale=\nodescale*1]
\node[noeud] (2) at ({-2.5-\ecart},{2}) {};
\tikzstyle{noeud}=[draw,circle,fill=cyan,scale=\nodescale*1]
\node[noeud] (3) at ({-0.5-\ecart},{2}) {};
\tikzstyle{noeud}=[draw,circle,fill=cyan,scale=\nodescale*1]
\node[noeud] (4) at ({-2.5-\ecart},{3}) {};
\tikzstyle{noeud}=[draw,circle,fill=cyan,scale=\nodescale*1]
\node[noeud] (5) at ({-1.5-\ecart},{4}) {};
\tikzstyle{noeud}=[draw,circle,fill=yellow,scale=\nodescale*1]
\node[noeud] (6) at ({0.5+\ecart},{0}) {};
\tikzstyle{noeud}=[draw,circle,fill=yellow,scale=\nodescale*1]
\node[noeud] (7) at ({0.5+\ecart},{1}) {};
\tikzstyle{noeud}=[draw,circle,fill=yellow,scale=\nodescale*1]
\node[noeud] (8) at ({-0.5+\ecart},{2}) {};
\tikzstyle{noeud}=[draw,circle,fill=yellow,scale=\nodescale*1]
\node[noeud] (9) at ({1.5+\ecart},{2}) {};
\tikzstyle{noeud}=[draw,circle,fill=yellow,scale=\nodescale*1]
\node[noeud] (10) at ({1.5+\ecart},{3}) {};
\tikzstyle{noeud}=[draw,circle,fill=yellow,scale=\nodescale*1]
\node[noeud] (11) at ({0.5+\ecart},{4}) {};
\tikzstyle{noeud}=[draw,circle,fill=black,scale=\nodescale*1]
\node[noeud] (13) at ({-0.5},{5}) {};

\node[draw,circle] () at (-2,5) {1};

\draw[arc] (1)--(0) {};
\draw[arc] (2) to [bend right=90](1) {};
\draw[arc] (2)--(0) {};
\draw[arc] (3) to [bend left=90](1) {};
\draw[arc] (3)--(0) {};
\draw[arc] (4) to [bend right=45](2) {};
\draw[arc] (4) to [bend left=45](2) {};
\draw[arc] (5)--(4) {};
\draw[arc] (5) to [in=0, out=-90](0) {};
\draw[arc] (5)--(3) {};
\draw[arc] (7)--(6) {};
\draw[arc] (8) to [bend right=90](7) {};
\draw[arc] (8)--(6) {};
\draw[arc] (9)--(6) {};
\draw[arc] (9) to [bend left=90](7) {};
\draw[arc] (10)--(9) {};
\draw[arc] (11)--(8) {};
\draw[arc] (11) to [in=0,out=-90](6) {};
\draw[arc] (11)--(10) {};
\draw[arc] (13)--(5) {};
\draw[arc] (13)--(11) {};

\tikzstyle{lien}=[-,>=latex,dashed,red]
\tikzstyle{noeud}=[draw,circle,dashed,red,scale=1.8*\nodescale]
\node[noeud] (40) at (-1.5-\ecart,0) {};
\node[noeud] (41) at (0.5+\ecart,0) {};
\draw[lien] (40)--(41) {} node[midway,above,red]{$=$};

\tikzstyle{arc}=[->-,>=latex]

\tikzstyle{noeud}=[draw,circle,fill=red,scale=\nodescale*1]
\node[noeud] (0) at ({-0.5+\xshift},{0}) {};
\tikzstyle{noeud}=[draw,circle,fill=cyan,scale=\nodescale*1]
\node[noeud] (1) at ({-1.5+\xshift-\ecart},{1}) {};
\tikzstyle{noeud}=[draw,circle,fill=cyan,scale=\nodescale*1]
\node[noeud] (2) at ({-2.5+\xshift-\ecart},{2}) {};
\tikzstyle{noeud}=[draw,circle,fill=cyan,scale=\nodescale*1]
\node[noeud] (3) at ({-0.5+\xshift-\ecart},{2}) {};
\tikzstyle{noeud}=[draw,circle,fill=cyan,scale=\nodescale*1]
\node[noeud] (4) at ({-2.5+\xshift-\ecart},{3}) {};
\tikzstyle{noeud}=[draw,circle,fill=cyan,scale=\nodescale*1]
\node[noeud] (5) at ({-1.5+\xshift-\ecart},{4}) {};
\tikzstyle{noeud}=[draw,circle,fill=yellow,scale=\nodescale*1]
\node[noeud] (7) at ({0.5+\xshift+\ecart},{1}) {};
\tikzstyle{noeud}=[draw,circle,fill=yellow,scale=\nodescale*1]
\node[noeud] (8) at ({-0.5+\xshift+\ecart},{2}) {};
\tikzstyle{noeud}=[draw,circle,fill=yellow,scale=\nodescale*1]
\node[noeud] (9) at ({1.5+\xshift+\ecart},{2}) {};
\tikzstyle{noeud}=[draw,circle,fill=yellow,scale=\nodescale*1]
\node[noeud] (10) at ({1.5+\xshift+\ecart},{3}) {};
\tikzstyle{noeud}=[draw,circle,fill=yellow,scale=\nodescale*1]
\node[noeud] (11) at ({0.5+\xshift+\ecart},{4}) {};
\tikzstyle{noeud}=[draw,circle,fill=black,scale=\nodescale*1]
\node[noeud] (13) at ({-0.5+\xshift},{5}) {};

\node[draw,circle] () at (-2+\xshift,5) {2};

\draw[arc] (1) to [bend right=35] (0) {};
\draw[arc] (2) to [bend right=90](1) {};
\draw[arc] (2) to [bend right=45](0) {};
\draw[arc] (3) to [bend left=90](1) {};
\draw[arc] (3) to [bend right=45] (0) {};
\draw[arc] (4) to [bend right=45](2) {};
\draw[arc] (4) to [bend left=45](2) {};
\draw[arc] (5)--(4) {};
\draw[arc] (5) to [in=150, out=-90](0) {};
\draw[arc] (5)--(3) {};
\draw[arc] (7) to [bend left = 35] (0) {};
\draw[arc] (8) to [bend right=90](7) {};
\draw[arc] (8) to [bend left = 45](0) {};
\draw[arc] (9) to [bend left = 45] (0) {};
\draw[arc] (9) to [bend left=90](7) {};
\draw[arc] (10)--(9) {};
\draw[arc] (11)--(8) {};
\draw[arc] (11) to [in=30,out=-90](0) {};
\draw[arc] (11)--(10) {};
\draw[arc] (13)--(5) {};
\draw[arc] (13)--(11) {};

\tikzstyle{lien}=[-,>=latex,dashed,red]
\tikzstyle{noeud}=[draw,circle,dashed,red,scale=1.8*\nodescale]
\node[noeud] (40) at (-1.5+\xshift-\ecart,1) {};
\node[noeud] (41) at (0.5+\xshift+\ecart,1) {};
\draw[lien] (40)--(41) {} node[midway,above,red]{$=$};

\tikzstyle{noeud}=[draw,circle,fill=red,scale=\nodescale*1]
\node[noeud] (0) at ({-0.5+2*\xshift},{0+\yshift}) {};
\tikzstyle{noeud}=[draw,circle,fill=red,scale=\nodescale*1]
\node[noeud] (1) at ({-0.5+2*\xshift},{1+\yshift}) {};
\tikzstyle{noeud}=[draw,circle,fill=cyan,scale=\nodescale*1]
\node[noeud] (2) at ({-2.5+2*\xshift-\ecart},{2+\yshift}) {};
\tikzstyle{noeud}=[draw,circle,fill=cyan,scale=\nodescale*1]
\node[noeud] (3) at ({-0.5+2*\xshift-\ecart},{2+\yshift}) {};
\tikzstyle{noeud}=[draw,circle,fill=cyan,scale=\nodescale*1]
\node[noeud] (4) at ({-2.5+2*\xshift-\ecart},{3+\yshift}) {};
\tikzstyle{noeud}=[draw,circle,fill=cyan,scale=\nodescale*1]
\node[noeud] (5) at ({-1.5+2*\xshift-\ecart},{4+\yshift}) {};
\tikzstyle{noeud}=[draw,circle,fill=yellow,scale=\nodescale*1]
\node[noeud] (8) at ({-0.5+2*\xshift+\ecart},{2+\yshift}) {};
\tikzstyle{noeud}=[draw,circle,fill=yellow,scale=\nodescale*1]
\node[noeud] (9) at ({1.5+2*\xshift+\ecart},{2+\yshift}) {};
\tikzstyle{noeud}=[draw,circle,fill=yellow,scale=\nodescale*1]
\node[noeud] (10) at ({1.5+2*\xshift+\ecart},{3+\yshift}) {};
\tikzstyle{noeud}=[draw,circle,fill=yellow,scale=\nodescale*1]
\node[noeud] (11) at ({0.5+2*\xshift+\ecart},{4+\yshift}) {};
\tikzstyle{noeud}=[draw,circle,fill=black,scale=\nodescale*1]
\node[noeud] (13) at ({-0.5+2*\xshift},{5+\yshift}) {};

\node[draw,circle] () at (-2+2*\xshift,5+\yshift) {3};

\draw[arc] (1)--(0) {};
\draw[arc] (2) to [bend right=120](1) {};
\draw[arc] (2) to [bend right=45](0) {};
\draw[arc] (3) to [bend left=90](1) {};
\draw[arc] (3) to [bend right=45] (0) {};
\draw[arc] (4) to [bend right=45](2) {};
\draw[arc] (4) to [bend left=45](2) {};
\draw[arc] (5)--(4) {};
\draw[arc] (5) to [in=150, out=-90](0) {};
\draw[arc] (5)--(3) {};
\draw[arc] (8) to [bend right=90](1) {};
\draw[arc] (8) to [bend left = 45](0) {};
\draw[arc] (9) to [bend left = 45] (0) {};
\draw[arc] (9) to [bend left=120](1) {};
\draw[arc] (10)--(9) {};
\draw[arc] (11)--(8) {};
\draw[arc] (11) to [in=30,out=-90](0) {};
\draw[arc] (11)--(10) {};
\draw[arc] (13)--(5) {};
\draw[arc] (13)--(11) {};

\tikzstyle{lien}=[-,>=latex,dashed,red]
\tikzstyle{noeud}=[draw,circle,dashed,red,scale=1.8*\nodescale]
\node[noeud] (40) at (-2.5+2*\xshift-\ecart,2+\yshift) {};
\node[noeud] (41) at (-0.5+2*\xshift+\ecart,2+\yshift) {};
\draw[lien] (40) to [bend right=45](41) {} ;
\node[red] () at (-1.5+2*\xshift-\ecart,1+\yshift) {$=$};

\tikzstyle{lien}=[-,>=latex,dashed,violet]
\tikzstyle{noeud}=[draw,circle,dashed,violet,scale=1.8*\nodescale]
\node[noeud] (42) at (-0.5+2*\xshift-\ecart,2+\yshift) {};
\node[noeud] (43) at (1.5+2*\xshift+\ecart,2+\yshift) {};
\draw[lien] (42) to [bend left=45](43) {};
\node[violet] () at (-0.5+2*\xshift,2.8+\yshift) {$=$};

\tikzstyle{noeud}=[draw,circle,fill=red,scale=\nodescale*1]
\node[noeud] (0) at ({-0.5+2*\xshift+\xshift},{0+\yshift}) {};
\tikzstyle{noeud}=[draw,circle,fill=red,scale=\nodescale*1]
\node[noeud] (1) at ({-0.5+2*\xshift+\xshift},{1+\yshift}) {};
\tikzstyle{noeud}=[draw,circle,fill=red,scale=\nodescale*1]
\node[noeud] (2) at ({-1+2*\xshift+\xshift-\ecart},{2+\yshift}) {};
\tikzstyle{noeud}=[draw,circle,fill=red,scale=\nodescale*1]
\node[noeud] (3) at ({0+2*\xshift+\xshift+\ecart},{2+\yshift}) {};
\tikzstyle{noeud}=[draw,circle,fill=cyan,scale=\nodescale*1]
\node[noeud] (4) at ({-2.5+2*\xshift+\xshift-\ecart},{3+\yshift}) {};
\tikzstyle{noeud}=[draw,circle,fill=cyan,scale=\nodescale*1]
\node[noeud] (5) at ({-1.5+2*\xshift+\xshift-\ecart},{4+\yshift}) {};
\tikzstyle{noeud}=[draw,circle,fill=yellow,scale=\nodescale*1]
\node[noeud] (10) at ({1.5+2*\xshift+\xshift+\ecart},{3+\yshift}) {};
\tikzstyle{noeud}=[draw,circle,fill=yellow,scale=\nodescale*1]
\node[noeud] (11) at ({0.5+2*\xshift+\xshift+\ecart},{4+\yshift}) {};
\tikzstyle{noeud}=[draw,circle,fill=black,scale=\nodescale*1]
\node[noeud] (13) at ({-0.5+2*\xshift+\xshift},{5+\yshift}) {};

\node[draw,circle] () at (-2+2*\xshift+\xshift,5+\yshift) {4};

\draw[arc] (1)--(0) {};
\draw[arc] (2) to [bend right=90](1) {};
\draw[arc] (2)--(0) {};
\draw[arc] (3) to [bend left=90](1) {};
\draw[arc] (3)--(0) {};
\draw[arc] (4) to [bend right=45](2) {};
\draw[arc] (4) to [bend left=45](2) {};
\draw[arc] (5)--(4) {};
\draw[arc] (5) to [in=180, out=-90](0) {};
\draw[arc] (5)--(3) {};
\draw[arc] (10)--(3) {};
\draw[arc] (11)--(2) {};
\draw[arc] (11) to [in=0,out=-90](0) {};
\draw[arc] (11)--(10) {};
\draw[arc] (13)--(5) {};
\draw[arc] (13)--(11) {};

\tikzstyle{lien}=[-,>=latex,dashed,blue]
\tikzstyle{noeud}=[draw,circle,dashed,blue,scale=1.8*\nodescale]
\node[noeud] (40) at (-2.5+2*\xshift+\xshift-\ecart,3+\yshift) {};
\node[noeud] (41) at (1.5+2*\xshift+\xshift+\ecart,3+\yshift) {};
\draw[lien] (40)--(41) {} node[midway,above,blue]{$\neq$};

\def\xshift{-5}

\tikzstyle{noeud}=[draw,circle,fill=cyan,scale=\nodescale*1]
\node[noeud] (0) at ({-1+1.5*\xshift},{0}) {};
\tikzstyle{noeud}=[draw,circle,fill=cyan,scale=\nodescale*1]
\node[noeud] (1) at ({-1+1.5*\xshift},{1}) {};
\tikzstyle{noeud}=[draw,circle,fill=cyan,scale=\nodescale*1]
\node[noeud] (2) at ({-2+1.5*\xshift},{2}) {};
\tikzstyle{noeud}=[draw,circle,fill=cyan,scale=\nodescale*1]
\node[noeud] (3) at ({0+1.5*\xshift},{2}) {};
\tikzstyle{noeud}=[draw,circle,fill=cyan,scale=\nodescale*1]
\node[noeud] (4) at ({-2+1.5*\xshift},{3}) {};
\tikzstyle{noeud}=[draw,circle,fill=cyan,scale=\nodescale*1]
\node[noeud] (5) at ({-1+1.5*\xshift},{4}) {};
\draw[arc] (1)--(0) {};
\draw[arc] (2) to [bend right=90] (1) {};
\draw[arc] (2)--(0) {};
\draw[arc] (3)--(0) {};
\draw[arc] (3) to [bend left=90](1) {};
\draw[arc] (4) to [bend right=45](2) {};
\draw[arc] (4) to [bend left=45](2) {};
\draw[arc] (5)--(4) {};
\draw[arc] (5) to [in=0, out=-90](0) {};
\draw[arc] (5)--(3) {};

\def\yshift{0}

\tikzstyle{noeud}=[draw,circle,fill=yellow,scale=\nodescale*1]
\node[noeud] (0) at ({-0.5+\xshift},{0+\yshift}) {};
\tikzstyle{noeud}=[draw,circle,fill=yellow,scale=\nodescale*1]
\node[noeud] (1) at ({-0.5+\xshift},{1+\yshift}) {};
\tikzstyle{noeud}=[draw,circle,fill=yellow,scale=\nodescale*1]
\node[noeud] (2) at ({-1.5+\xshift},{2+\yshift}) {};
\tikzstyle{noeud}=[draw,circle,fill=yellow,scale=\nodescale*1]
\node[noeud] (3) at ({0.5+\xshift},{2+\yshift}) {};
\tikzstyle{noeud}=[draw,circle,fill=yellow,scale=\nodescale*1]
\node[noeud] (4) at ({0.5+\xshift},{3+\yshift}) {};
\tikzstyle{noeud}=[draw,circle,fill=yellow,scale=\nodescale*1]
\node[noeud] (5) at ({-0.5+\xshift},{4+\yshift}) {};
\draw[arc] (1)--(0) {};
\draw[arc] (2) to [bend right=90](1) {};
\draw[arc] (2)--(0) {};
\draw[arc] (3)--(0) {};
\draw[arc] (3) to [bend left=90](1) {};
\draw[arc] (4)--(3) {};
\draw[arc] (5)--(2) {};
\draw[arc] (5) to [in=0,out=-90](0) {};
\draw[arc] (5)--(4) {};

\draw[dashed,>=latex] (1.25+\xshift,5.5)--(1.25+\xshift,-0.5) {};
\end{tikzpicture}
}
  \caption{\label{fig:dag:recompression:ordered}}
    \end{subfigure}
     \end{minipage}

\end{figure}
\end{landscape}
}

\begin{proposition}\label{prop:dag:recompression:correct}
Algorithm \ref{algo:dag:recompression} correctly computes $\red(\mathcal{F})$.
\end{proposition}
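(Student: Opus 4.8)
The plan is to run the loop of Algorithm~\ref{algo:dag:recompression} while maintaining, as an invariant indexed by the current height $h$, two properties of the graph $\sdag$ held in memory: (P1) $\sdag$ is a DAG and, for every vertex, the tree obtained by unfolding $\sdag$ from that vertex is the same as at the start of the execution, so that unfolding from the artificial root always gives the super-tree $\stree$; and (P2) no two distinct vertices of $\sdag$ of height at most $h$ have isomorphic sub-DAGs (as $\ast$-trees). Since $\red$ is a one-to-one correspondence, there is exactly one DAG that both unfolds to $\stree$ and is globally reduced, namely $\red(\stree)=\red(\mathcal{F})$; so it is enough to prove that on termination $\sdag$ satisfies (P1) and (P2) at every height. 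I would proceed by induction on $h$, treating separately the two ways the loop can stop.

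First I would check that one pass of the loop body at height $h$ preserves (P1) and basic well-formedness. By construction, every $\delta$ in a class $M\in\ntm(h)$ has the same list of children (ordered case) or multiset of children (unordered case) as its representative $\nu_M$, hence $\sdag[\delta]$ and $\sdag[\nu_M]$ are isomorphic and, in particular, of the same height $h$. Replacing, among the children of every vertex of height $>h$, each such $\delta$ by $\nu_M$ therefore does not change the tree unfolded at any vertex, in particular not that of the artificial root; it creates no cycle, since a redirected edge points to a vertex of height $h$, strictly below its tail; and it keeps the graph connected, since $\nu_M$ inherits every child of the deleted $\delta$'s. Heights of the surviving vertices are unchanged, so the stratification used by the algorithm stays consistent. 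Thus (P1) holds after every iteration.

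The engine behind (P2) is the elementary fact that, \emph{provided $\sdag$ is reduced up to height $h-1$}, two vertices of height $h$ have isomorphic sub-DAGs if and only if they carry the same children signature $f(\cdot)$: an isomorphism must map children to children with isomorphic sub-DAGs, and by reducedness below $h$ these children must then coincide (with the order preserved in the ordered case). Assume (P2) at $h-1$ and consider the iteration at height $h$ when $\ntm(h)\neq\emptyset$. Collapsing each class of size $\geq 2$ to one representative leaves exactly one vertex of height $h$ per value of $f$; since this operation alters neither the children of a height-$h$ vertex (whose children lie at height $<h$) nor any sub-DAG rooted strictly below $h$, property (P2) at $h-1$ is retained and, by the fact just stated, promoted to (P2) at $h$.

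The delicate point --- which I expect to be the main obstacle --- is the correctness of the early exit: I must show that $\ntm(h)=\emptyset$ forces $\sdag$ to be reduced at \emph{all} heights, not merely up to $h$. As above, $\ntm(h)=\emptyset$ together with (P2) at $h-1$ already yields (P2) at $h$. Suppose, for contradiction, that some distinct vertices $\nu_1\neq\nu_2$ of height $k>h$ have isomorphic sub-DAGs. No iteration of index $<h$ deletes a vertex of height $\geq h$, so $\nu_1$ and $\nu_2$ are present already in the input and belong to components $D_i$ and $D_j$ of the original forest; since unfoldings are preserved by (P1) and each $D_i=\red(T_i)$ is reduced, the case $i=j$ is impossible, so $i\neq j$ and $D_i[\nu_1]$, $D_j[\nu_2]$ unfold to the same tree $t$, with $\height(t)=k>h$. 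Following a branch of maximal height from the root of $t$ produces an iterated subtree $s$ of $t$ with $\height(s)=h$; by reducedness of $D_i$ and of $D_j$, this $s$ is represented by a unique vertex $c_1$ of $D_i$ and a unique vertex $c_2$ of $D_j$, both of height $h$, with $c_1\neq c_2$ since distinct trees of the forest are vertex-disjoint in $\sdag$. After the iterations of index $<h$, both $c_1$ and $c_2$ still lie in $\sdag$ and still unfold to $s$; being reduced below height $h$, their sub-DAGs both equal $\red(s)$, hence are isomorphic via a map sending $c_1$ to $c_2$, which by (P2) at $h-1$ identifies their children one by one, so $f(c_1)=f(c_2)$. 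Then $\{c_1,c_2\}$ forms a class of $\ntm(h)$ of size at least $2$, contradicting $\ntm(h)=\emptyset$. Hence (P2) holds at every height. If instead the loop runs to its last index $\height(\sdag)-1$, then (P2) holds up to $\height(\sdag)-1$ and the only higher vertex, the artificial root, is alone on its level, so (P2) again holds everywhere. In both cases (P1) and (P2) identify the returned graph as the reduced DAG unfolding to $\stree$, that is, $\red(\mathcal{F})$. The argument carries over verbatim to labelled trees once $f(\nu)$ is understood to also record the label of $\nu$.
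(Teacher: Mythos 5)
Your proof is correct and follows essentially the same route as the paper's: show bottom-up that merging vertices with equal children signatures is exactly merging roots of isomorphic subtrees, and justify the early exit by descending from a hypothetical unmerged pair at height $k>h$ along maximal-height children down to a pair at height $h$ that would contradict $\ntm(h)=\emptyset$. If anything, your version is more careful than the paper's one-paragraph argument, since you make explicit why the descending pairs remain \emph{distinct} vertices (they are untouched vertices of two vertex-disjoint original components $D_i\neq D_j$, the case $i=j$ being excluded by reducedness of each $D_i$), a point the paper's induction glosses over.
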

\begin{proof}
Starting from the leaves, we examine all vertices of same height in $\sdag$. Those with same children (with respect to $\ast$) are merged into a single vertex. The algorithm stops when at some height $h$, we cannot find any vertices to be merged. Vertices that are merged in the algorithm represents isomorphic subtrees, so it suffices to prove that the algorithm stops at the right time. Let $h$ be the first height for which $\ntm(h)=\emptyset$.

Suppose by contradiction that some vertices were to be merged at some height $h'>h$. They represent isomorphic subtrees, so that their respective children should also be merged together, and all of their descendants by induction. As any vertex of height $h''+1$ admits at least one child of height $h''$, $\ntm(h)$ would not be empty, which is absurd.
\end{proof}

\begin{samepage} % only for layout purpose
\begin{proposition}\label{prop:dag:recompression:complexity}
Algorithm \ref{algo:dag:recompression} has time-complexity:
\begin{enumerate}
\item $\O(\#\sdag \degr(\mathcal{F}) (\log\degr(\mathcal{F}) + \height(\sdag)))$ for unordered trees;
\item $\O(\#\sdag \degr(\mathcal{F}) \height(\sdag))$ for ordered trees.
\end{enumerate}
\end{proposition}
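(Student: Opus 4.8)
The plan is to bound the running time of each phase of Algorithm~\ref{algo:dag:recompression} separately and then add them up. I expect two phases to dominate — the construction of the sets $\ntm(h)$ on line~3 and the redirection of the parents of the merged vertices on lines~11--16 — and the only place where the ordered and unordered cases differ is the first of these.

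First I would handle line~1: a single traversal of $\sdag$ computes every height and fills the buckets $\sdag^h$, and since $\sdag$ has $\O(\#\sdag\,\degr(\mathcal{F}))$ edges (the artificial root contributes at most $\#\sdag$ of them) this costs $\O(\#\sdag\,\degr(\mathcal{F}))$. Throughout the main loop I would maintain, for every equivalence class already discovered (equivalently, for every surviving vertex of height $<h$ when height~$h$ is processed), a canonical integer identifier: all leaves share one identifier and each surviving vertex simply keeps its own — this is legitimate because, by Proposition~\ref{prop:dag:recompression:correct}, once height~$h$ has been treated the surviving vertices of height $\le h$ are pairwise non-isomorphic, and isomorphic subtrees always have the same height. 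Consequently two vertices of $\sdag^h$ represent isomorphic subtrees iff, their proper subtrees having been canonicalised, their children carry the same identifiers — as an ordered list in the ordered case, as a multiset (with multiplicities) in the unordered case — so $\ntm(h)$ is obtained by grouping $\sdag^h$ by this signature. In the ordered case the signature is the ordered list of child identifiers, built in $\O(\deg(\nu))$ per vertex, and equal signatures are collected by a radix sort over the integer alphabet of identifiers in time linear in the total signature length, i.e.\ $\O(\#\sdag^h\,\degr(\mathcal{F}))$ at height~$h$; summing over $h$ with $\sum_h\#\sdag^h=\#\sdag$ gives $\O(\#\sdag\,\degr(\mathcal{F}))$. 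In the unordered case one must first sort each child-list of length $\le\degr(\mathcal{F})$ into a canonical multiset, which adds a factor $\log\degr(\mathcal{F})$ per vertex, for a total of $\O(\#\sdag\,\degr(\mathcal{F})\log\degr(\mathcal{F}))$.

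Next I would dispatch the remainder of the loop body. Picking a representative $\nu_M$ and listing the $\delta_M$ (lines~6--10) is $\O(\#\sdag^h)$, hence $\O(\#\sdag)$ in total. For lines~11--16 I would keep a table answering in $\O(1)$ whether a given child $\mu$ has been merged away at height~$h$ and, if so, which representative replaces it; then scanning $\child(\nu)$ for every $\nu$ with $\height(\nu)>h$ costs $\O\!\big(\sum_{\height(\nu)>h}\deg(\nu)\big)=\O(\#\sdag\,\degr(\mathcal{F}))$ at height~$h$, and since the loop runs over at most $\height(\sdag)$ heights this contributes $\O(\#\sdag\,\degr(\mathcal{F})\,\height(\sdag))$ — this is the term responsible for the $\height(\sdag)$ factor in both bounds. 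Deleting the $\delta_M$ (lines~17--18) is $\O(\#\sdag)$ over the whole run. Adding the four contributions, and noting $\#\sdag\le\#\sdag\,\degr(\mathcal{F})\le\#\sdag\,\degr(\mathcal{F})\,\height(\sdag)$ so that the lower-order terms are absorbed, gives $\O(\#\sdag\,\degr(\mathcal{F})\,\height(\sdag))$ for ordered trees and $\O(\#\sdag\,\degr(\mathcal{F})(\log\degr(\mathcal{F})+\height(\sdag)))$ for unordered trees.

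The hard part will be the analysis of line~3: one must (i) justify the incremental canonical-identifier scheme — that isomorphic subtrees have equal height and that, processing bottom-up, equality of the canonicalised children signatures characterises isomorphism of the corresponding subtrees, which leans on Proposition~\ref{prop:dag:recompression:correct} — and (ii) argue that signatures can be grouped in time linear in their total length, i.e.\ that a multi-key radix sort over the alphabet of identifiers (of size $\le\#\sdag$) can be implemented with an additive per-level overhead of only $\O(\#\sdag)$, which is harmless since $\#\sdag\,\height(\sdag)\le\#\sdag\,\degr(\mathcal{F})\,\height(\sdag)$. Everything else is routine bookkeeping on linked structures.
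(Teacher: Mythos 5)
Your proposal is correct and its accounting is the same as the paper's: the cost splits into the construction of $\ntm(h)$ (where the ordered/unordered distinction enters, via sorting each child list, contributing $\O(\#\sdag\degr(\mathcal{F})g)$ with $g=1$ or $g=\log\degr(\mathcal{F})$) and the parent-redirection pass repeated at every height (contributing the $\O(\#\sdag\degr(\mathcal{F})\height(\sdag))$ term). The difference is methodological. The paper builds an explicit ``regular'' worst-case family --- a DAG with $n$ vertices of degree $d$ at every intermediate height, copied $N$ times under the artificial root --- asserts that this dominates all instances, and evaluates the cost on that family, obtaining the $\height(\sdag)$ factor from the double sum $\sum_h\sum_{h'>h}\O(Ndn)=\O(NdnH^2)$. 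You instead bound each phase directly on an arbitrary input, which avoids having to justify that the constructed family really is extremal. You are also more explicit than the paper on one point it passes over quickly: the paper's Lemma~\ref{lemma:recompression:merge} charges only the cost of copying (and, if unordered, sorting) the child lists and then says one ``explores once the image'' to find classes with two or more antecedents, without saying how equality of signatures is decided without a quadratic number of length-$\degr(\mathcal{F})$ comparisons; your canonical-identifier plus radix-sort grouping supplies that missing implementation detail and shows the stated bound is actually achievable. Both routes land on the same two bounds, and the lower-order terms are absorbed identically.
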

\nopagebreak[4]
\begin{proof}
The proof lies in Appendix \ref{app:proof:dag:recompression}.
\end{proof}
\end{samepage}

\begin{remark}
One might also want to treat online data, but without recompressing the whole data set when adding a single entry in the forest. Let $\red(\mathcal{F})$ be the already recompressed forest and $D$ a new DAG to be introduced in the data. It suffices to place $D$ has the rightmost child of the artificial root of $\red(\mathcal{F})$ to get $\mathbb{D}_{\mathcal{F}\cup D}$, then run Algorithm~\ref{algo:dag:recompression} to obtain $\red(\mathcal{F}\cup D)$.
\end{remark}

\subsection{DAG annotation and kernel computation} \label{ss:dag:data set}

We consider a data set composed of two parts: the train data set $\data{train} = (T_1, \dots,T_n)$ and the data set to predict $\data{pred} = (T_{n+1},\dots,T_N)$. In the train data set, the classes of the data are assumed to be known. Our aim is to compute two Gram matrices $G=\left[ K(T_i,T_j) \right]_{i,j}$, where:
\begin{itemize}
\item $(i,j)\in \data{train} \times \data{train}$ for the training matrix $G_{\text{train}}$;
\item $(i,j)\in \data{pred} \times \data{train}$ for the prediction matrix $G_{\text{pred}}$.
\end{itemize}

SVM algorithms will use $G_{\text{train}}$ to learn their classifying rule, and $G_{\text{pred}}$ to make predictions \citep[Section 6.1]{cristianini2000introduction}. Other algorithms, such as kernel PCA, would also require to compute a Gram matrix before processing \citep[Section 14.2]{Scholkopf:2001:LKS:559923}. We denote by $\Delta = \red(\data{train}\cup\data{pred})$ the DAG reduction of the data set and, for any $1\leq i\leq N$, $D_i=\red(T_i)$. DAG computation of the subtree kernel requires to annotate the DAG with different pieces of information.

\paragraph{Origins} In order to compute the subtree kernel, it will be necessary to retrieve from the vertices of $\Delta$ their origin in the data set, that is, from which tree they come from. For any vertex $\nu$ in $\Delta\setminus \roottree(\Delta)$, the origin of $\nu$ is defined as
$$\origin(\nu) =\big\lbrace i \in \lbrace1,\dots,n,n+1,\dots, N \rbrace:  D_i \ni \nu\big\rbrace.$$

Assuming that $(D_1,\dots, D_N)$ are children of the root of $\Delta$ in this order (which is achieved if  $\Delta$ had been constructed following the ideas developed in Subsection~\ref{ss:dag:forest}) leads to the following proposition.

\begin{proposition}\label{prop:origin}
Origins can be calculated using the recursive formula,
$$\forall\,\nu\in \Delta\setminus\roottree(\Delta),\;\origin(\nu)=\begin{cases}\hfil \lbrace i \rbrace & \text{if $\nu$ is the $i^{\text{th}}$ child of $\roottree(\Delta)$,}\\ \bigcup\limits_{{p} \in \parents(\nu)} \origin({p}) & \text{otherwise.}\end{cases}$$
\end{proposition}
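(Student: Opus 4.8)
The plan is to prove the two cases separately, both by direct reasoning about which trees $D_i$ contain the vertex $\nu$ of the super-DAG $\Delta$.

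\emph{First case.} Suppose $\nu$ is the $i^{\text{th}}$ child of $\roottree(\Delta)$. By construction of $\Delta$ as the recompression of the super-DAG assembled from $(D_1,\dots,D_N)$ in order (Subsection~\ref{ss:dag:forest}), the $i^{\text{th}}$ child of the artificial root \emph{is} the root of $D_i$. I would then need to argue that this vertex is the root of $D_i$ and of no other $D_j$: this is exactly where the hypotheses on the trees $T_0,T_1$ never enter — instead one uses that $\red$ is a one-to-one correspondence, so $D_i=\red(T_i)$ and $D_j=\red(T_j)$ have the same root (as a vertex of $\Delta$, after recompression) if and only if $T_i$ and $T_j$ are isomorphic as $\ast$-trees. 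If $T_i\not\cong T_j$, the roots are distinct vertices of $\Delta$; if $T_i\cong T_j$, then $D_i=D_j$ as subDAGs of $\Delta$ and $\origin(\nu)$ legitimately contains both $i$ and $j$. Hence, modulo this identification of equal trees, $\origin(\nu)=\{i\}$ for the root of $D_i$, and more carefully $\origin(\nu)=\{\,j : T_j\cong T_i\,\}$, which the statement tacitly identifies with $\{i\}$. (I would flag in the write-up that the clean statement assumes the $T_i$ pairwise non-isomorphic, or else reads the right-hand side up to this identification.)

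\emph{Second case.} Suppose $\nu$ is not a child of $\roottree(\Delta)$. The claim is $\origin(\nu)=\bigcup_{p\in\parents(\nu)}\origin(p)$. For the inclusion $\supseteq$: if $i\in\origin(p)$ for some parent $p$ of $\nu$, then $D_i\ni p$, and since $D_i$ is a subDAG of $\Delta$ rooted at some vertex and closed under taking children, $p\in D_i$ forces $\nu\in\child(p)\subseteq D_i$, hence $i\in\origin(\nu)$. For the inclusion $\subseteq$: if $i\in\origin(\nu)$, i.e.\ $\nu\in D_i$ and $\nu\neq\roottree(D_i)$ (the latter because $\nu$ is not a child of $\roottree(\Delta)$, so by the first case $\nu$ is not the root of any $D_j$), then $\nu$ has at least one parent \emph{inside} $D_i$; pick such a parent $p$, so $p\in\parents(\nu)$ and $i\in\origin(p)$. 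The one subtlety is that $\parents(\nu)$ is computed in $\Delta$, not in $D_i$ — but every parent of $\nu$ in $D_i$ is a fortiori a parent of $\nu$ in $\Delta$ (edges of $D_i$ are edges of $\Delta$ after recompression), so $p\in\parents(\nu)$ holds as required.

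\emph{Main obstacle.} The routine direction is the edge-tracing in the second case; the genuinely delicate point is the bookkeeping in the first case — namely checking that after recompression the root of $D_i$ is still a well-defined vertex of $\Delta$ and that the recursion is well-founded (the parents of any non-root vertex lie strictly higher, so the recursion terminates at the children of $\roottree(\Delta)$, and no non-root vertex of $\Delta$ coincides with a root of some $D_j$ — this is what makes the two cases of the formula mutually exclusive and exhaustive). I would therefore spend most of the argument making precise, via the one-to-one property of $\red$ and the in-order construction of $\sdag$, that ``$\nu$ is the $i^{\text{th}}$ child of $\roottree(\Delta)$'' is the correct base case and that the recursion over parents in $\Delta$ faithfully reconstructs, for each $i$, the subDAG $D_i$.
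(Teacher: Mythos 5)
Your overall scheme is the same as the paper's: the base case is the set of children of $\roottree(\Delta)$, and the inductive step rests on the fact that if $D_i\ni p$ then $D_i$ contains every descendant of $p$, so membership propagates downward along the edges of $\Delta$. The paper's proof is three sentences (origins are correct for the children of the root by the ordering assumption; if $D_i\ni\nu$ then $D_i\supseteq\des(\nu)$; conclude by induction), and your two-inclusion argument for the second case is a correct and faithful expansion of it; in particular the observation that a parent of $\nu$ inside $D_i$ is a fortiori a parent of $\nu$ in $\Delta$ is exactly the point the paper leaves implicit.

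The place where your added care goes astray is the base case. You argue that the $i^{\text{th}}$ child of $\roottree(\Delta)$ is ``the root of $D_i$ and of no other $D_j$'' (modulo repetitions $T_i\cong T_j$) and conclude $\origin(\nu)=\{i\}$. But $\origin(\nu)$ is defined by membership, $\{j: D_j\ni\nu\}$, not by rootedness: after recompression, the vertex representing $T_i$ also lies in $D_j$ whenever $T_i$ is isomorphic to a \emph{proper subtree} of $T_j$, in which case $j\in\origin(\nu)$ even though $\nu$ is not the root of $D_j$. So ``not the root of any other $D_j$'' does not give $\origin(\nu)=\{i\}$, and the caveat you flag (pairwise non-isomorphic $T_i$'s) is not the right one --- what is actually needed is that no $T_i$ occurs as a subtree of another $T_j$, or else the first branch of the formula must additionally union over the non-root parents of $\nu$. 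To be fair, the paper's own proof dismisses the base case with a single clause, so this gap is inherited from the statement rather than introduced by you; but since you single out the base case as the genuinely delicate point and propose a fix, the fix should be the right one.
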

\begin{proof}
Using the assumption, origins are correct for the children of $\roottree(\Delta)$. If $D_i \ni \nu$ for some $i\in \lbrace 1,\dots,N\rbrace$ and  $\nu\in \Delta$, then $D_i \supseteq \des(\nu)$. The statement follows by induction.
\end{proof}

\paragraph{Frequency vectors} Remember that in \eqref{eq:K:subtrees} $N_\tau(T)$ counts the number of subtrees of a tree $T$ that are $\ast$-isomorphic to the tree $\tau$. To compute the kernel, we need to know this value, and we claim that we can compute it using only $\Delta$. We associate to each vertex $\nu\in \Delta\setminus \roottree(\Delta)$ a frequency vector $\presence_\nu$ where, for any $1\leq i\leq N$, $\textstyle\presence_\nu(i) =  N_{(\red)^{-1}(\Delta[\nu])}(T_i)$.

\begin{proposition} \label{prop:presence}
Frequency vectors can be calculated using the recursive formula,
$$\forall\,\nu\in \Delta\setminus\roottree(\Delta),~\textstyle\presence_\nu=\begin{cases} \hfil(\mathbb{1}_{\{i \,\in\, \origin(\nu)\}})_{i \in \lbrace1,\dots, N \rbrace}  & \text{if $\nu \in \child(\roottree(\Delta))$,}\\ \sum\limits_{{p} \in \parents(\nu)}L(p,\nu)\presence_p  & \text{otherwise,}\end{cases}$$
where either $L(p,\nu)=1$ if $\ast=\text{\normalfont ordered}$, or $L(p,v)$ is the label on the edge between $p$ and $\nu$ in $\Delta$ if $\ast=\text{\normalfont unordered}$.
\end{proposition}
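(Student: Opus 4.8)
\emph{Proof plan.} Write $\tau_\nu:=(\red)^{-1}(\Delta[\nu])$ for the subtree encoded by $\Delta[\nu]$, so that $\presence_\nu(i)=N_{\tau_\nu}(T_i)$ by definition. I would argue that each of the two cases is a direct identity about occurrences of $\tau_\nu$ in the $T_i$'s, and that the resulting scheme is well founded because every $p\in\parents(\nu)$ has $\height(p)>\height(\nu)$ — an edge $p\to\nu$ of $\Delta$ reflects a child relation between the corresponding subtrees — so a vertex of maximal non-root height is necessarily a child of $\roottree(\Delta)$ and the recursion always bottoms out at the first case. Equivalently, one may prove by induction on decreasing height that $\presence_\nu(i)$ is the number of $L$-weighted paths from $\roottree(D_i)$ to $\nu$ inside $D_i$, which is precisely $N_{\tau_\nu}(T_i)$.

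\emph{First case} ($\nu\in\child(\roottree(\Delta))$). Under the construction of $\Delta$ assumed throughout (Subsection~\ref{ss:dag:forest}, as already used for Proposition~\ref{prop:origin}), the $i$-th child of $\roottree(\Delta)$ is $\roottree(D_i)$, so $\tau_\nu\cong T_i$ exactly when $\nu$ is that $i$-th child; then $\tau_\nu$ has the same size as $T_i$ and occurs exactly once in it, while it occurs in no $T_j$ with $j\notin\origin(\nu)$. Hence $\presence_\nu=(\mathbb{1}_{\{i\in\origin(\nu)\}})_i$.

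\emph{Second case} ($\nu\notin\child(\roottree(\Delta))$), which carries the real content. Fix $i$. If $i\notin\origin(\nu)$, then $\nu\notin D_i=\red(T_i)$, so $\tau_\nu$ is not a subtree of $T_i$ and the left-hand side is $0$; any $p\in\parents(\nu)$ occurring in $T_i$ would force $\nu\in D_i$, so the surviving terms $\presence_p(i)$ vanish as well — this is where Proposition~\ref{prop:origin} is used, to tie $\origin$ to membership in the $D_i$'s. If $i\in\origin(\nu)$, enumerate the vertices $v\in T_i$ with $T_i[v]\cong\tau_\nu$. Since $\nu$ is not the class of the root of $D_i$, no such $v$ is $\roottree(T_i)$, so $v$ has a parent $u$; as the parent--child structure of $\Delta$ restricted to $D_i$ is that of $D_i$, the class of $T_i[u]$ is a well-defined $\mu\in\parents(\nu)$ uniquely determined by $v$. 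Conversely each of the $\presence_\mu(i)=N_{\tau_\mu}(T_i)$ occurrences of $\tau_\mu$ in $T_i$ has exactly $L(\mu,\nu)$ children inducing $\tau_\nu$: in the unordered case $L(\mu,\nu)$ is by definition that multiplicity, and in the ordered case $L(\mu,\nu)=1$ but $\mu$ is joined to $\nu$ by as many ordered edges, to the same effect. This bijection gives $\presence_\nu(i)=\sum_{\mu\in\parents(\nu)}L(\mu,\nu)\,\presence_\mu(i)$.

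\emph{Main obstacle.} The step needing the most care is this bijection in the second case: one must check that the restriction of $\Delta$ to the vertices of $D_i$ really reproduces $D_i$ (so that parents in $T_i$ correspond to elements of $\parents(\nu)$), and that the number of child-slots of an occurrence of $\tau_\mu$ carrying $\tau_\nu$ is counted by $L(\mu,\nu)$ uniformly in the ordered and unordered settings. The first case, the vanishing subcases, and the well-foundedness of the recursion are then routine.
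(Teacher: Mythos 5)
Your proof is correct and follows essentially the same route as the paper's: the base case identifies the children of $\roottree(\Delta)$ with the roots of the $T_i$, and the inductive step counts occurrences of $(\red)^{-1}(\Delta[\nu])$ in $T_i$ as children of occurrences of $(\red)^{-1}(\Delta[p])$, weighted by $L(p,\nu)$ and summed over $p\in\parents(\nu)$. You are merely more explicit than the paper about the vanishing case $i\notin\origin(\nu)$, the bijection underlying the count, and the fact that in the ordered case the sum must account for multiple edges between $p$ and $\nu$.
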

\begin{proof}
Let $\nu$ be in $\Delta\setminus\roottree(\Delta)$. If $\nu\in \child(\roottree(\Delta))$, then $\nu$ represents the root of a tree $T_i$ (possibly several trees if there are repetitions in the data set), and therefore $\presence_\nu(i) = N_{T_i}(T_i) = 1$. Otherwise, suppose by induction that $\presence_p(i)$ is correct for all $p\in\parents(\nu)$, and any $i$. We fix $p\in \parents(\nu)$. $\nu$ appears $L(p,\nu)$ times as a child of $p$, so if $(\red)^{-1}(\Delta[p])$ appears $\presence_p(i)$ times in $T_i$, then the number of occurrences of $(\red)^{-1}(\Delta[\nu])$ in $T_i$ as a child of $(\red)^{-1}(\Delta[p])$ is $L(p,\nu)\presence_p(i)$. Summing over all $p\in\parents(\nu)$ leads $\presence_\nu(i)$ to be correct as well. 
\end{proof}

\paragraph{DAG weighting} The last thing that we lack to compute the kernel is the weight function. Remember that it is defined for trees as a function $w : \mathcal{T} \to \mathbb{R}^+$. As we only need to know the weights of the subtrees associated to vertices of $\Delta$, we define the weight function for DAG as, for any $\nu\in\Delta$, $\omega_\nu = w_{(\red)^{-1}(\Delta[\nu])}$.

\begin{remark}
In light of Propositions~\ref{prop:origin} and \ref{prop:presence}, it should be noted that both $o$ and $\presence$ can be calculated in one exploration of $\Delta$. By definition, this is also true for $\omega$.
\end{remark}

\paragraph{DAG computation of the subtree kernel} We introduce the matching subtrees function $\mathcal{M}$ as
\begin{align*}
  \mathcal{M} \colon \lbrace 1,\dots,N\rbrace^2 &\to 2^\Delta\\
  (i,j) &\mapsto \lbrace \nu \in \Delta : \lbrace i,j\rbrace \subseteq \origin(\nu)\rbrace
\end{align*}
where $2^\Delta$ is the powerset of the vertices of $\Delta$. Note that $\mathcal{M}$ is symmetric. This leads us to the following proposition.
\begin{proposition}
For any $T_i, T_j \in  \data{train}\cup \data{pred}$, we have
$$K(T_i,T_j) = \sum_{\nu \in \mathcal{M}(i,j)} \omega_\nu \textstyle\presence_\nu(i) \textstyle\presence_\nu(j). $$
\end{proposition}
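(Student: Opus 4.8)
\emph{The plan is to} re-index the defining sum \eqref{eq:K:subtrees} along the canonical correspondence between the non-root vertices of $\Delta$ and the subtrees that occur in the data set. Concretely, starting from
$$K(T_i,T_j) = \sum_{\tau\in\subtrees(T_i)\cap\subtrees(T_j)} w_\tau\,\num_{\tau}(T_i)\,\num_\tau(T_j),$$
it suffices to exhibit a bijection $\nu\mapsto\tau_\nu:=(\red)^{-1}(\Delta[\nu])$ from $\Delta\setminus\roottree(\Delta)$ onto $\bigcup_{k=1}^N\subtrees(T_k)$ which carries the index set $\mathcal{M}(i,j)$ exactly onto $\subtrees(T_i)\cap\subtrees(T_j)$, and then to translate the three factors $w_\tau$, $\num_\tau(T_i)$, $\num_\tau(T_j)$ into their DAG counterparts.

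First I would justify the bijection. Since $\Delta=\red(\data{train}\cup\data{pred})$ is by construction the DAG reduction of the super-tree $\stree$ obtained by hanging $T_1,\dots,T_N$ under an artificial root, the proper subtrees of $\stree$ are precisely the elements of $\bigcup_k\subtrees(T_k)$; because $\red$ is a lossless (one-to-one) encoding that collapses each $\ast$-isomorphism class of subtrees into a single vertex, and the subDAG rooted at that vertex is itself the DAG reduction of the corresponding subtree, the map $\nu\mapsto\tau_\nu$ is well defined on $\Delta\setminus\roottree(\Delta)$ and is a bijection onto $\bigcup_k\subtrees(T_k)$. Two identities come for free: $\omega_\nu=w_{\tau_\nu}$ by the very definition of the DAG weighting, and $\presence_\nu(k)=\num_{\tau_\nu}(T_k)$ for every $k$ by Proposition~\ref{prop:presence}.

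The one point that deserves care is matching the index sets, i.e. showing that for a non-root vertex $\nu$ one has $k\in\origin(\nu)$ if and only if $\tau_\nu\in\subtrees(T_k)$; in words, $\origin(\nu)$ is exactly the set of data trees in which $\tau_\nu$ occurs. The cleanest route uses Proposition~\ref{prop:presence} together with the recursion $\origin(\nu)=\bigcup_{p\in\parents(\nu)}\origin(p)$ of Proposition~\ref{prop:origin}: arguing by induction from the leaves upward, and using $L(p,\nu)\geq 1$, one gets $\presence_\nu(k)\geq 1\iff k\in\origin(\nu)$, while $\presence_\nu(k)=\num_{\tau_\nu}(T_k)$ is $\geq 1$ exactly when $\tau_\nu\in\subtrees(T_k)$. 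Hence $\nu\in\mathcal{M}(i,j)$, that is $\{i,j\}\subseteq\origin(\nu)$, holds precisely when $\tau_\nu\in\subtrees(T_i)$ and $\tau_\nu\in\subtrees(T_j)$, i.e. when $\tau_\nu\in\subtrees(T_i)\cap\subtrees(T_j)$.

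Finally I would substitute. Re-indexing the displayed sum along $\nu\mapsto\tau_\nu$ replaces the index set $\subtrees(T_i)\cap\subtrees(T_j)$ by $\mathcal{M}(i,j)$, and replacing $w_{\tau_\nu}$ by $\omega_\nu$ and $\num_{\tau_\nu}(T_i),\num_{\tau_\nu}(T_j)$ by $\presence_\nu(i),\presence_\nu(j)$ yields
$$K(T_i,T_j)=\sum_{\nu\in\mathcal{M}(i,j)}\omega_\nu\,\presence_\nu(i)\,\presence_\nu(j),$$
as claimed. The only subtlety worth a remark is that $\mathcal{M}(i,j)$ implicitly ranges over $\Delta\setminus\roottree(\Delta)$ since $\origin$ is not defined at the artificial root, so the term corresponding to the whole super-tree never enters the sum; everything else is routine bookkeeping.
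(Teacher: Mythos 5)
Your proposal is correct and follows essentially the same route as the paper: the paper's proof likewise reduces everything to showing that $\red$ carries $\subtrees(T_i)\cap\subtrees(T_j)$ bijectively onto $\mathcal{M}(i,j)$, with the identities $\omega_\nu=w_{(\red)^{-1}(\Delta[\nu])}$ and $\presence_\nu(k)=\num_{(\red)^{-1}(\Delta[\nu])}(T_k)$ taken directly from the definitions and Proposition~\ref{prop:presence}. Your extra inductive justification that $k\in\origin(\nu)$ iff the corresponding subtree occurs in $T_k$ is a point the paper asserts without proof, but it is the same argument made slightly more explicit.
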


\begin{proof}
By construction, it suffices to show that $\red(\subtrees(T_i)\cap\subtrees(T_j)) = \mathcal{M}(i,j)$. Let $\tau \in \subtrees(T_i)\cap\subtrees(T_j)$. Then $\red(\tau) \in \red(T_i)$ and $\red(\tau) \in \red(T_j)$. Necessarily, $\red(\tau)\in \Delta$ and $\lbrace i,j \rbrace \subseteq \origin(\red(\tau))$. So $\red(\tau) \in \mathcal{M}(i,j)$. Reciprocally, let $\nu \in \mathcal{M}(i,j)$. We denote $\tau = (\red)^{-1}(\nu)$. As $\lbrace i,j \rbrace \subseteq \origin(\nu)$, then $\tau \in \subtrees(T_i)\cap\subtrees(T_j)$.
\end{proof}

\begin{remark}
\label{rem:compl:sub}
$\mathcal{M}$ can be created in $\O(N^2\#\Delta)$ within one exploration of $\Delta$ and allows afterward computations of the subtree kernel $K(T_i,T_j)$ in $\O(\#\mathcal{M}(i,j))=\O(\min(\#D_i,\#D_j))$, which is more efficient than the $O(\#T_i+\#T_j)$ algorithm proposed by \cite{vishwanathan2002fast} (the time-complexity is announced by \citet[Section 1]{kimura2011subpath}). However, since the whole process through Algorithm~\ref{algo:dag:recompression} is costly, the global method that we propose in this paper is not faster than existing algorithms. Nonetheless, our algorithm is particularly adapted to repeated computations from the same data, e.g., for tuning parameters. Indeed, once $\mathcal{M}$ and $\Delta$ have been created, they can be stored and are ready to use. An illustration of this property is provided from experimental data in Fig.\,\ref{fig:3species:time}.
\end{remark}

\begin{remark}
The DAG computation of the subtree kernel investigated in this section relies on \cite{aiolli2006fast,da2009kernel}. Our work and the aforementioned papers are different and complementary. First, our framework is valid for both ordered and unordered trees, while these papers focus only on ordered trees. In addition, the method developed by \cite{aiolli2006fast,da2009kernel} is only adapted to exponential weights (see equations (3.12) and (6.2) from \cite{da2009kernel}). Thus, even if this algorithm is also based on DAG reduction of trees, it is less general than ours since the weight function is not constrained (see in particular Section~\ref{sec:weight} where the weight function is learned from the data). Finally, in \citet[Section~4]{aiolli2006fast}, the time-complexities are studied only from a numerical point of view, while we state theoretical results.
\end{remark}

\section{Discriminance weight function}\label{sec:weight}

For a given probability level and a given classification error, and under the stochastic model of Subsection~\ref{ss:stomodel}, we state in Subsection~\ref{ss:weight:leaves} that the sufficient size of the training data set is minimum when the weight of leaves is $0$. In other words, counting the leaves, which are the only subtrees that appear in both classes, does not provide a relevant information to the classification problem associated to this model. As mentioned in Remark~\ref{rem:link:model:discr}, we conjecture that, in a more general model, this result would be true for any subtree present in both classes. In this section, we propose to rely on this idea by defining a new weight function, learned from the data and called discriminance weight that assigns a large weight to subtrees, that help to discriminate the classes, i.e., that are present or absent in exactly one class, and a low weight otherwise.

The training data set is divided into two parts: $\data{weight}=(T_1,\dots,T_m)$ to learn the weight function, and $\data{class}=(T_{m+1},\dots,T_n)$ to estimate the Gram matrix. For the sake of readability, $\Delta$ denotes the DAG reduction of the whole data set, including $\data{weight}$, $\data{class}$ and $\data{pred}$. In addition, we assume that the data are divided into $K$ classes numbered from $1$ to $K$.

For any vertex $\nu \in \Delta\setminus \roottree(\Delta)$, we define the vector $\rho_\nu$ of length $K$ as,
$$\forall\,1\leq k\leq K,~\rho_\nu(k) =\frac{1}{\#C_k}\sum_{T_i\in C_k} \mathbb{1}_{\{i\,\in\,\origin(\nu)\}},$$
where $(C_k)_{1\leq k\leq K}$ forms a partition of $\data{weight}$ such that $T_i\in C_k$ if and only if $T_i$ is in class $k$. In other words, $\rho_\nu(k)$ is the proportion of data in class $k$ that contain the subtree $(\red)^{-1}(\Delta[\nu])$. Therefore, $\rho_\nu$ belongs to the $K$-dimensional hypercube. It should be noticed that $\rho_\nu$ is a vector of zeros as soon as $(\red)^{-1}(\Delta[\nu])$ is not a subtree of a tree of $\data{weight}$.

For any $1\leq k\leq K$, let $e_k$ ($\overline{e}_k$, respectively) be the vector of zeros with a unique $1$ in position $k$ (vector of ones with a unique $0$ in position $k$, respectively). If $\rho_\nu = e_k$, the vertex $\nu$ corresponds to the subtree $(\red)^{-1}(\Delta[\nu])$, which only appears in class $k$: $\nu$ is thus a good discriminator of this class. Otherwise, if $\rho_\nu = \overline{e}_k$, the vertex $\nu$ appears in all the classes except class $k$ and is still a good discriminator of the class. For any vertex $\nu$, $\delta_\nu$ measures the distance between $\rho_\nu$ and its nearest point of interest $e_k$ or $\overline{e}_k$,
$$\delta_\nu = \min_{k=1}^K\,\min(|\rho_\nu - e_k| , |\rho_\nu - \overline{e}_k|).$$

It should be noted that the maximum value of $\delta_\nu$ depends on the number of classes and can be larger than $1$.
If $\delta_\nu$ is small, then $\rho_\nu$ is close to a point of interest. Consequently, since $\nu$ tends to discriminate a class, its weight should be large. In light of this remark, the discriminance weight of a vertex $\nu$ is defined as $\omega_\nu = f(1 - \delta_\nu)$, where $f:(-\infty,1]\to[0,1]$ is increasing with $f(x)=0$ for $x\leq0$ and $f(1)=1$.
Fig.\,\ref{fig:smoothstep} illustrates some usual choices for $f$. In the sequel, we chose $\omega_\nu = \smstp(1-\delta_\nu)$ with the smoothstep function $\smstp:x\mapsto3x^2-2x^3$. We borrowed the smoothstep function from computer graphics \citep[p.\,30]{ebert2003texturing}, where it is mostly used to have smooth transition in a threshold function. 

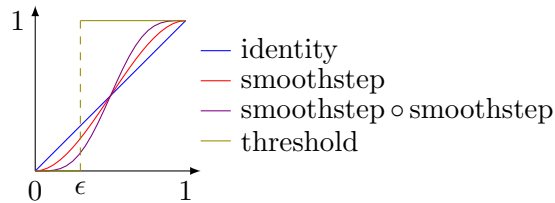
\begin{figure}[h]
\centering
\begin{minipage}[b]{0.46\textwidth}
\caption{The discriminance weight is defined by $\omega_\tau = f(1-\delta_\tau)$ where $f:(-\infty,1]\to[0,1]$ is increasing with $f(0)=0$ and $f(1)=1$. This figure presents some usual choices for $f$.}
\label{fig:smoothstep}
\end{minipage} \hfil
\begin{minipage}[b]{0.5\textwidth}
\begin{tikzpicture}[xscale=2,yscale=2]
\draw[-latex] (0,0) -- (0,1.1) node[above] {};
\draw[-latex] (0,0) -- (1.1,0) node[right] {};
\draw (0,0) node[below] {0};
\draw (0,1) node[left] {1};
\draw (1,0) node[below] {1};
\draw [domain=0:1,blue]plot (\x, \x);
\draw [domain=0:1,red]plot (\x,{ 3*\x^2-2*\x^3});
\draw [domain=0:1,violet]plot (\x,{3*(3*\x^2-2*\x^3)^2-2*(3*\x^2-2*\x^3)^3});
\draw [domain=0:0.3,olive]plot (\x, 0);
\draw [domain=0.3:1,olive]plot (\x, 1);
\draw[-,>=latex,olive,dashed] (0.3,1) -- (0.3,0) node[below,black] {$\epsilon$};
\draw[-,>=latex,blue] (1.1,0.8) -- (1.3,0.8) node[right,black] {identity};
\draw[-,>=latex,red] (1.1,0.6) -- (1.3,0.6) node[right,black] {smoothstep};
\draw[-,>=latex,violet] (1.1,0.4) -- (1.3,0.4) node[right,black] {smoothstep\,$\circ$\,smoothstep};
\draw[-,>=latex,olive] (1.1,0.2) -- (1.3,0.2) node[right,black] {threshold};
\end{tikzpicture}
\end{minipage}
\end{figure}

Since leaves appear in all the trees of the training data set, $\rho_\bullet$ is a vector of ones and thus $\delta_\bullet = 1$, which implies $\omega_\bullet = 0$. This is consistent with the result developed in Subsection~\ref{ss:weight:leaves} on the stochastic model. As aforementioned, the discriminance weight is inspired from the theoretical results established in Subsection~\ref{ss:weight:leaves} and the conjecture presented in Remark~\ref{rem:link:model:discr}. The relevance in practice of this weight function will be investigated in the sequel of the paper through two applications.

\begin{remark}\label{rem:discrim}
	The discriminance weight is defined from the proportion of data in each class that contain a given subtree, for all the subtrees appearing in the data set. It is thus required to enumerate all these subtrees. This is done, without redundancy, via the DAG reduction $\Delta$ of the data set defined and investigated in Section~\ref{sec:dag}.
	As the $m$ trees of the training data set dedicated to learning the discriminance weight are partitioned into $K$ classes, computing one $\rho_\nu$ vector is of complexity $\O(m)$. Therefore, computing all of them is in $\O(\#\Delta m)$. In addition, computing all values of $\delta_\nu$ is in $\O(\#\Delta K^2)$, as there are $2K$ Euclidean distances to be computed for each vector of length $K$. All gathered, computing the discriminance weight function has an overall complexity of $\O(\#\Delta(N+K^2))$.
\end{remark}

%%%%%%%%%%%%%%%%%%%%%%%%%%%%%%%%%%%%%%%%%%%%%%%%%%
%%%%%%%%%%%%%%%%%%%%%%%%%%%%%%%%%%%%%%%%%%%%%%%%%%
%%%%%%%%%%%%%%%%%%%%%%%%%%%%%%%%%%%%%%%%%%%%%%%%%%

\section{Real data analysis}\label{sec:data}

This section is dedicated to the application of the methodology developed in the paper to eight real data sets with various characteristics in order to show its strengths and weaknesses. The related questions are supervised classification problems.
As mentioned in Subsection~\ref{ss:dag:data set}, our approach consists in computing the Gram matrices of the subtree kernel via DAG reduction and with a new weight function called the discriminance (see Section~\ref{sec:weight}).
In particular, we aim to compare the usual exponential weight of the literature and the latter in terms of prediction capability.
In all the sequel, the Gram matrices are used as inputs to SVM algorithms in order to tackle these classification problems. We emphasize that this approach is not restricted to SVM but can be applied with other prediction algorithms.

\subsection{Preliminaries} \label{ss:prelim}

In this subsection, we introduce (i) the protocol that we have followed to investigate several data sets, together with a description of (ii) the classification metrics that we use to assess the quality of our results, (iii) an extension of DAG reduction to take into account discrete labels on vertices of trees, and (iv) the standard method to convert a markup document into a tree. It should be already noted that all the data sets presented in the sequel are composed of trees (that can be ordered or unordered, labeled or not) together with their class.

\paragraph{Protocol} For each data set, we have followed the same presentation and procedure. First, a description of the data is made notably via histograms describing the size, outdegree, height and class repartition of trees. Given the dispersion of some of these quantities, we have binned together the values that does not fit inside the interval $[Q_1-1.5 \cdot IQR; Q_3+1.5\cdot IQR]$ where $IQR = Q_3-Q_1$ is the interquartile range. Therefore, the flattened-large bins that appears in some histograms represents those outliers bins. The objective of this part is to show the wide range of data sets considered in the paper.

In a second time, we evaluated the performance of the subtree kernel on a classification task via two methods: (i) for exponential weights $\tau \mapsto \lambda^{\height(\tau)}$ we randomly split the data in thirds, two for training a SVM, and one for prediction; (ii) for discriminance weight, we also randomly split the data in thirds, one for training the discriminance weight, one for training a SVM, and the last one for prediction. We repeated $50$ times this random split for discriminance, and for different values of $\lambda$. The classification results are assessed by some metrics defined in the upcoming paragraph, and gathered in boxplots. The first application example, presented in Subsection~\ref{ss:wikipedia}, is slightly different since (i) we have worked with $50$ distinct databases, and (ii) the results have been completed with a deeper analysis of the discriminance weights, in relation with the usual weighting scheme of the literature.

\paragraph{Classification metrics} To quantify the quality of a prediction, we use four standard metrics that are accuracy, precision, recall and F-score. For a class $k$, one can have true positives $TP_k$, false positives $FP_k$, true negatives $TN_k$ and false negatives $FN_k$. In a binary classification problem, those metrics are defined as,
\begin{eqnarray*}
\text{Accuracy}(k) &=& \displaystyle\frac{TP_k+TN_k}{TP_k+FP_k+FN_k+TN_k} ,\\
\text{Precision}(k) &=& \displaystyle\frac{TP_k}{TP_k+FP_k}, \\
\text{Recall}(k) &=& \displaystyle\frac{TP_k}{TP_k+FN_k} ,\\
\text{F-score}(k) &=& \displaystyle\frac{2\,\text{Precision}(k)\, \text{Recall}(k)}{\text{Precision}(k)+ \text{Recall}(k)}.
\end{eqnarray*}
For a problem with $K>2$ classes, we adopt the macro-average approach, that is, $$\text{Metric} = \frac{1}{K}\displaystyle\sum_{k=1}^K \text{Metric}(k).$$ We used the implementation available in the \verb=scikit-learn= Python library, via the two functions \verb=accuracy_score= and \verb=precision_recall_fscore_support=.

\paragraph{DAG reduction with labels} In the sequel, some of the presented data sets are composed of labeled trees, that are trees which each vertex possesses a label. Labels are supposed to take only a finite number of different values. Two labeled $\ast$-trees are said isomorphic if (i) they are $\ast$-isomorphic, and (ii) the underlying one-to-one correspondence mapping vertices of $T_1$ into vertices of $T_2$ is such that $\forall\,v\in T_1$, $v$ and $\Phi(v)$ have the same label. The set of labeled $\ast$-trees is the quotient set of rooted trees by this equivalence relation. It should be noticed that the subtree kernel as well as DAG reduction are defined through only the concept of isomorphic subtrees. As a consequence, they can be straightforwardly extended to labeled $\ast$-trees. This formalization is an extension of the definition introduced by the authors of \cite{aiolli2006fast,da2009kernel}, as they consider only ordered labeled trees, whereas we can consider unordered labeled trees as well.

\paragraph{From a markup document to a tree} Some of the data sets come from markup documents (\verb+XML+ or \verb+HTML+ files). From such a document, one can extract a tree structure, identifying each couple of opening and closing tags as a vertex, which children are the inner tags. It should be noticed that, during this transcription, semantic data is forgotten: the tree only describes the topology of the document. Fig.\,\ref{fig:htmltree} illustrates the conversion from \verb+HTML+ to tree on a small example. Such a tree is ordered but can be considered as unordered. Finally, a tag can also be chosen as a label for the corresponding vertex in the tree.
\begin{figure}[h]
\centering

\begin{minipage}[t]{0.5\textwidth}
\centering
\definecolor{cyan}{rgb}{0.0, 1.0, 1.0}
\definecolor{red}{rgb}{1.0, 0.0, 0.0}
\definecolor{yellow}{rgb}{1.0, 1.0, 0.0}
\definecolor{olive}{rgb}{0.5019607843137255, 0.5019607843137255, 0.0}
\definecolor{blue}{rgb}{0.0, 0.0, 1.0}
\definecolor{orange}{rgb}{1.0, 0.6470588235294118, 0.0}
\definecolor{magenta}{rgb}{1.0, 0.0, 1.0}
\definecolor{green}{rgb}{0.0, 0.5019607843137255, 0.0}

\includegraphics[width =\textwidth]{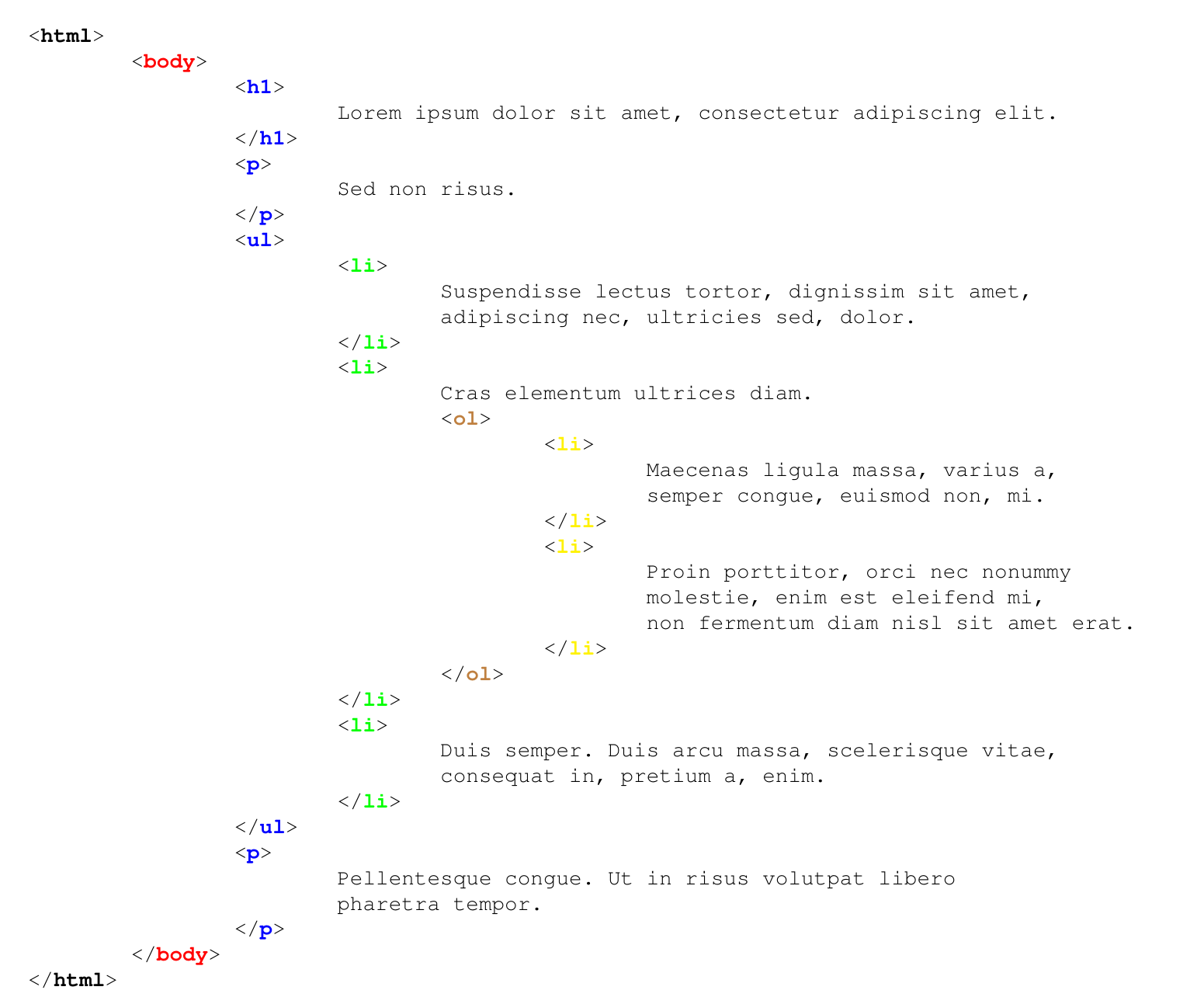}
\end{minipage}~
\begin{minipage}[t]{0.45\textwidth}
\centering

	\begin{tikzpicture}[xscale=0.1,yscale=0.2]
		\tikzstyle{fleche}=[-,>=latex,thick]
		\tikzstyle{noeud}=[circle,draw,scale=0.8]
		\tikzstyle{blanc}=[fill=white,circle,draw=none,scale=0.8]
		\tikzstyle{etiquette}=[midway]
		
		\def\DistanceInterNiveaux{6}
		\def\DistanceInterFeuilles{4}
		
		\def\NiveauA{(-5)*\DistanceInterNiveaux}
		\def\NiveauB{(-4)*\DistanceInterNiveaux}
		\def\NiveauC{(-3)*\DistanceInterNiveaux}
		\def\NiveauD{(-2)*\DistanceInterNiveaux}
		\def\NiveauE{(-1)*\DistanceInterNiveaux}
		\def\NiveauF{(0)*\DistanceInterNiveaux}
		\def\InterFeuilles{(1)*\DistanceInterFeuilles}
		
		\node[noeud,fill=black] (A) at ({(0)*\InterFeuilles},{\NiveauA}) {};
		
		\node[noeud,fill=red] (B) at ({(0)*\InterFeuilles},{\NiveauB}) {};
		\node[noeud,fill=blue] (C1) at ({(-3)*\InterFeuilles},{\NiveauC}) {};
		\node[noeud,fill=blue] (C2) at ({(-1)*\InterFeuilles},{\NiveauC}) {};
		\node[noeud,fill=blue] (C3) at ({(1)*\InterFeuilles},{\NiveauC}) {};
		\node[noeud,fill=blue] (C4) at ({(3)*\InterFeuilles},{\NiveauC}) {};
		
		\node[noeud,fill=black!30!green] (D1) at ({(0)*\InterFeuilles},{\NiveauD}) {};
		\node[noeud,fill=black!30!green] (D2) at ({(1)*\InterFeuilles},{\NiveauD}) {};
		\node[noeud,fill=black!30!green] (D3) at ({(2)*\InterFeuilles},{\NiveauD}) {};
		
		\node[noeud,fill=brown] (E) at ({(1)*\InterFeuilles},{\NiveauE}) {};
		
		\node[noeud,fill=yellow] (F1) at ({(0.5)*\InterFeuilles},{\NiveauF}) {};
		\node[noeud,fill=yellow] (F2) at ({(1.5)*\InterFeuilles},{\NiveauF}) {};
		
		\draw[fleche] (A)--(B) node[etiquette] {};
		\draw[fleche] (B)--(C1) node[etiquette] {};
		\draw[fleche] (B)--(C2) node[etiquette] {};
		\draw[fleche] (B)--(C3) node[etiquette] {};
		\draw[fleche] (B)--(C4) node[etiquette] {};
		\draw[fleche] (C3)--(D1) node[etiquette] {};
		\draw[fleche] (C3)--(D2) node[etiquette] {};
		\draw[fleche] (C3)--(D3) node[etiquette] {};
		\draw[fleche] (D2)--(E) node[etiquette] {};
		\draw[fleche] (E)--(F1) node[etiquette] {};
		\draw[fleche] (E)--(F2) node[etiquette] {};
		\end{tikzpicture}
\end{minipage}

\cprotect\caption{Underlying ordered tree structure (right) present in a \verb+HTML+ document (left). Each level in the tree is colored in the same way as the corresponding tags in the document. Natural order from top to bottom in the \verb|HTML| document corresponds to left-to-right order in the tree.}
	\label{fig:htmltree}
\end{figure}

\subsection{Prediction of the language of a Wikipedia article from its topology}\label{ss:wikipedia}

\paragraph{Classification problem and results} Wikipedia pages are encoded in \verb+HTML+ and, as aforementioned, can therefore be converted into trees. In this context, we are interested in the following question: does the (ordered or unordered) topology of a Wikipedia article (as an \verb+HTML+ page) contain the information of the language in which it has been written? This can be formulated as a supervised classification problem: given a training data set composed of the tree structures of Wikipedia articles labeled with their language, is a prediction algorithm able to predict the language of a new data only from its topology? The interest of this question is discussed in Remark~\ref{rem:interest:wiki}.

In order to tackle this problem, we have built $50$ databases of $480$ trees each, converted from Wikipedia articles as follows. Each of the databases is composed of $4$ data sets:
\begin{itemize}
\item a data set to predict $\data{pred}$ made of $120$ trees;
\item a small train data set $\data[small]{train}$ made of $40$ trees;
\item a medium train data set $\data[medium]{train}$ made of $120$ trees;
\item and a large train data set $\data[large]{train}$ made of $200$ trees.
\end{itemize}

For each data set, and each language, we picked Wikipedia articles at random using the Wikipedia API\footnote{\url{https://www.mediawiki.org/wiki/API:Random} (last accessed in April 2020)}, and converted them into unlabeled trees. It should be noted that the probability to have the same article in at least two different languages is extremely low. For each database, we aim at predicting the language of the trees in $\data{pred}$ using a SVM algorithm based on the subtree kernel for ordered and unordered trees, and trained with $\data[size]{train}$ where $\text{size}\in\lbrace \text{small}, \text{medium}, \text{large}\rbrace$. Fig.\,\ref{fig:wikipedia:repartition} provides the description of one typical database. All trees seem to share common characteristics, regardless of their class. 

\begin{figure}[h]
    \centering
    \includegraphics[width = 0.48\textwidth]{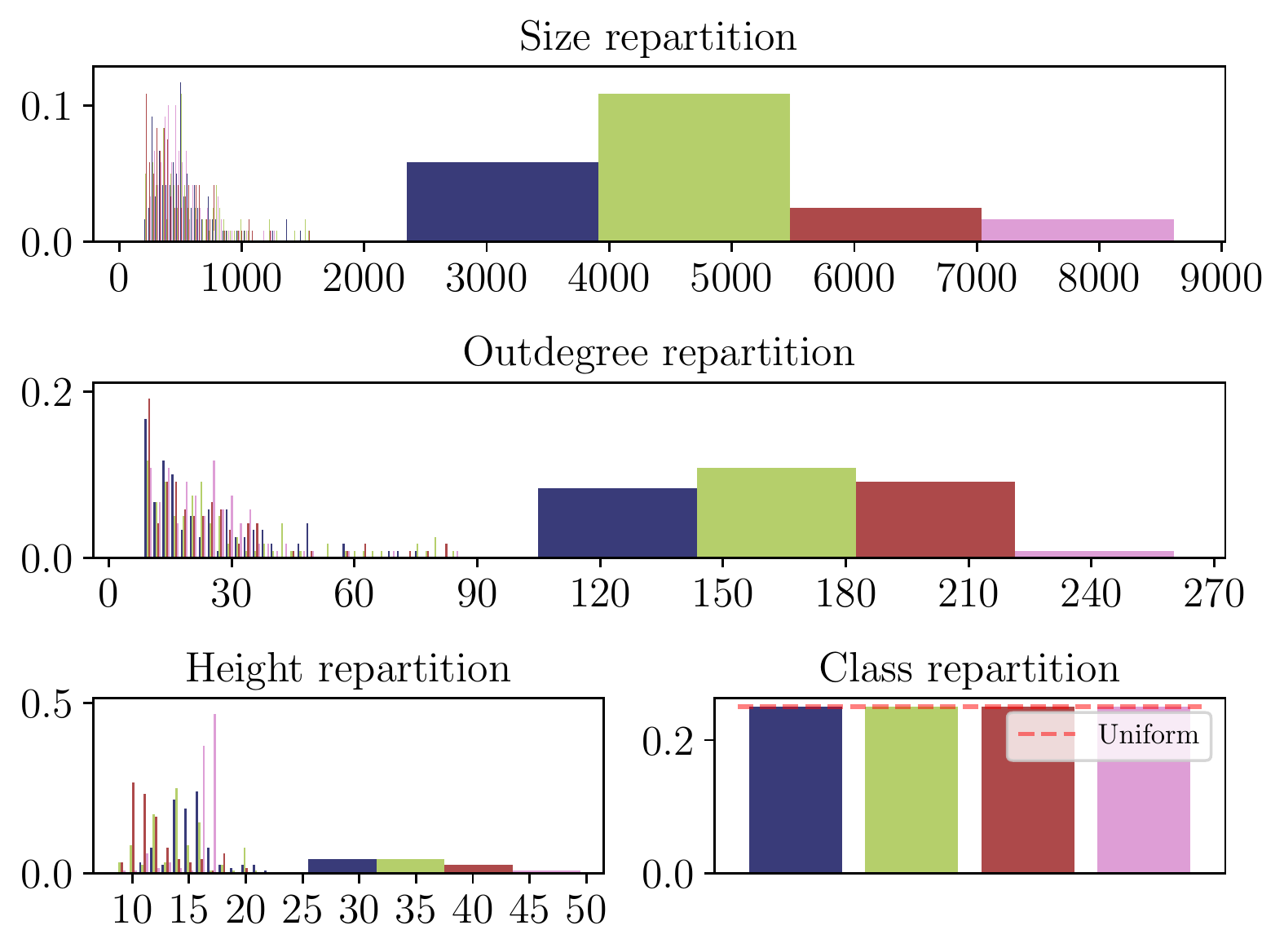}
    \caption{Description of a Wikipedia data set (480 trees).}
    \label{fig:wikipedia:repartition}
\end{figure}

Classification results over the $50$ databases are displayed in Fig.\,\ref{fig:wikipedia:results}. Discriminance weighting achie\-ves highly better results than exponential weighting, with all metrics greater than $90\%$ on average from only $200$ training data. This points out that the language information exists in the structure of Wikipedia pages, whether they are considered as ordered or unordered trees, unlike what intuition as well as subtree kernel with exponential weighting suggest. It should be added that the variance of all metrics seem to decrease with the size of the training data set when using discriminance.

\begin{figure}[h]
    \centering
        \includegraphics[width = 0.48\textwidth]{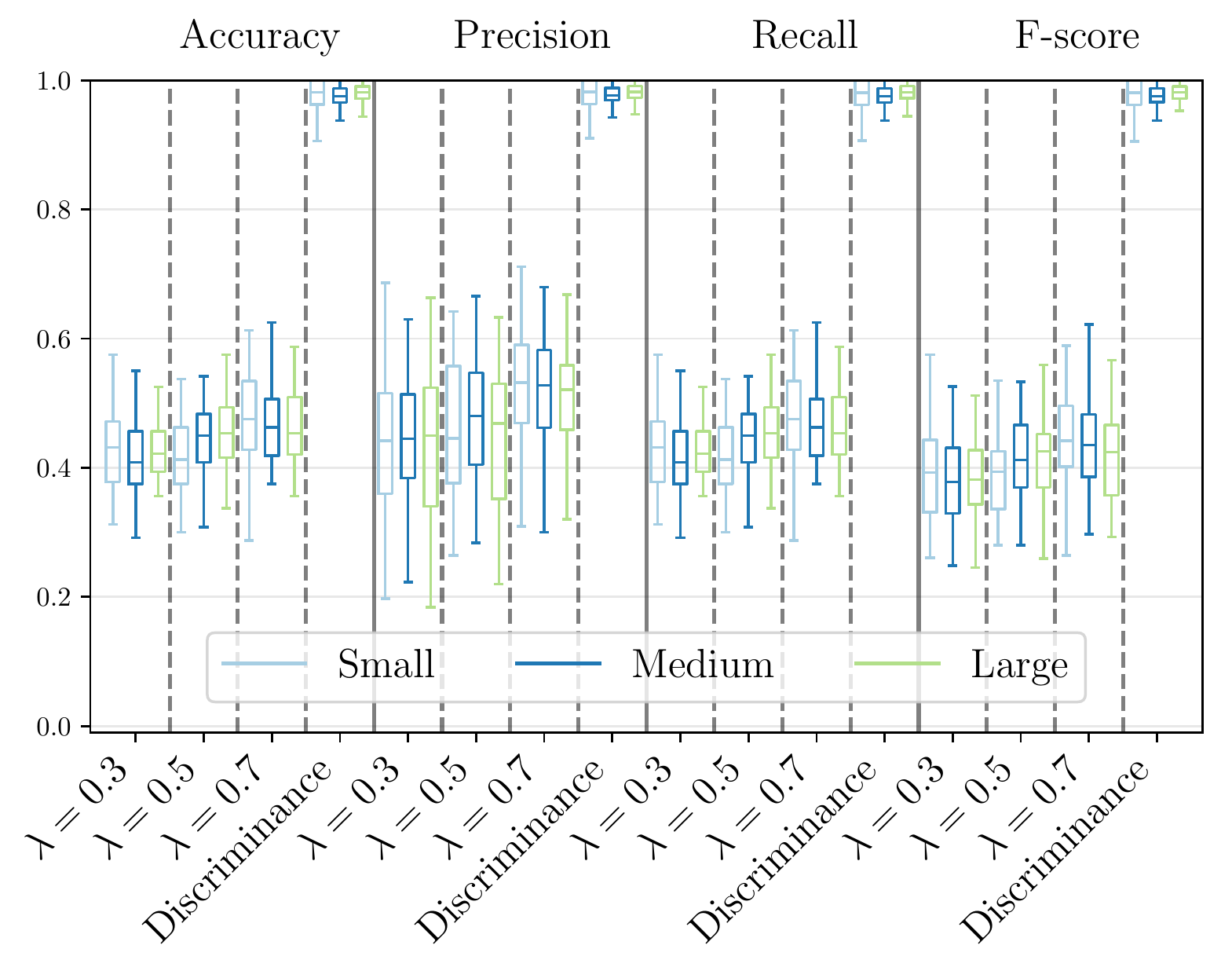}\quad
        \includegraphics[width = 0.48\textwidth]{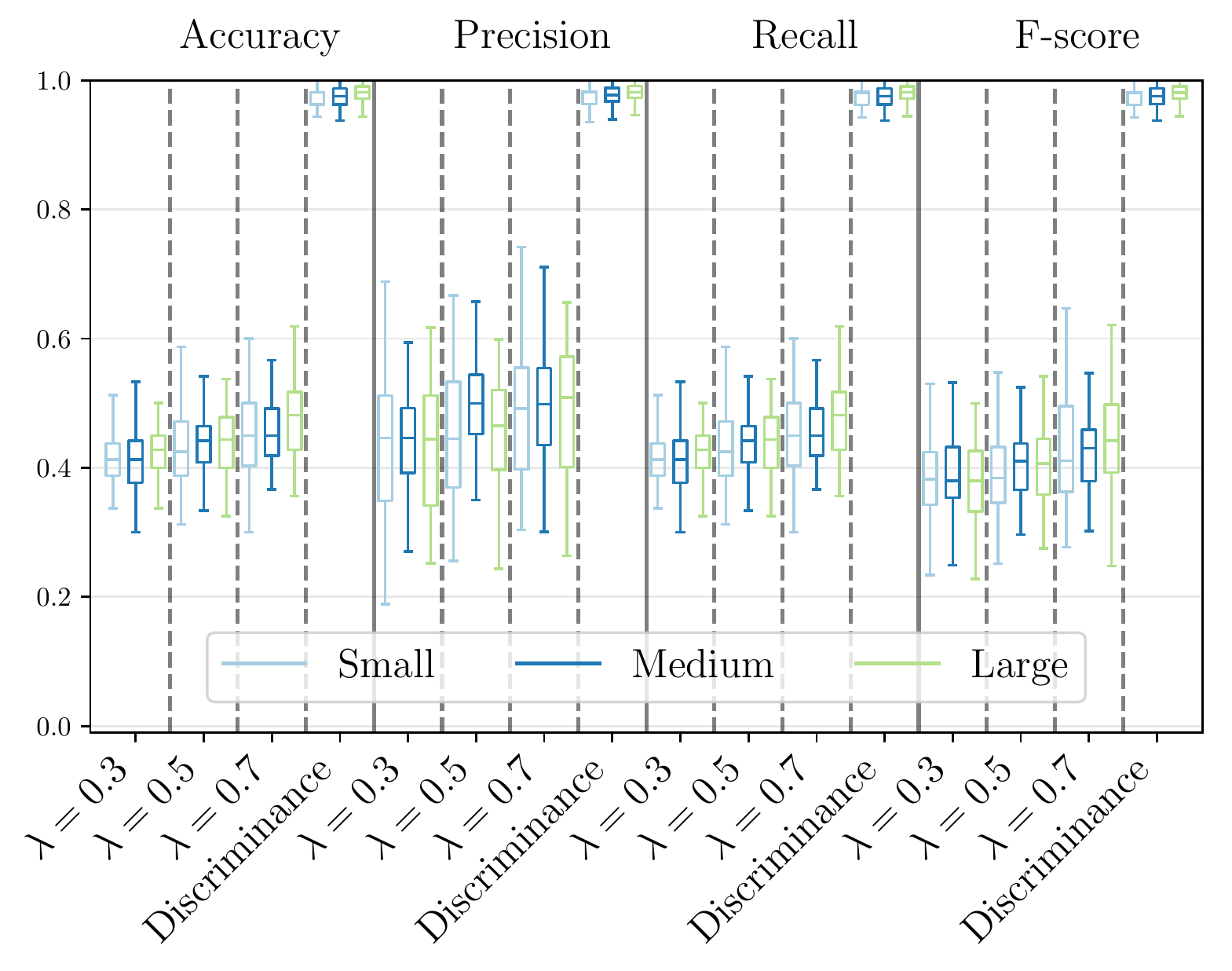}
    \caption{Classification results for the $50$ Wikipedia databases as ordered (left) and unordered (right) trees. $\lambda$ values stands for exponential decay weight of the form $\tau\mapsto\lambda^{\height(\tau)}$. The colors of the boxplot indicates, for each $\text{size}\in\lbrace \text{small}, \text{medium}, \text{large}\rbrace$, the results obtained for the classification of $\data{pred}$ from $\data[size]{train}$.}
    \label{fig:wikipedia:results}
\end{figure}

These numerical results show the great interest of the discriminance weight, in particular with respect to an exponential weight decay. Nevertheless, it should be compelling in this context to understand the classification rule learned by the algorithm. Indeed, this could lead to explain how the information of the language is present in the topology of the article.

\paragraph{Comprehensive learning and data visualization} When a learning algorithm is efficient for a given prediction problem, it is interesting to understand which features are significant. In the subtree kernel, the features are the subtrees appearing in all the trees of all the classes.
Looking at \eqref{eq:K:subtrees}, the contribution of any subtree $\tau$ to the subtree kernel with discriminance weighting is the product of two terms: the discriminance weight $w_\tau$ quantifies the ability of $\tau$ to discriminate a class, while $\kappa(\num_\tau(T_1),\num_\tau(T_2))$ evaluates the similarity between $T_1$ and $T_2$ with respect to $\tau$ through the kernel $\kappa$. As explained in Section~\ref{sec:weight}, if $w_\tau$ is close to $1$, $\tau$ is an important feature in the prediction problem.

As shown in Section~\ref{sec:dag}, DAG reduction provides a tool to compress a data set without loss. We recall that each vertex of the DAG represents a subtree appearing in the data. Consequently, we propose to visualize the important features on the DAG of the data set where the radius of the vertices is an increasing function of the discriminance weight. In addition, each vertex of the DAG can be colored as the class that it helps to discrimine, either positively (the vertex of the DAG corresponds to a subtree that is present almost only in the trees of this class), or negatively. This provides a visualization at a glance of the whole data set that highlights the significant features for the underlying classification problem. We refer the reader to Fig.\,\ref{fig:wikipedia:visu} for an application to one of our data sets. Thanks to this tool, we have remarked that the subtree corresponding to the License at the bottom of any article highly depends on the language, and thus helps to predict the class.

\afterpage{%
\begin{landscape}
\begin{figure}[h]
\centering
\includegraphics[width =1.27\textheight]{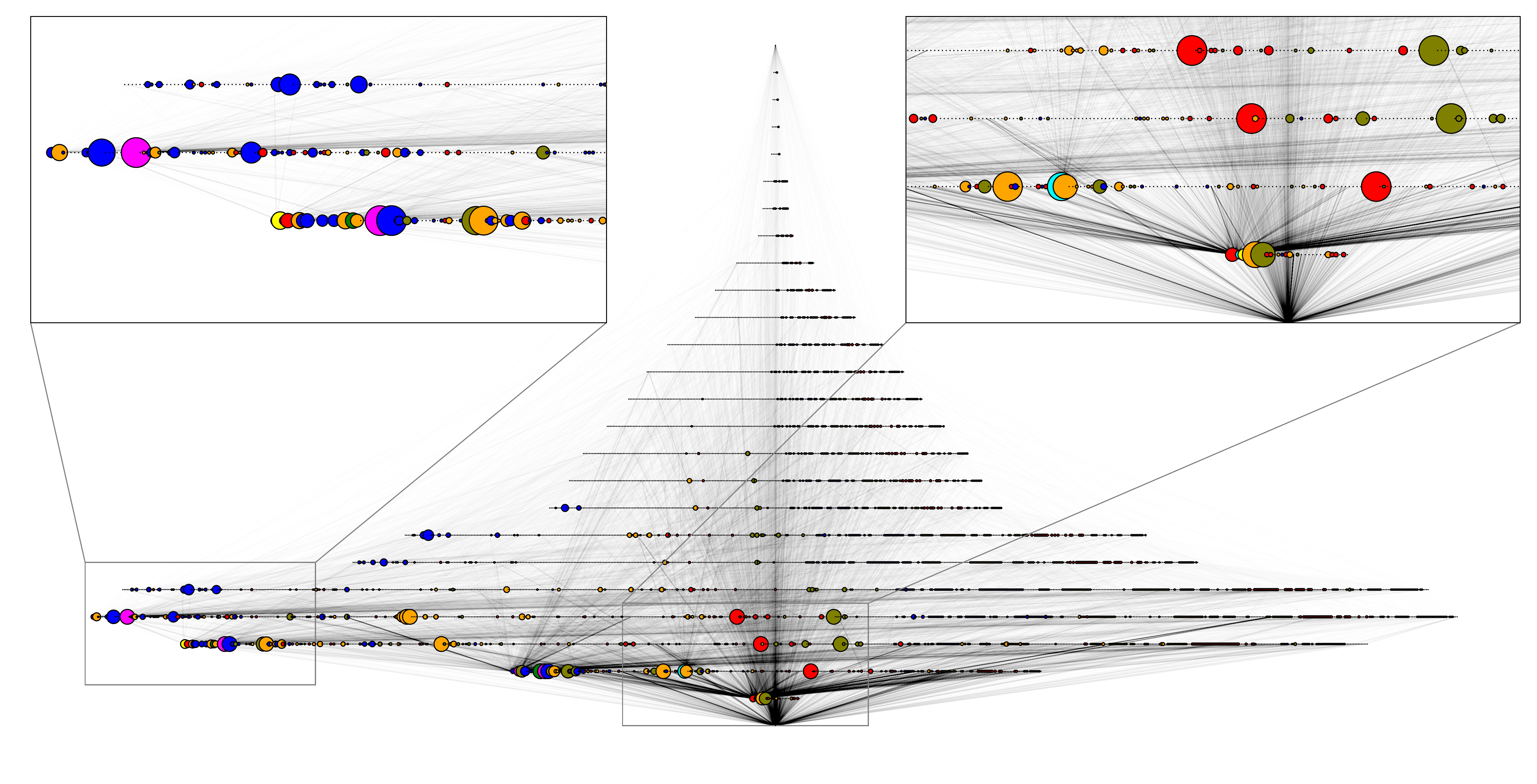}
\begin{tikzpicture}

\definecolor{cyan}{rgb}{0.0, 1.0, 1.0}
\definecolor{red}{rgb}{1.0, 0.0, 0.0}
\definecolor{yellow}{rgb}{1.0, 1.0, 0.0}
\definecolor{olive}{rgb}{0.5019607843137255, 0.5019607843137255, 0.0}
\definecolor{blue}{rgb}{0.0, 0.0, 1.0}
\definecolor{orange}{rgb}{1.0, 0.6470588235294118, 0.0}
\definecolor{magenta}{rgb}{1.0, 0.0, 1.0}
\definecolor{green}{rgb}{0.0, 0.5019607843137255, 0.0}

\def\x{3}

\draw[fill=cyan]  (0,0) circle (5pt) node[right,xshift=5pt] {en: absence};
\draw[fill=blue]  (0,-0.7) circle (5pt) node[right,xshift=5pt] {en: presence};

\draw[fill=green]  (\x,0) circle (5pt) node[right,xshift=5pt] {es: absence};
\draw[fill=olive]  (\x,-0.7) circle (5pt) node[right,xshift=5pt] {es: presence};

\draw[fill=magenta]  (2*\x,0) circle (5pt) node[right,xshift=5pt] {de: absence};
\draw[fill=red]  (2*\x,-0.7) circle (5pt) node[right,xshift=5pt] {de: presence};

\draw[fill=yellow]  (3*\x,0) circle (5pt) node[right,xshift=5pt] {fr: absence};
\draw[fill=orange]  (3*\x,-0.7) circle (5pt) node[right,xshift=5pt] {fr: presence};
\end{tikzpicture}
\caption{Visualization of one data set $\data{} = \data[medium]{train}\cup\data{pred}$ of unordered trees among the $30$ Wikipedia databases. Each vertex $\nu\in\red(\data{})$ is scaled according to $\smstp(1-\delta_\nu)$ so that the largest vertices are those that best discriminate the different classes. For each $\nu$, we find the class $k$ such that $\rho_\nu$ has minimal distance to either $e_k$ or $\overline{e}_k$. If it is $e_k$, we say that $\nu$ discriminates by its presence, and if it is $\overline{e}_k$, $\nu$ discriminates by its absence. We color $\nu$ following this distinction according to the legend, where ``en'' is for English language, ``de'' for German, ``fr'' for French, and ``es'' for Spanish.}
\label{fig:wikipedia:visu}
\end{figure}
\end{landscape}
}

\paragraph{Distribution of discriminance weights} To provide a better understanding of our results, we have analyzed in Fig.\,\ref{fig:wikipedia:analysis} the distribution of discriminance weights of one of our large training data sets. It shows that the discriminance weight behaves on average as a shifted exponential. Considering the great performance achieved by the discriminance weight, this illustrates that exponential weighting presented in the literature is indeed a good idea, when setting $w_\bullet=0$ as shown in Subsection~\ref{ss:weight:leaves} or suggested in \citep[6 Experimental results]{vishwanathan2002fast}. However, a closer look to the distribution in Fig.\,\ref{fig:wikipedia:analysis} (left) reveals that important features in the kernel are actually outliers: relevant information is both far from the average behavior and scarce. To a certain extent and regarding these results, discriminance weight is the second order of the exponential weight.

\begin{figure}[h]
\centering
\includegraphics[width = 0.9\textwidth]{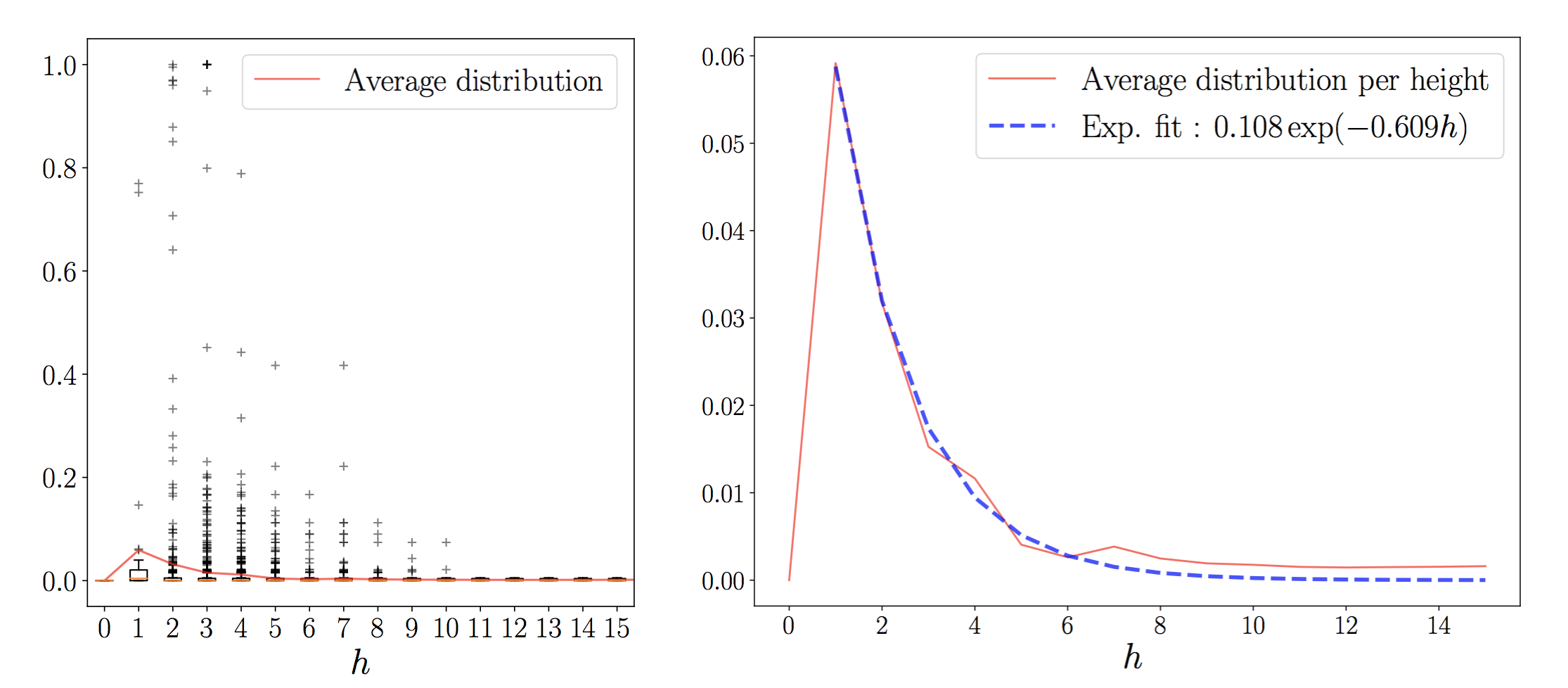}
\caption{Estimation of the distribution of the discriminance weight function $h \mapsto \lbrace w_\nu : \height(\nu) = h, \nu \in \red(\data{}) \rbrace$ from one large training Wikipedia data set of unordered trees (left) and fit of its average behavior (in \textcolor{red}{red}) to an exponential function (in \textcolor{blue}{blue}).
All ordered and unordered data sets show a similar behavior.}
\label{fig:wikipedia:analysis} 
\end{figure}

\begin{remark}\label{rem:interest:wiki}
The classification problem considered in this subsection may seem unrealistic as ignoring the text information is obviously counterproductive in the prediction of the language of an article. Nevertheless, this application example is of interest for two main reasons. First, this prediction problem is difficult as shown by the bad results obtained from the subtree kernel with exponential weights (see Fig.\,\ref{fig:wikipedia:results}). As highlighted in Fig.\,\ref{fig:wikipedia:visu} and \ref{fig:wikipedia:analysis} (left), the subtrees that can discriminate the classes are very unfrequent and diverse (in terms of size and structure), so difficult to be identified. On a different level, as Wikipedia has a very large corpus of pages, it provides a practical tool to test our algorithms and investigate the properties of our approach. Indeed, we can virtually create as many different data sets as we want by randomly picking articles, ensuring that we avoid overfitting.
\end{remark}

\subsection{Markup documents data sets}\label{ss:markup}

We present and analyze in this subsection three data sets obtained from markup documents. 

\paragraph{INEX 2005 and 2006} These data sets originate from the INEX competition \citep{denoyer2007report}. There are \verb=XML= documents, that we have been considering as ordered and unordered in our experiments. INEX 2005 is made of 9\,630 documents arranged in 11 classes, whereas INEX 2006 has 18 classes for 12\,107 documents. For INEX 2005, classes can be split into two groups of trees with similar characteristics, as shown in Fig.\,\ref{fig:inex:repartition} (left). However, inside each group, all trees are alike. In the case of INEX 2006, no special group seems to emerge from topological characteristics of the data, as pointed out in Fig.\,\ref{fig:inex:repartition} (right).

The classification results are depicted in Fig.\,\ref{fig:inex:results}, for both data sets, and with trees considered successively as ordered and unordered. For INEX 2005, both exponential decay and discriminance achieve similar good performance. However, for INEX 2006, neither of them are able to achieve significant results. Actually, discriminance performs slightly worse than exponential decay. From these results we deduce that subtrees do not seem to form the appropriate substructure to capture the information needed to properly classify the data.

\begin{figure}[h]
    \centering
    \includegraphics[width = 0.48\textwidth]{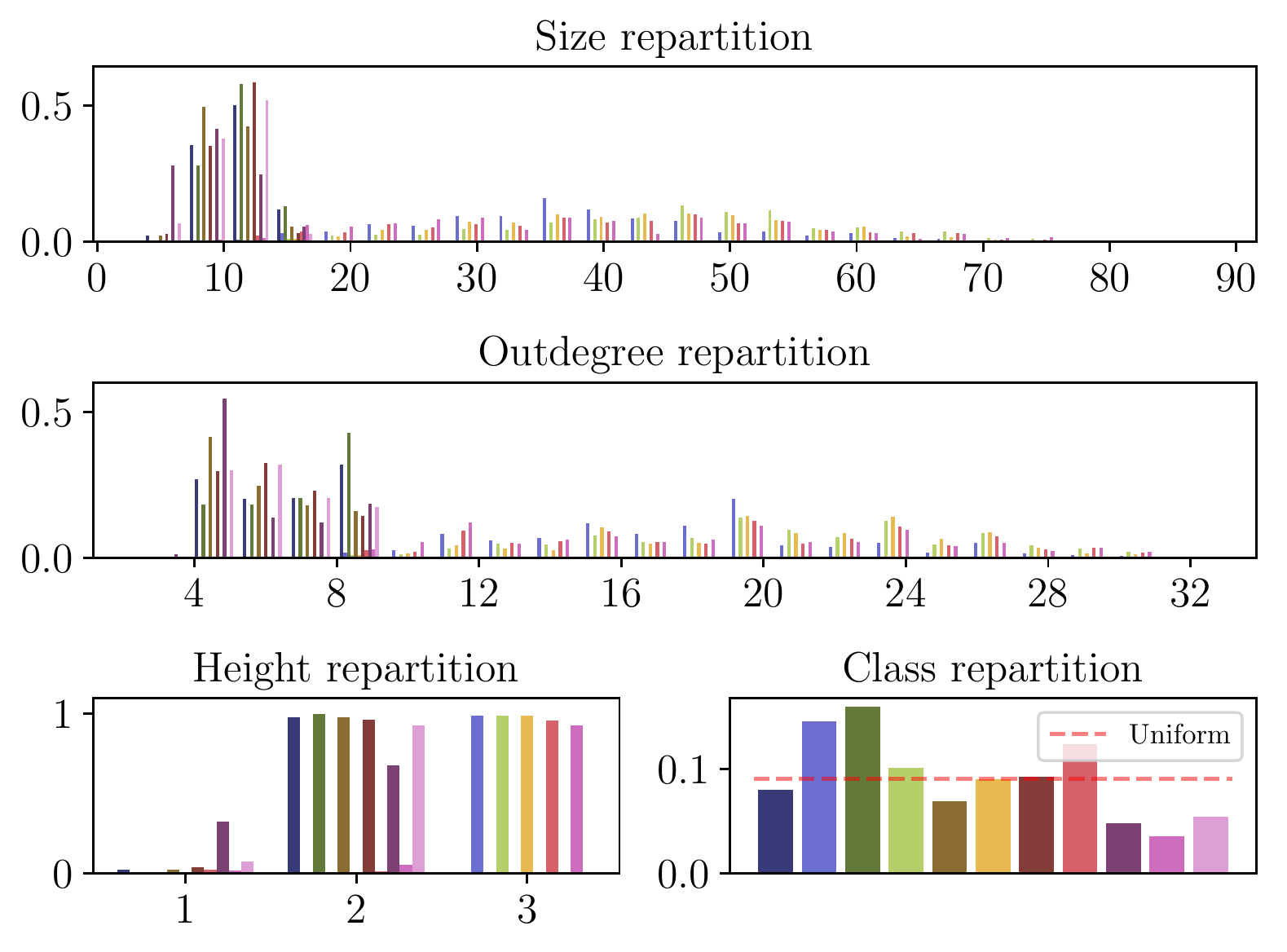}\quad
    \includegraphics[width = 0.48\textwidth]{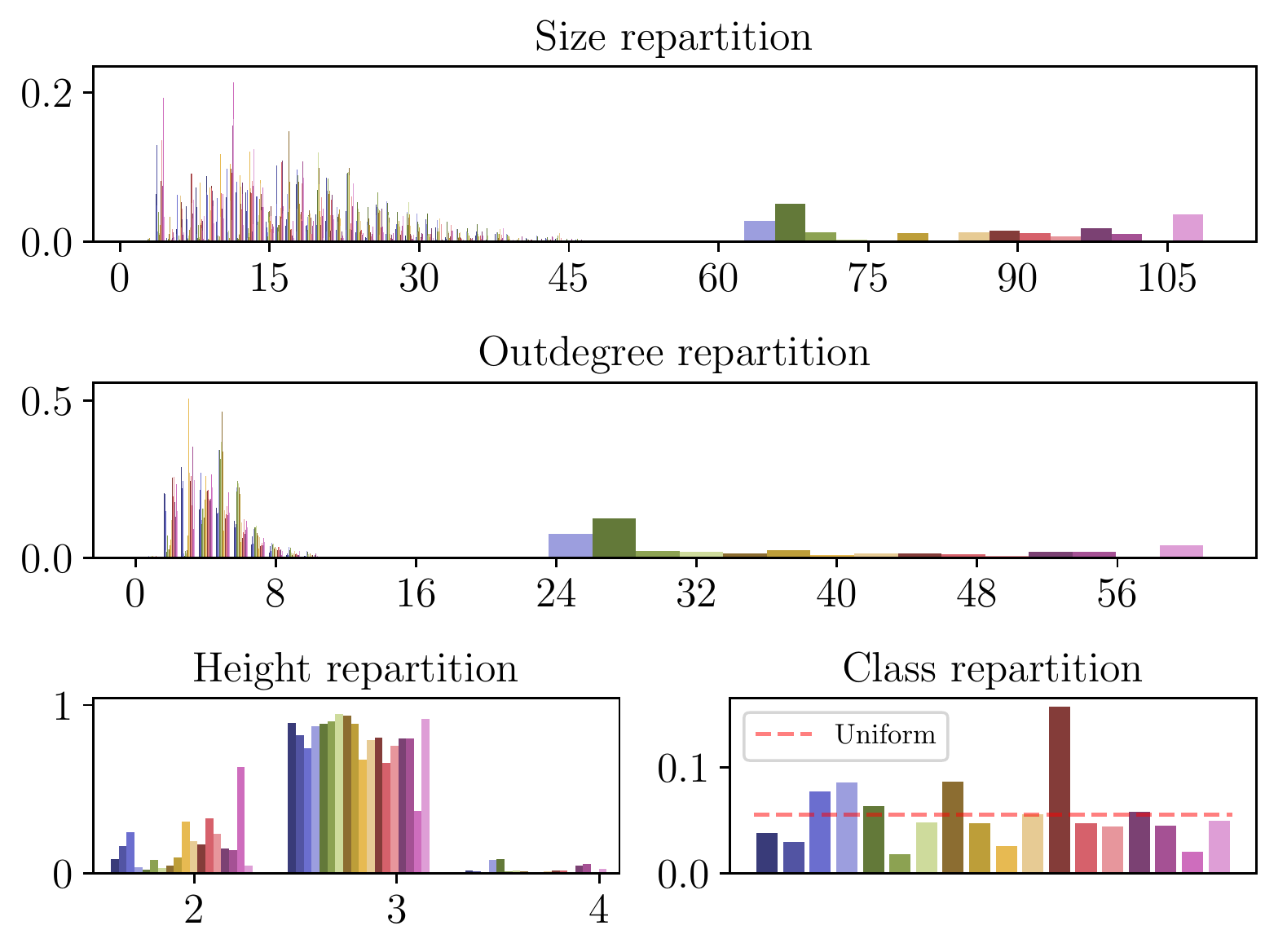}
    \caption{Description of INEX 2005 (9\,630 trees, left) and INEX 2006 (12\,107 trees, right) data sets.}
    \label{fig:inex:repartition}
\end{figure}

\begin{figure}[h]
    \centering
        \includegraphics[width = 0.48\textwidth]{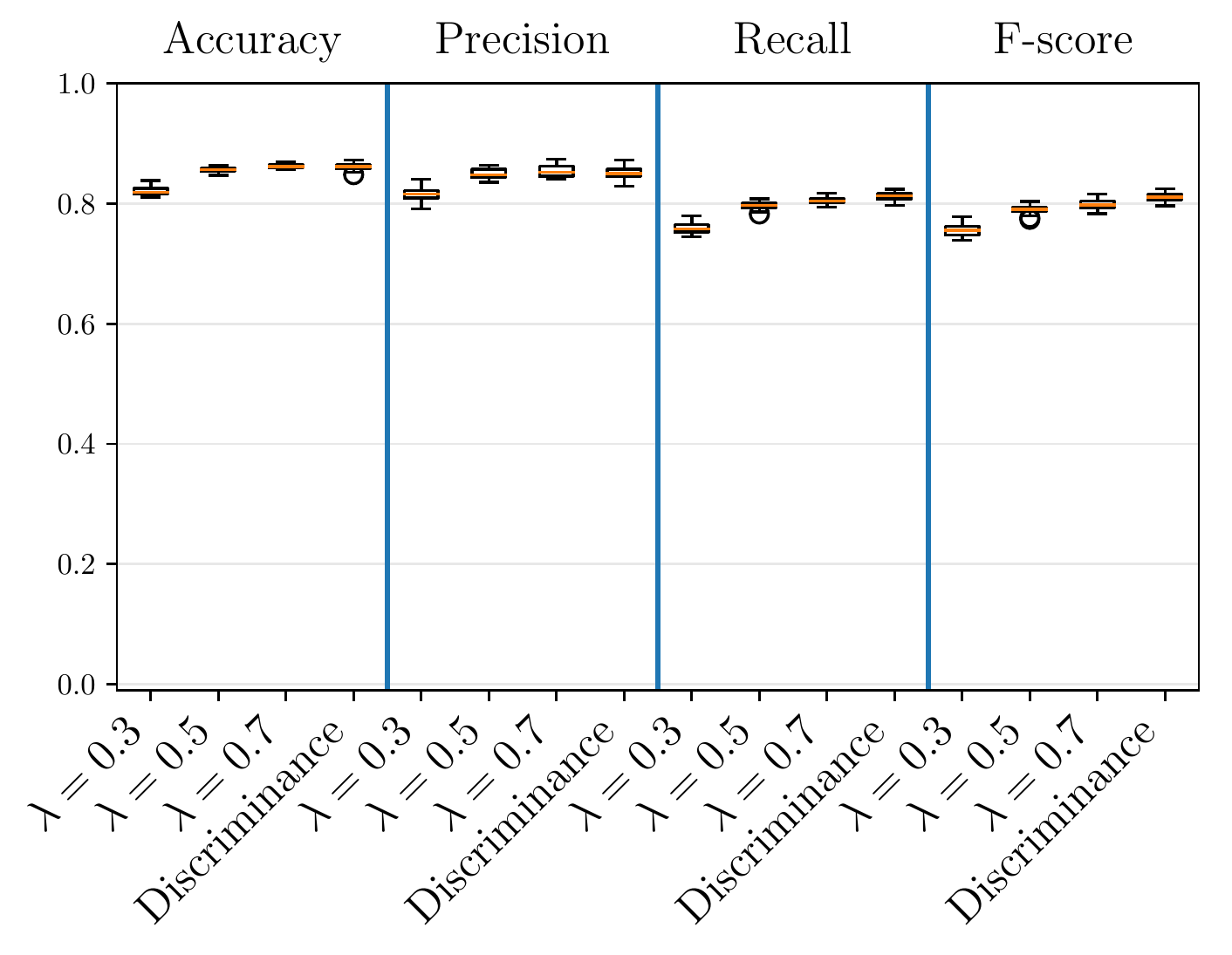}\quad
        \includegraphics[width = 0.48\textwidth]{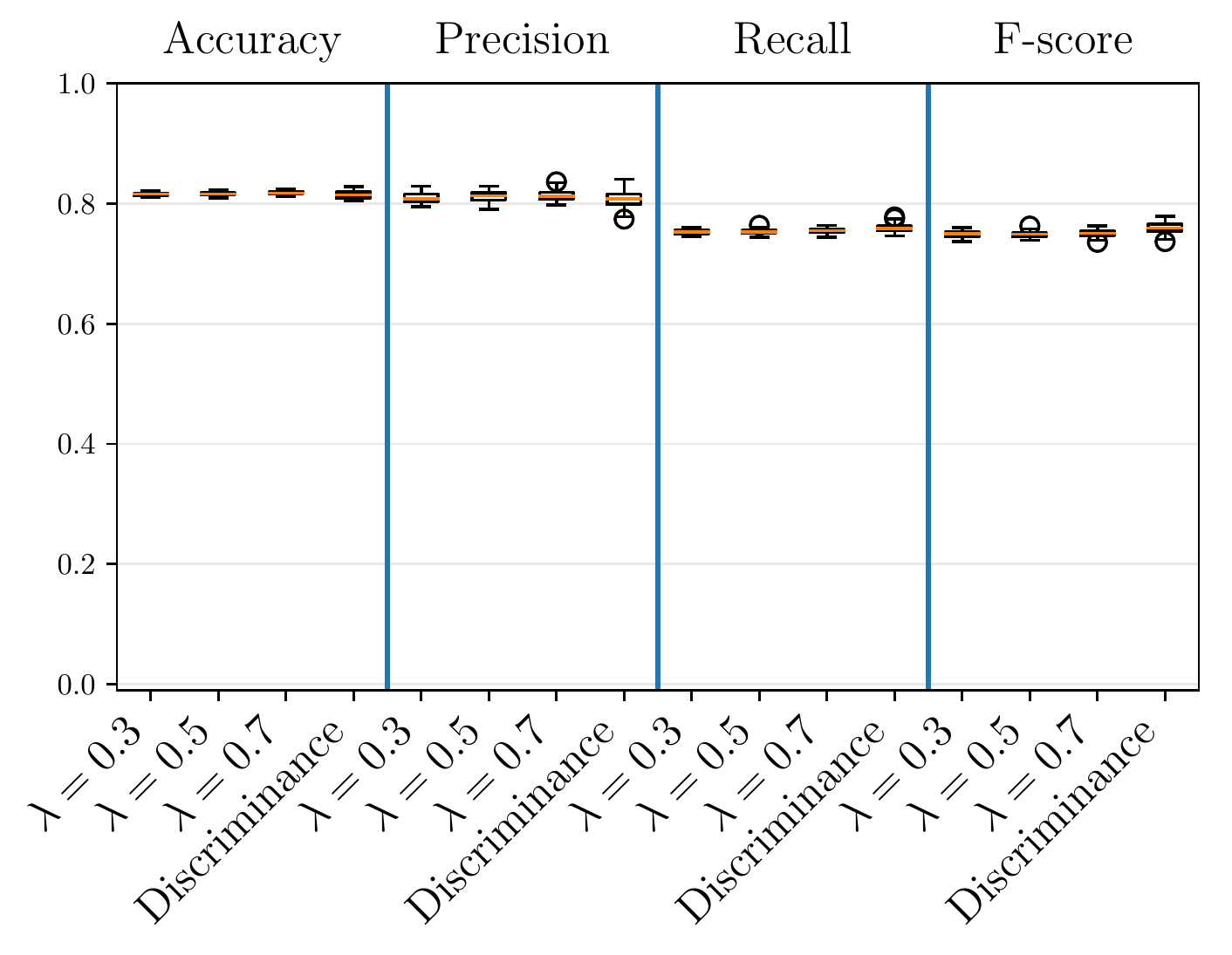}\\
        \includegraphics[width = 0.48\textwidth]{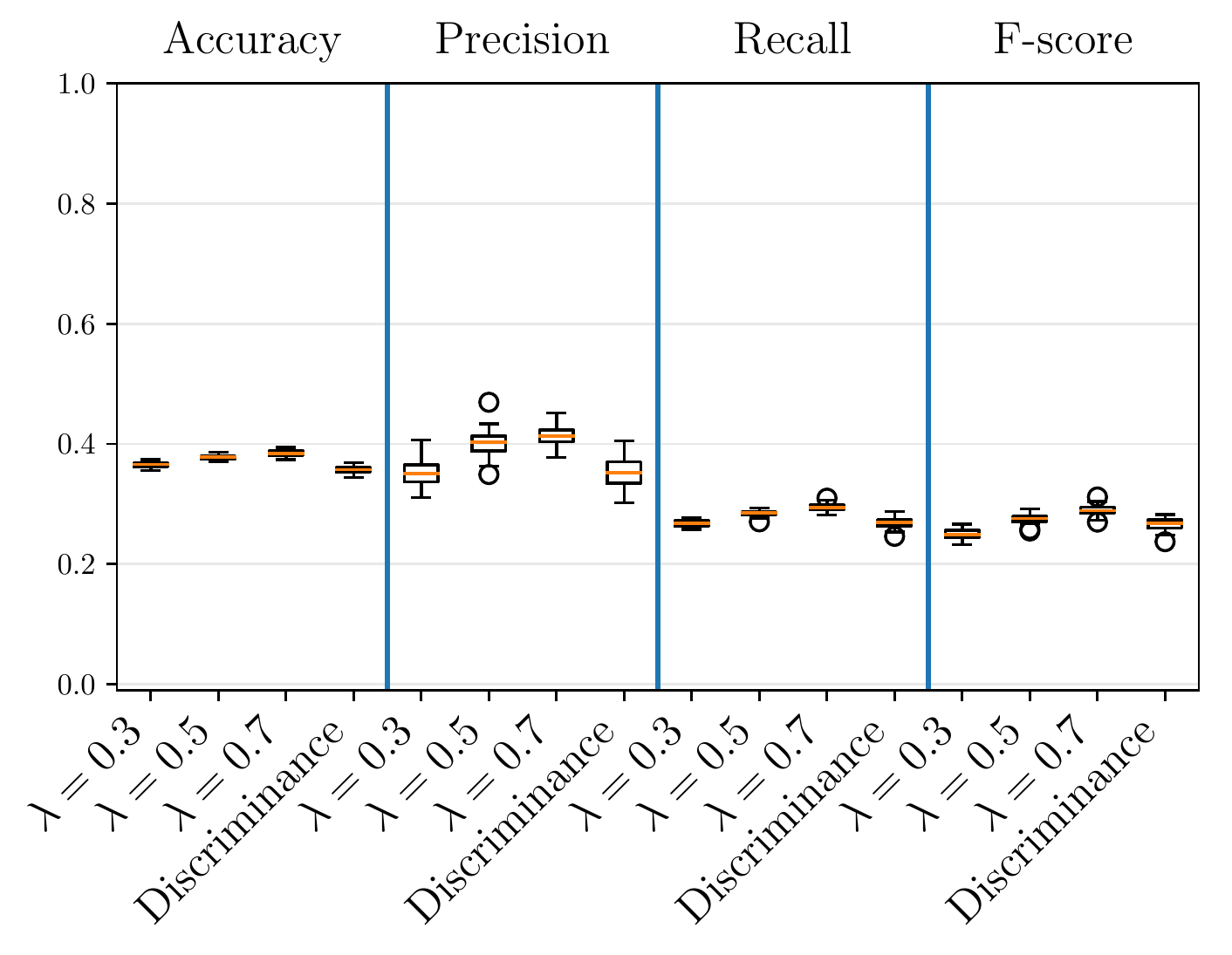}\quad
        \includegraphics[width = 0.48\textwidth]{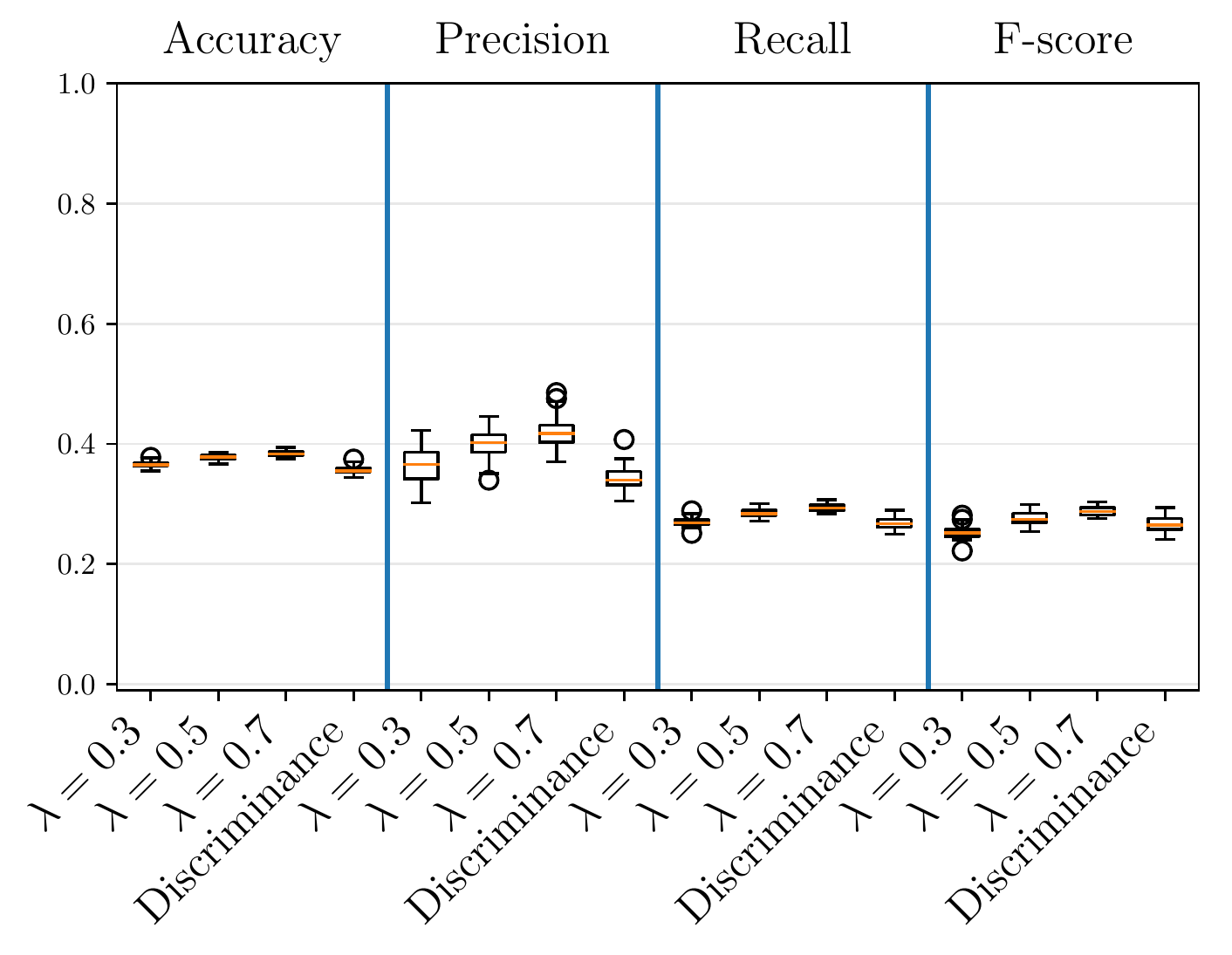}
     \caption{Classification results for INEX 2005 (top) and INEX 2006 (bottom) as ordered (left) and unordered (right) trees.}
    \label{fig:inex:results}
\end{figure}

\paragraph{Videogame sellers} We manually collected, for two major websites selling videogames\footnote{\url{https://store.steampowered.com} and \url{https://www.gog.com} (last accessed in April 2020)}, the URLs of the top 100 best-selling games, and converted them into ordered labeled trees. As webpages might seem similar to some extent, the trees are actually very different, as highlighted in Fig.\,\ref{fig:videogames:repartition}. We found that the subtree kernel retrieves this information as, for both exponential decay and discriminance weights, we achieved 100\% of correct classifications in all our tests.

\begin{figure}[h]
    \centering
    \includegraphics[width = 0.48\textwidth]{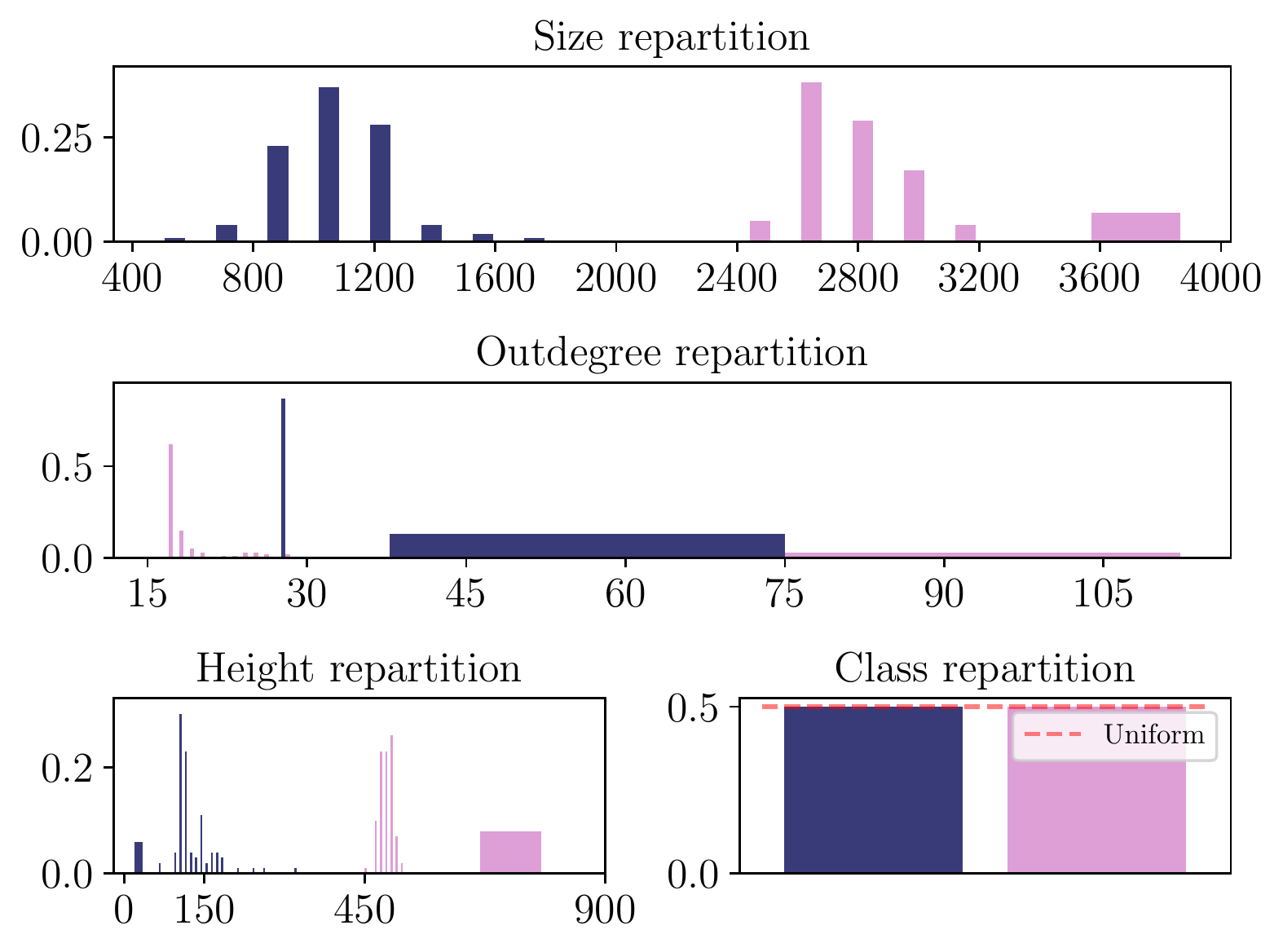}
    \caption{Description of the videogame sellers data set (200 trees).}
    \label{fig:videogames:repartition}
\end{figure}

\subsection{Biological data sets}\label{ss:biol}

In this subsection, three data sets from the literature are analyzed, all related to biological topics.

\paragraph{Vascusynth} The Vascusynth data set from \cite{cmig2010,ij2011} is composed of 120 unordered trees that represent blood vasculatures with different bifurcations numbers. In a tree, each vertex has a continuous label describing the radius $r$ of the corresponding vessel. We have discretized these continuous labels in three categories: small if $r<0.02$, medium if $0.02\leq r<0.04$ and large if $r\geq0.04$ (all values are in arbitrary unit). We split up the trees into three classes, based on their bifurcation number. Based on Fig.\,\ref{fig:vascu} (left), we can distinguish between the three classes by looking only at the size of trees. Contrary to the videogame sellers data set that had the same property, the classification does not achieve 100\% of good classification, as depicted in Fig.\,\ref{fig:vascu} (right). On average, discriminance performs better than the other weights, despite having a larger variance. This is probably due to the small size of the data set, as the discriminance is learned only with around $13$ trees per class.

\begin{figure}[h]
        \centering
        \includegraphics[width = 0.48\textwidth]{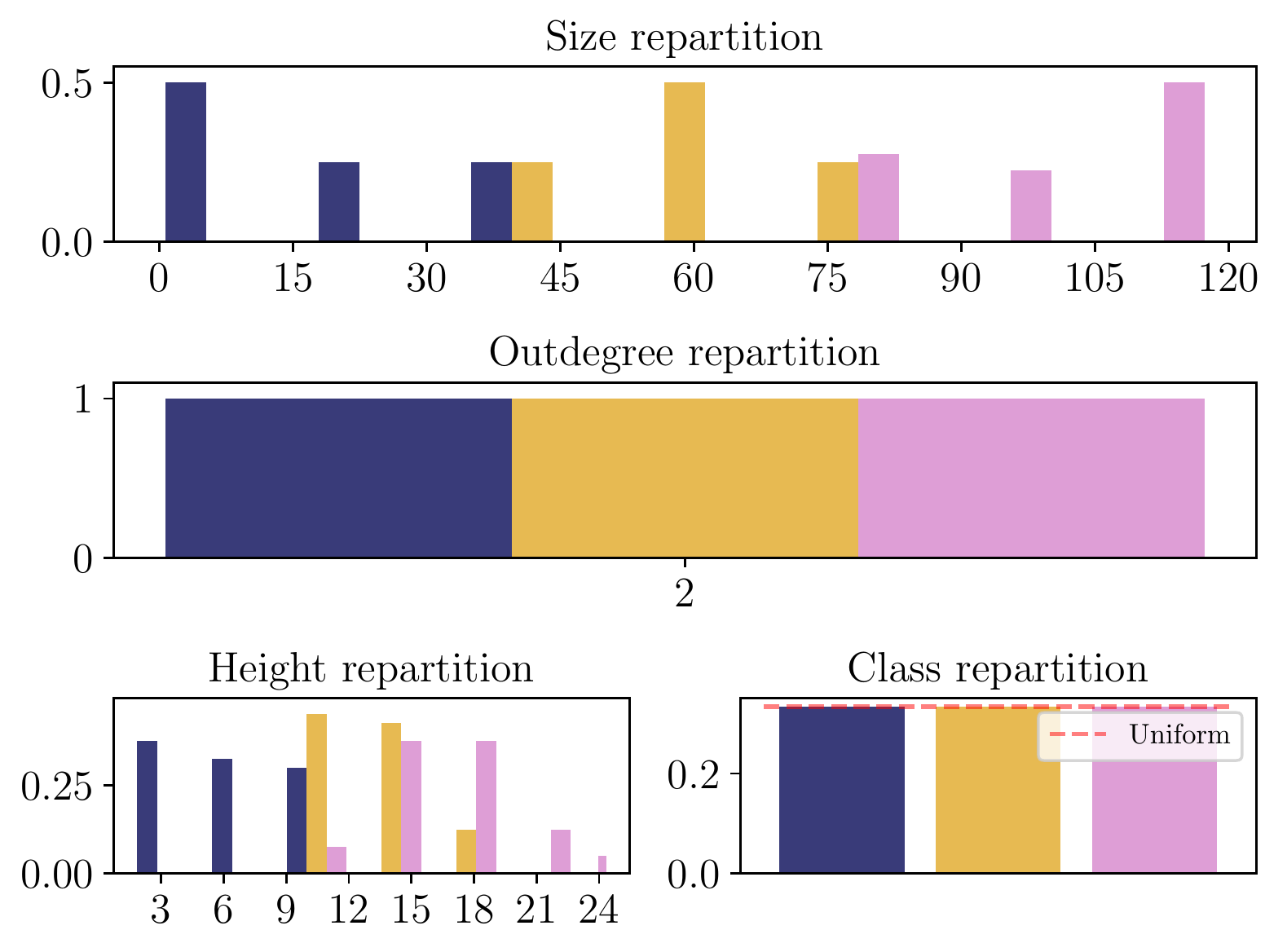}\quad
        \includegraphics[width = 0.48\textwidth]{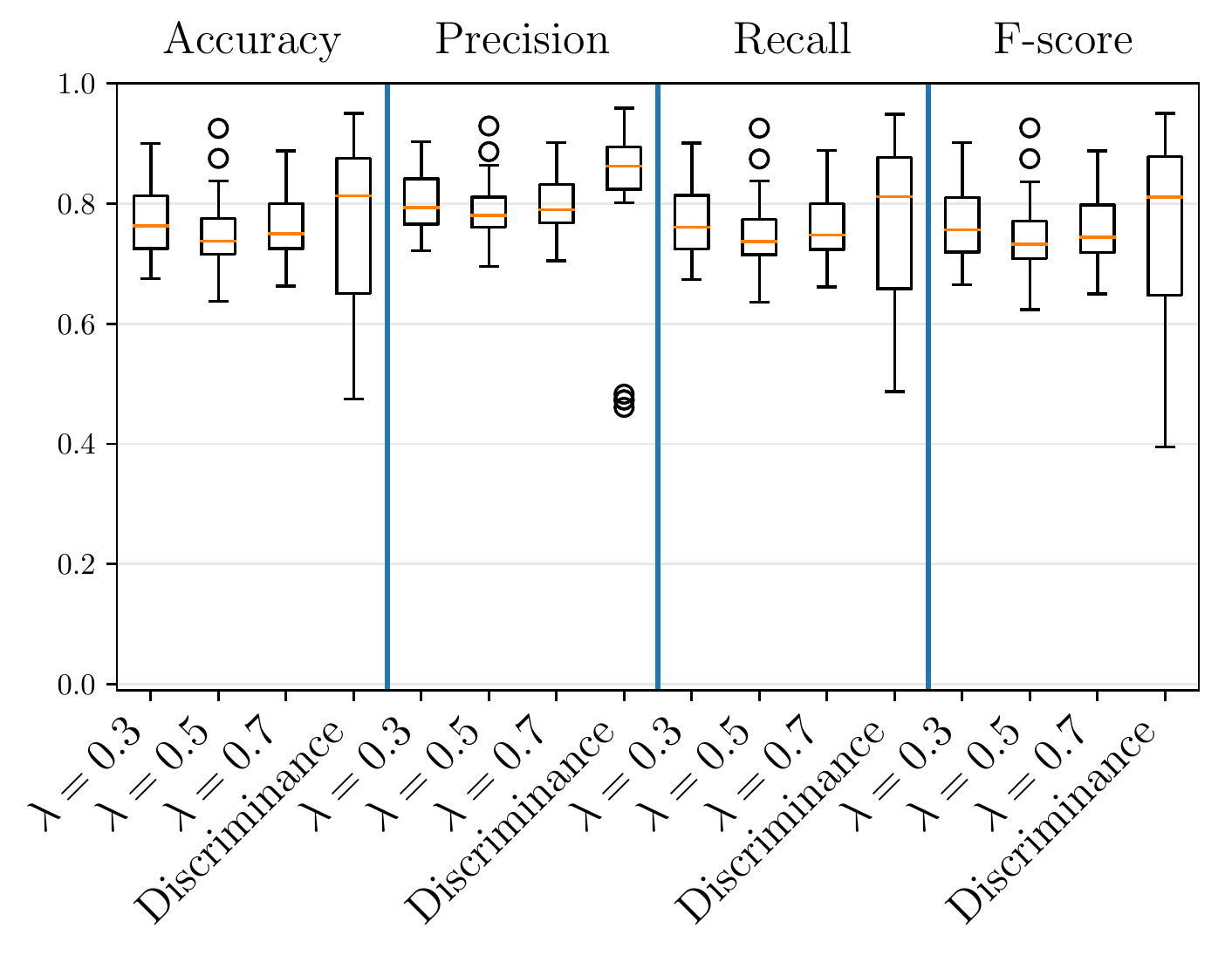}
            \caption{Description of the Vascusynth data set (120 trees, left) and classification results (right).}
    \label{fig:vascu}
\end{figure}

\paragraph{Hicks et al. cell lineage trees} Across cellular division, tracking the lineage of a single cell naturally defines a tree. In a recent article,  \citet{hicks2019maps} have been investigating the variability inside cell lineages trees of three different species. From the encoding of the data that they have provided as a supplementary material\footnote{\url{https://doi.org/10.1101/267450} (last accessed in April 2020)}, we have extracted ordered unlabeled trees that are presented in Fig.\,\ref{fig:maps} (left). The data set contains, for two classes, trees of outdegree 0 (i.e., isolated leaves) that can be considered as noise. With respect to the exponential weight, the value of the kernel between such trees will be identical, whether they belong to the same class or to two different classes. They therefore contribute to reducing the kernel's ability to effectively discriminate between these two classes. On the other hand, the discriminance weight will assign them a zero value, ``de-noising'', in a way, the data. This observation may explain why discriminance weight achieves better results than exponential weight.

\begin{figure}[h]
        \centering
        \includegraphics[width = 0.48\textwidth]{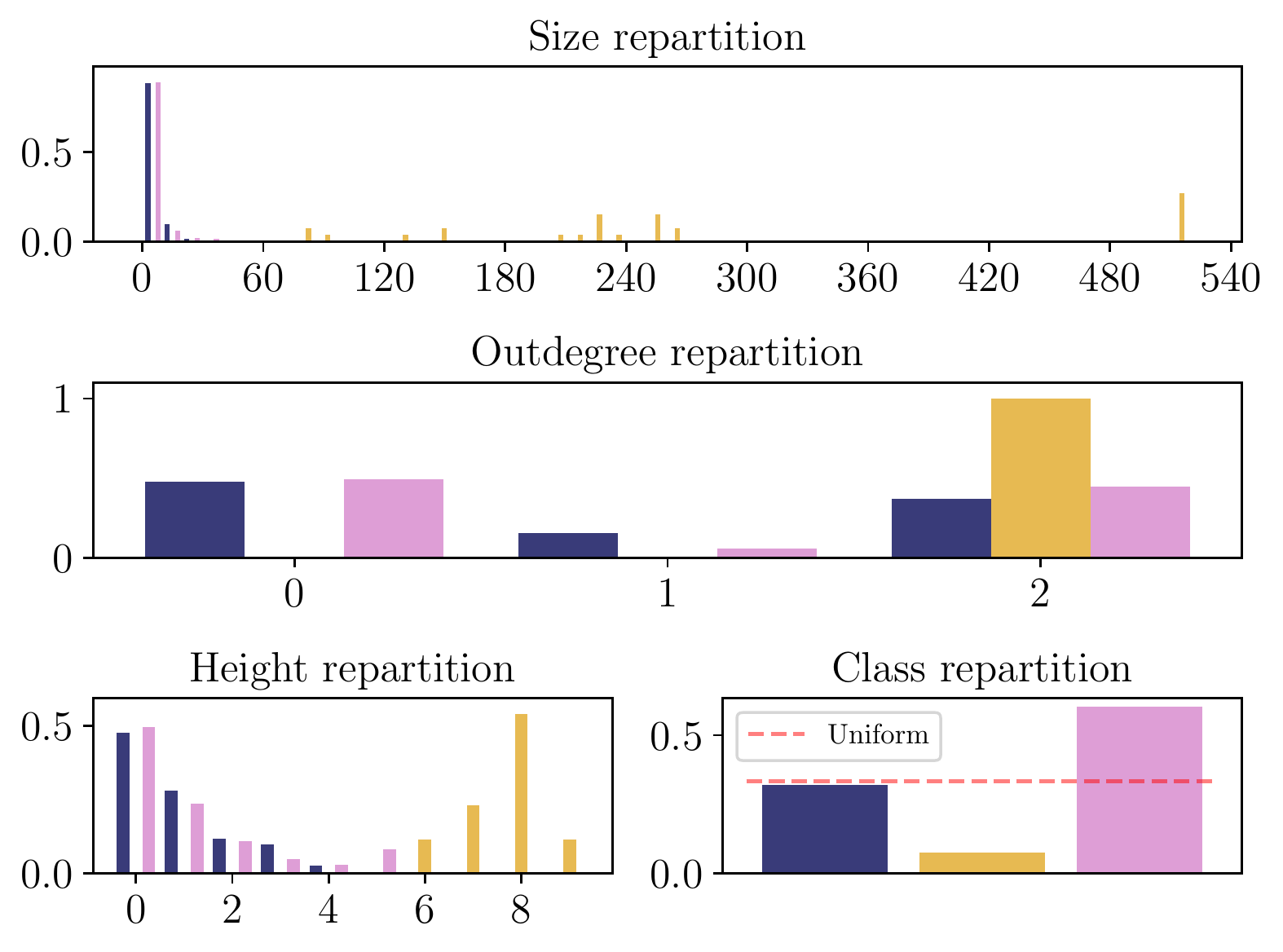}\quad
        \includegraphics[width = 0.48\textwidth]{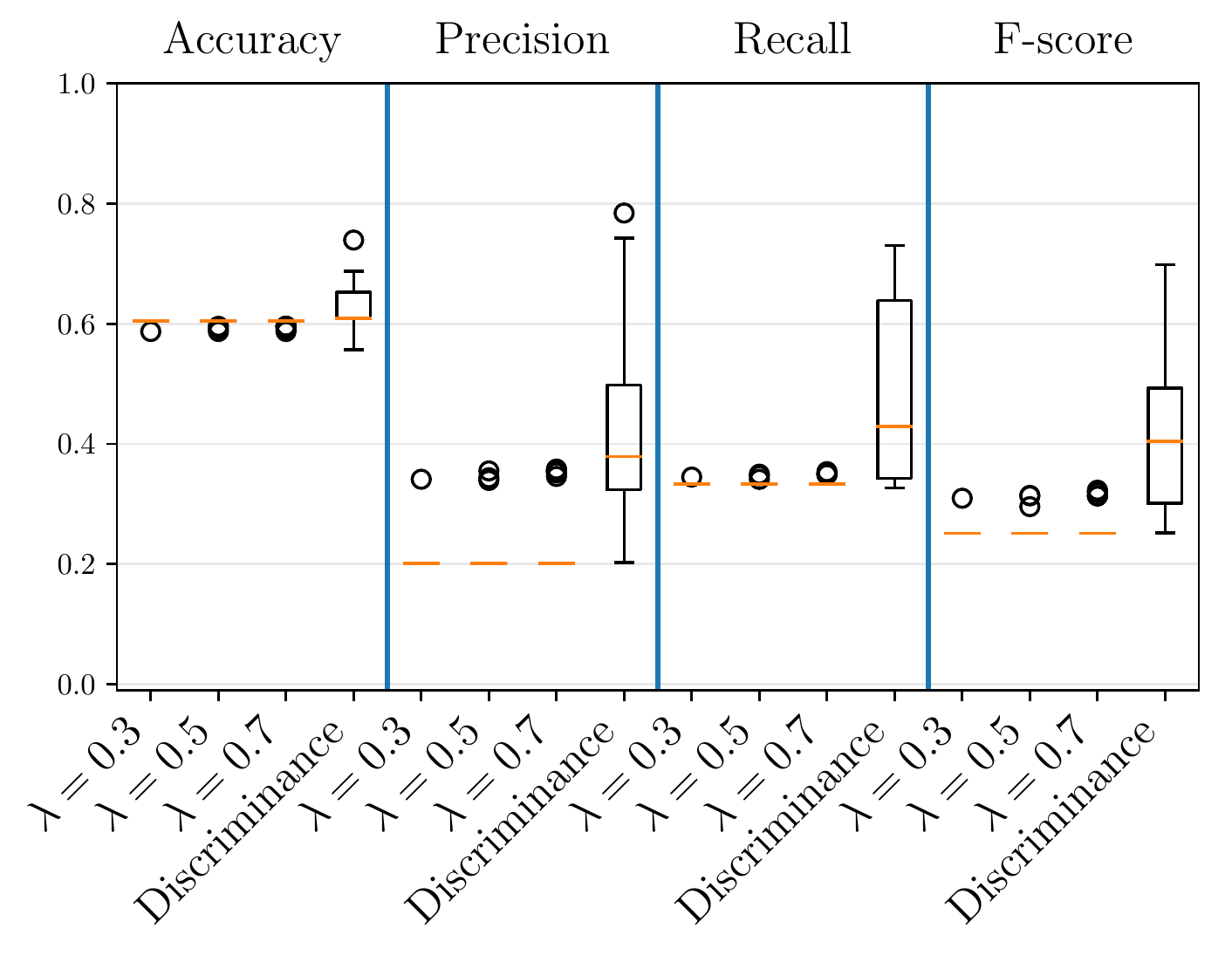}
            \caption{Description of the Hicks et al. data set (345 trees, left) and classification results (right).}
    \label{fig:maps}
\end{figure}

\paragraph{Faure et al. cell lineage trees}  \citet{faure2015algorithmic} have developed a method to construct cell lineage trees from microscopy and provided their data online\footnote{\url{https://bioemergences.eu/bioemergences/openworkflow-datasets.php} (last accessed in April 2020)}. We extracted 300 unordered and unlabeled trees, divided between three classes. It seems from Fig.\,\ref{fig:3species} (left) that one class among the three can be distinguished from the two others. Classification results can be found in Fig.\,\ref{fig:3species} (right): the discriminance weight performs better than the exponential weight, whatever the value of the parameter.

\begin{figure}[h]
    \centering
        \includegraphics[width = 0.48\textwidth]{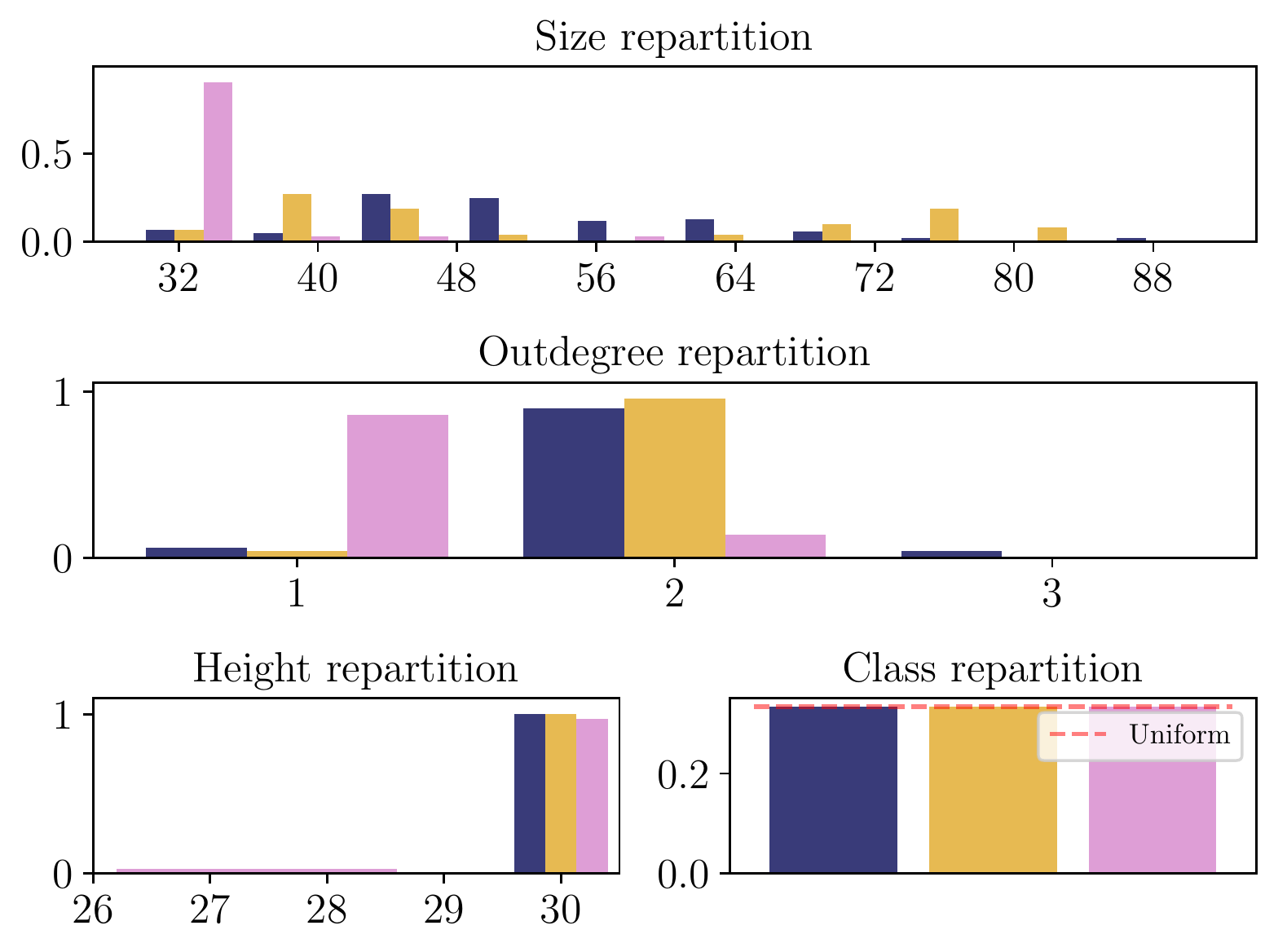}\quad
        \includegraphics[width = 0.48\textwidth]{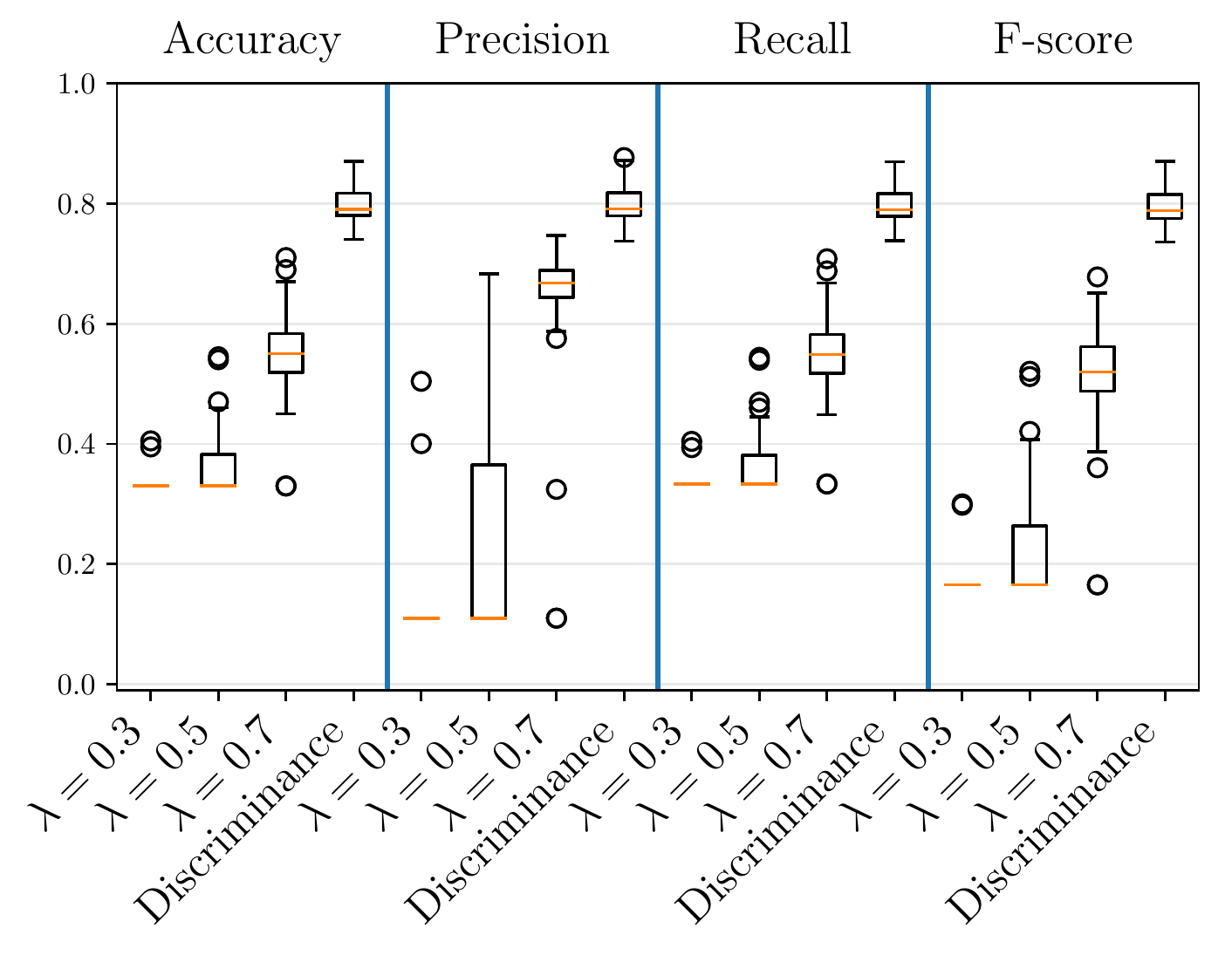}
            \caption{Description of the Faure et al. data set (300 trees, left) and classification results (right).}
    \label{fig:3species}
\end{figure}

\subsection{LOGML}\label{ss:logml}

The LOGML data set is made of user sessions on an academic website, namely the Rensselaer Polytechnic Institute Computer Science Department website\footnote{\url{https://science.rpi.edu/computer-science} (last accessed in April 2020)}, that registered the navigation of users across the website. 23\,111 unordered labeled trees are present, divided into two classes. The trees are very alike, as shown in Fig.\,\ref{fig:logml} (left), and the classification results of Fig.\,\ref{fig:logml} (right) are very similar to INEX 2005, where all weight functions behave similarly, without any advantage for the discriminance weight in terms of prediction.

\begin{figure}[h]
    \centering
        \includegraphics[width = 0.48\textwidth]{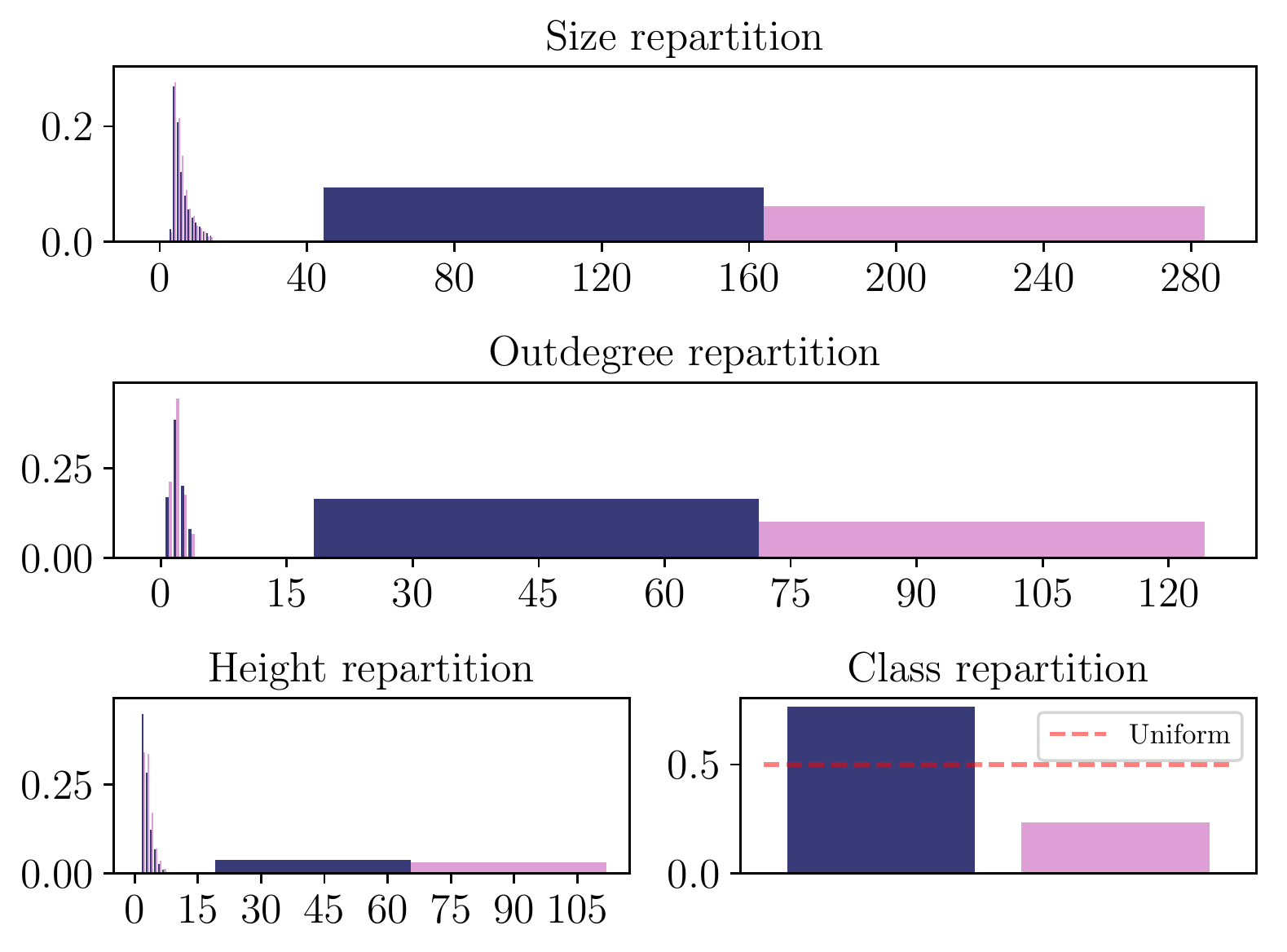}\quad
        \includegraphics[width = 0.48\textwidth]{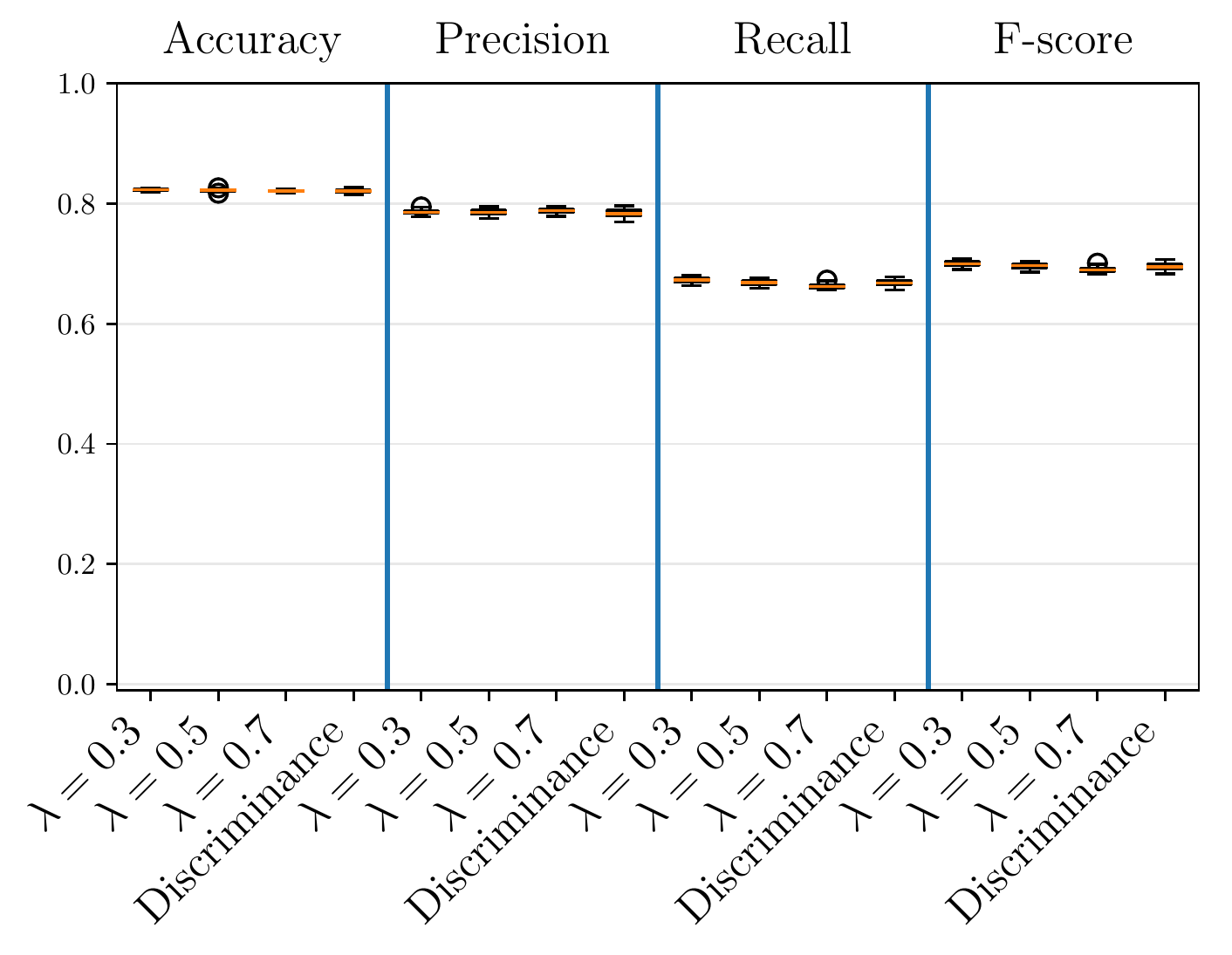}
            \caption{Description of the LOGML data set (23\,111 trees, left) and classification results (right).}
    \label{fig:logml}
\end{figure}

%%%%%%%%%%%%%%%%%%%%%%%%%%%%%%%%%%%%%%%%%%%%%%%%%%%%%%%%%%%%%%%%%%%%%%%%
%%%%%%%%%%%%%%%%%%%%%%%%%%%%%%%%%%%%%%%%%%%%%%%%%%%%%%%%%%%%%%%%%%%%%%%%

\section{Concluding remarks}\label{s:concl}

\subsection{Main interest of the DAG approach: learning the weight function}

In Section~\ref{sec:theory}, we have shown on a $2$-classes stochastic model that the efficiency of the subtree kernel is improved by imposing that the weight of leaves is null. As explained in Remark~\ref{rem:link:model:discr}, we conjecture that the weight of any subtree present in two different classes should be $0$. The main interest of the DAG approach developed in Section~\ref{sec:dag} is that it allows to learn the weight function from the data, as developed in Section~\ref{sec:weight} with the discriminance weight function. Our method has been implemented and tested in Section~\ref{sec:data} on eight real data sets with very different characteristics that are summed up in Table~\ref{tab:data:summary}.

As a conclusion of our experiments, we have analyzed the relative improvement in prediction obtained with the discriminance weight against the best exponential weight in order to show both the importance of the weight function and the relevance of the method developed in this paper. More precisely, for each data set and each classification metric, we have calculated
$$RI = \frac{\text{Metric}_{\text{discr}} - \max(\text{Metric}_\lambda)}{\max(\text{Metric}_\lambda)},$$
from the average values of the different metrics. The results are presented in Fig.\,\ref{fig:improvement}. We have found that, except in one case, discriminance behaves as good as exponential weight decay and even performs better in most of the data sets. Furthermore, one can observe a kind of trend, where the relative improvement decreases when the number of trees in the training data set is increasing, which proves the great interest of the discriminance to handle small data sets, provided that (i) the problem is difficult enough that the exponential weights are not already high performing, as it is the case in the Videogames sellers data set, and (ii) the data set is not too small, as for Vascusynth. Indeed, as the discriminance is learned independently from the SVM, one must have enough training data to divide them efficiently. Nevertheless, it should be noted that, in the framework of the DAG approach, results from the discriminance weight can be obtained much faster due to the fact that the Gram matrices are estimated from one half of the training data set, while learning the discrimance is very fast as it can be done in one traversal of the DAG (see time-complexity presented in Remark~\ref{rem:discrim}). Finally, we have investigated on a single example some properties of the discriminance, discovering that it can be interpreted as a second-order exponential weight, as well as a method for visualizing the important features in the data.

\begin{table}[h]
\centering
\begin{tabular}{c|cccccccc}
data set &\rot{Wikipedia}& \rot{Videogames}&\rot{INEX 2005}&\rot{INEX 2006}& \rot{Vascusynth}&\rot{Hicks et al.}&\rot{Faure et al.}&\rot{LOGML}\\
Ord. / Unord. & Both& Ord. & Both& Both& Unord.&Ord.&Unord.  &Unord. \\
labeled & \xmark & \checkmark& \checkmark&\checkmark&\checkmark&\xmark&\xmark&\checkmark\\
Number of trees & 160\,--\,320 &200&9\,630 &12\,107&120&345&300&23\,111 \\
Number of classes & 4 &2&11&18&3&3&3&2
\end{tabular}
\caption{Summary of the 8 data sets.}
\label{tab:data:summary}
\end{table}

\begin{figure}[h]
    \centering
    \includegraphics[width = 0.8\textwidth]{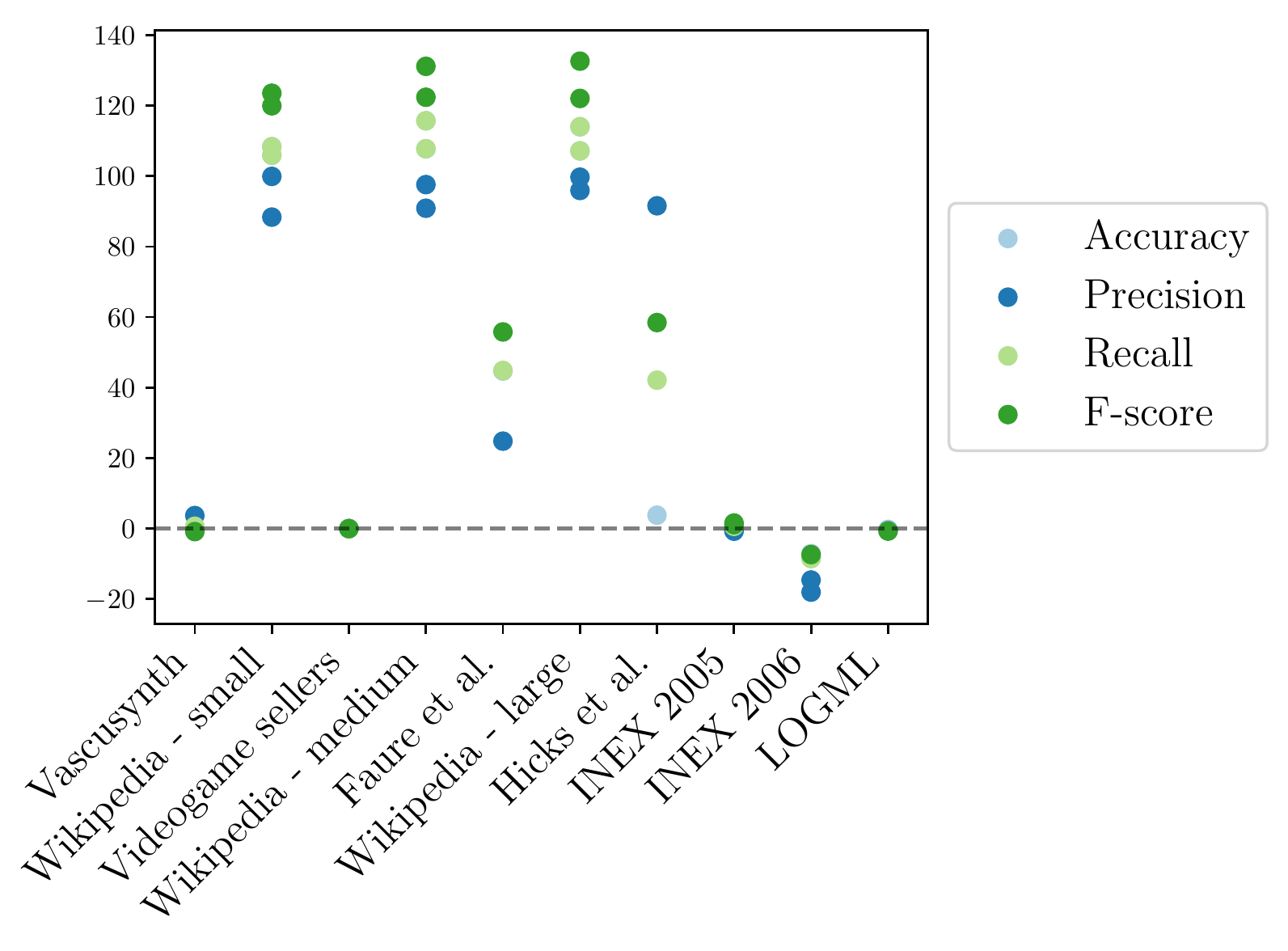}
    \caption{Relative improvement $RI$ (in percentage) of the discriminance against the best value of $\lambda$ for all data sets (sorted by increasing number of trees in the training data set) and all metrics.}   
    \label{fig:improvement}
\end{figure}

\subsection{Interest of the DAG approach in terms of computation time}

As shown in Fig.\,\ref{fig:3species} (right), the exponential decay classification results for the Faure et al. data set are very dependent on the value chosen for the parameter $\lambda$. In this case, it can be interesting to tune this parameter and estimate its best value with respect to a prediction score. This requires to compute the Gram matrices from different weight functions. We present in Fig.\,\ref{fig:3species:time} the computation time required to compute the Gram matrices from a given number of values of the parameter. As expected from the theoretical results, we observe a linear dependency: the intercept corresponds to the computation time required to compute and annotate the DAG reduction, while the slope is associated with the time required to compute the Gram matrices, which is proportional to the average of $\O(\min(\#T_i, \#T_j))$ (see Remark~\ref{rem:compl:sub}). This can be compared to the time-complexity of the algorithm developed in \cite{vishwanathan2002fast} which is the average of $\O(\#T_i+\#T_j)$. Consequently, the corresponding computation times should be proportional to at least twice the slope that we observe with the DAG approach. This shows another interest of our method that is not related to the discriminance weight function. It should be faster to compute several repetitions of the subtree kernel from the DAG approach than from the previous algorithm \citep{vishwanathan2002fast} provided that the number of repetitions is large enough.

\begin{figure}[h]
    \centering
    \begin{minipage}[c]{0.48\textwidth}
 \includegraphics[width = \textwidth]{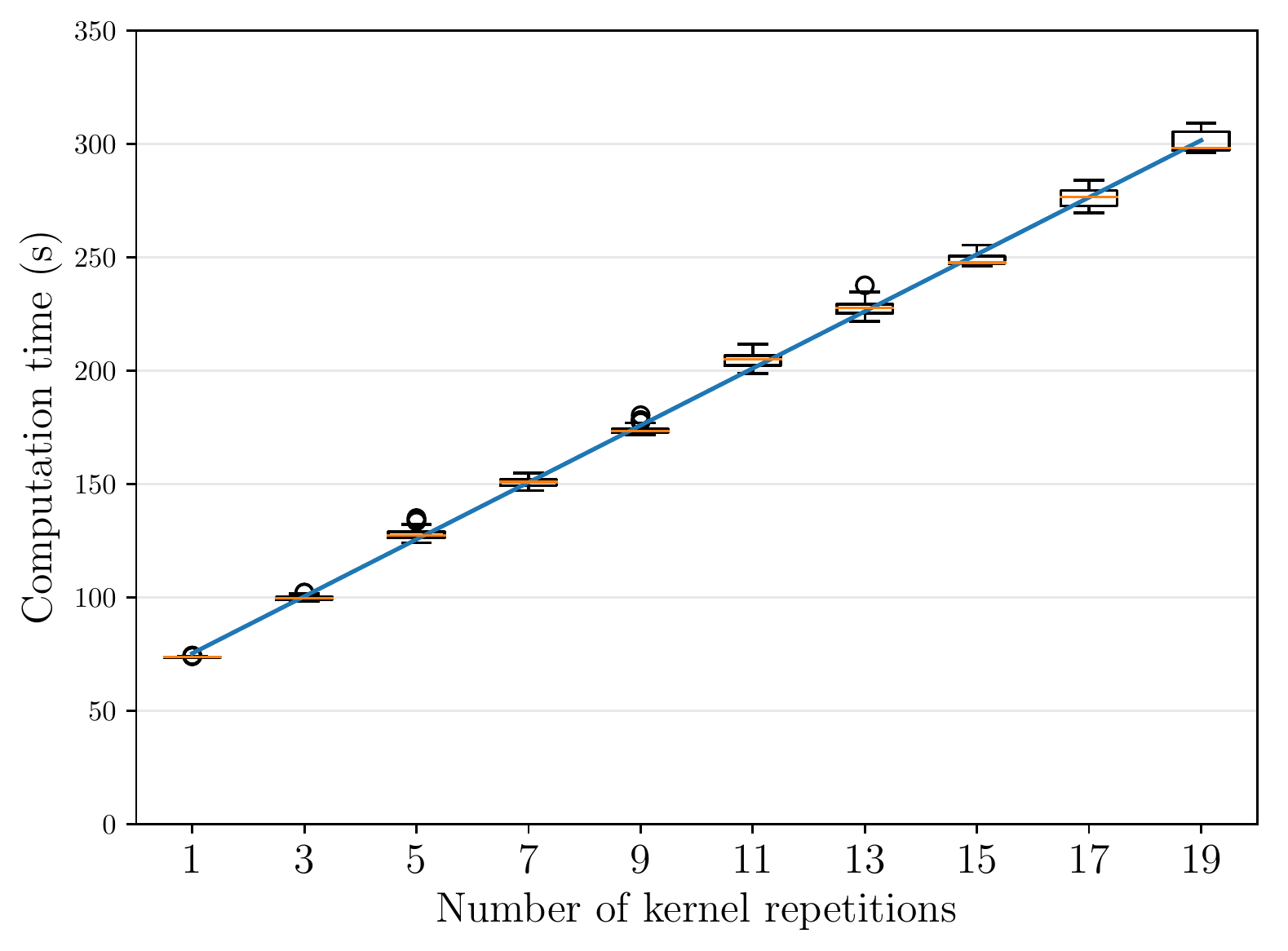}
\end{minipage}\hfil
\begin{minipage}[c]{0.48\textwidth}
            \caption{Computation time required to compute several repetitions of the kernel on the Faure et al. data set. All those calculations have been repeated $50$ times for each number of repetitions. The intercept corresponds to the DAG compression of the data set, which is independent on the number of repetitions. The blue curve is a linear fitting of all the measurement points.}
    \label{fig:3species:time}
\end{minipage}
\end{figure}

\subsection{Implementation and reproducibility}

The \verb+treex+ library for Python \citep{azais2019treex} is designed to manipulate rooted trees, with a lot of diversity (ordered or not, labeled or not). It offers options for random generation, visualization, edit operations, conversions to other formats, and various algorithms. We implemented the subtree kernel as a module of \verb+treex+ so that the interested reader can manipulate the concepts discussed in this paper in a ready-to-use manner.

Basically, the \verb+subtree_kernel+ module allows the computation of formula (\ref{eq:K:subtrees}) with options for choosing (i) $\kappa$ among some classic choices of kernels \citep[Section 2.3]{Scholkopf:2001:LKS:559923} and (ii) the weight function among exponential decay or discriminance. Resorting to dependencies to \verb+scikit-learn+, tools for processing databases and compute SVM are also provided for the sake of self-containedness. Finally, visualization tools are also made available to perform the comprehensive learning approach discussed in Subsection~\ref{ss:wikipedia}.

Installing instructions and the documentation of \verb+treex+ can be found from \cite{azais2019treex}. For the sake of reproducibility, the databases used in Section~\ref{sec:data}, as well as the scripts that were designed to create them and process them, can be made available upon request to the authors.

\acks{This work has been supported by the European Union's H2020 project ROMI. The authors would like to thank Dr. Bruno Leggio as well as Dr. Giovanni Da San Martino for helping them to access some of the data sets presented in the paper: \textit{grazie}. Last but not least, the authors are grateful to three anonymous reviewers for their valuable comments on a first version of the manuscript.}

\appendix

\section{Proof of Proposition~\ref{prop:ker:sep}}
\label{app:proof:ker:sep}

The proof is mainly based on the following technical lemma, which statement requires the following notation. If $u$ is a vertex of a tree $T$, $\fami(u)$ denotes the family of $u$, i.e., the set composed of the ascendants of $u$, $u$, and the descendants of $u$ in $T$. We recall that $\des(u)$ stands for the set of descendants of $u$.

\begin{lemma}\label{lem:formula:delta}
Let $u,v\in T_i$, $i\in\{1,2\}$. One has
$$
K(T_i^u , T_i^v) = K(T_i,T_i) - \sum_{x\in \mathcal{B}_{u,v}}\omega_{T_i[x]} + K(\tau_{\height(u)},\tau_{\height(v)}),
$$
where
\begin{equation}\label{eq:def:B}
\mathcal{B}_{u,v} =
\left\{
\begin{array}{cl}
\des(u) \cup \{u\} & \text{if $u=v$,} \\
\fami(u) \cup \fami(v) & \text{else.}
\end{array}
\right.
\end{equation}

Let $u\in T_1$ and $v\in T_2$. Then,
$$K(T_1^u,T_2^v) = K(\tau_{\height(u)},\tau_{\height(v)}).$$
\end{lemma}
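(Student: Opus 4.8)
The plan is to compute $K(T_i^u,T_i^v)$ directly from the defining formula \eqref{eq:K:subtrees}, tracking exactly which subtrees (with their multiplicities) survive the edit operations. Recall that $T_i^u = T_i(u\mapsto\tau_{\height(u)})$ replaces the subtree hanging at $u$ by $\tau_{\height(u)}$. The effect on the multiset of subtrees of $T_i$ is twofold: first, the subtrees rooted at $u$ and at all its strict descendants disappear (they are no longer present), and second, new subtrees appear, namely $\tau_{\height(u)}$ and all of \emph{its} subtrees; moreover the subtrees rooted at the strict ascendants of $u$ change identity (they get modified by the substitution). So I would split $\subtrees(T_i^u)$ into three disjoint pieces: the vertices $x$ of $T_i$ strictly outside $\fami(u)$, whose rooted subtrees are unchanged; the vertices on the path from the root down to (and including) $u$'s parent, whose subtrees are altered; and the freshly inserted copy of $\tau_{\height(u)}$ together with its subDAG. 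The key structural input is assumption~3 together with the running assumption that $\tau_h$ is not a subtree of $T_0$ or $T_1$: these guarantee that the only subtrees common to $T_i^u$ and $T_i^v$ that are not ``old, unmodified'' subtrees of $T_i$ are (a) subtrees of the inserted $\tau$'s and (b) leaves — and since the weight of leaves is $0$ in this section, leaves contribute nothing.

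Concretely, for the first identity with $u=v$: in $T_i^u$ the subtrees that changed relative to $T_i$ are exactly those rooted in $\des(u)\cup\{u\}$ (which vanish) and those rooted at strict ascendants of $u$ (which are modified). When we compute $K(T_i^u,T_i^u)$ versus $K(T_i,T_i)$, the modified-ascendant subtrees of $T_i^u$ are still pairwise non-isomorphic to anything else by assumption~1 applied carefully (a modified ascendant subtree $T_i^u[x]$ contains the $\tau$-graft, so it equals no old subtree of $T_i$, and two distinct modified ascendants are still distinct), hence each contributes the same ``diagonal'' term $\omega_{T_i^u[x]}$ in both kernels and these cancel — \emph{except} we must be careful: the weight $\omega_{T_i^u[x]}$ need not equal $\omega_{T_i[x]}$. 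Here is where one uses that the weight function in this section is a function of height only; since $\height(T_i^u[x]) = \height(T_i[x])$ for every ascendant $x$ of $u$ (the graft has the same height as what it replaced), the weights are unchanged, so those terms genuinely cancel. What remains on the $T_i$ side are the terms $\sum_{x\in\des(u)\cup\{u\}}\omega_{T_i[x]}\,\num_{T_i[x]}(T_i)^2$; under assumption~1 each such $T_i[x]$ with $x$ non-leaf occurs exactly once in $T_i$, and leaves have weight $0$, so this reduces to $\sum_{x\in\des(u)\cup\{u\}}\omega_{T_i[x]} = \sum_{x\in\mathcal B_{u,u}}\omega_{T_i[x]}$. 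Finally the new subtrees of $\tau_{\height(u)}$ contribute precisely $K(\tau_{\height(u)},\tau_{\height(u)})$ (again leaves, shared with $T_i$, having weight $0$, cause no double-counting). This gives the claimed formula in the case $u=v$. For $u\neq v$ (still both in $T_i$), $T_i^u$ and $T_i^v$ share the old subtrees rooted strictly outside $\fami(u)\cup\fami(v)$, and additionally share subtrees of $\tau_{\height(u)}$ with $\tau_{\height(v)}$ only if these $\tau$'s overlap — but the term $K(\tau_{\height(u)},\tau_{\height(v)})$ accounts for exactly that, while assumption~3 forbids a modified-ascendant subtree on one side from matching anything on the other. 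Starting from $K(T_i,T_i)=\sum_{x\in T_i}\omega_{T_i[x]}$ (using assumption~1 and $\omega_\bullet=0$) and removing the contributions of $\fami(u)\cup\fami(v)$, then adding back $K(\tau_{\height(u)},\tau_{\height(v)})$, yields the stated identity.

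For the second identity, $u\in T_0$, $v\in T_1$: by assumption~2 no subtree of $T_0$ is isomorphic to a subtree of $T_1$ except leaves, and by assumption~3 this persists after the edits for all subtrees except the inserted $\tau$'s and leaves. Since $\omega_\bullet=0$, the only surviving terms in $K(T_0^u,T_1^v)=\sum_{\tau\in\subtrees(T_0^u)\cap\subtrees(T_1^v)}\omega_\tau\num_\tau(T_0^u)\num_\tau(T_1^v)$ come from subtrees common to $\tau_{\height(u)}$ and $\tau_{\height(v)}$, which is by definition $K(\tau_{\height(u)},\tau_{\height(v)})$.

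I expect the main obstacle to be bookkeeping the multiplicities $\num_\tau(\cdot)$ correctly and, relatedly, justifying rigorously that the ``modified ascendant'' subtrees contribute identically on both sides so that they cancel — this is where assumptions~1 and~3 interact with the height-only form of the weight, and it needs to be stated with care rather than waved through. A secondary subtlety is that a subtree $\tau_{\height(u)}$ could itself contain, as a proper subtree, something isomorphic to an old subtree of $T_i$ other than a leaf; the running hypothesis ``$\tau_h$ is not a subtree of $T_0$ nor $T_1$'' rules this out for $\tau_h$ itself, but one should check it is not needed for the proper subtrees of $\tau_h$ — and indeed it is not, because any such coincidence is already absorbed into the term $K(\tau_{\height(u)},\tau_{\height(v)})$ when $u\ne v$ and is irrelevant for the diagonal term when $u=v$ provided we are consistent about what ``$K(\tau,\tau)$'' counts. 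I would make all of this precise by writing $K(T_1,T_2)=\sum_\tau \omega_\tau\num_\tau(T_1)\num_\tau(T_2)$ and partitioning the index set $\tau$ according to where $\tau$ ``lives'' (inside $T_i$ unchanged, as a modified ascendant, inside a grafted $\tau_h$, or a leaf), then matching terms between the left and right sides of each claimed equation.
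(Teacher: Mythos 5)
Your proposal is correct and follows essentially the same route as the paper's own proof: both compute the kernel directly from \eqref{eq:K:subtrees} by decomposing $\subtrees(T_i^u)\cap\subtrees(T_i^v)$ into the unchanged subtrees of $T_i$ rooted outside $\fami(u)\cup\fami(v)$ (resp.\ outside $\des(u)\cup\{u\}$ when $u=v$) plus the common subtrees of the grafted $\tau$'s, then use assumption 1 to set the multiplicities to $1$, $\omega_\bullet=0$ to discard leaves, and assumptions 2--3 to reduce the cross-class kernel to $K(\tau_{\height(u)},\tau_{\height(v)})$. The only difference is one of care rather than of method: you explicitly track the modified ascendant subtrees $T_i^u[x]$ and the condition under which their weights equal $\omega_{T_i[x]}$, and you flag the possible coincidence of proper subtrees of $\tau_h$ with old subtrees of $T_i$ --- two bookkeeping points that the paper's set-level decomposition absorbs without comment.
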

\begin{proof} We begin with the case $u\neq v$. The result relies on the following decomposition which is valid under the assumptions made on $T_i$ and the sequence $(\tau_h)$,
$$
\subtrees(T_i^u)\cap\subtrees(T_i^v) ~=~ \big[\subtrees(T_i)\setminus\{T_i[z]~:~z\in\fami(u)\cup\fami(v)\}\big]~ \cup~ \big[\subtrees(\tau_{\height(u)}) \cap \subtrees(\tau_{\height(v)})\big].
$$ 

Together with \eqref{eq:K:subtrees},
\begin{eqnarray*}
K(T_i^u,T_i^v) &=& \sum_{\theta\in\subtrees(T_i)\setminus\{T_i[z]~:~z\in\fami(u)\cup\fami(v)\}} w_\theta \num_\theta(T_i^u) \num_\theta(T_i^v) \\
&+& \sum_{\theta\in\subtrees(\tau_{\height(u)}) \cap \subtrees(\tau_{\height(v)})} w_\theta\num_\theta(T_i^u) \num_\theta(T_i^v).
\end{eqnarray*}

If $\theta\in\subtrees(\tau_{\height(u)}) \cap \subtrees(\tau_{\height(v)})$, then $\num_\theta(T_i^z) = \num_\theta(\tau_{\height(z)})$, $z\in\{u,v\}$, because, for any $h>0$, $\tau_h$ is not a subtree of $T_0$ nor $T_1$ by assumption. Thus,
\begin{eqnarray}\label{eq:calculint:Kvw1}
\sum_{\theta\in\subtrees(\tau_{\height(u)}) \cap \subtrees(\tau_{\height(v)})} w_\theta\num_\theta(T_i^u) \num_\theta(T_i^v) &=& \sum_{\theta\in\subtrees(\tau_{\height(u)}) \cap \subtrees(\tau_{\height(v)})} w_\theta\num_\theta(\tau_{\height(u)}) \num_\theta(\tau_{\height(v)}) \nonumber \\
&=& K(\tau_{\height(u)} , \tau_{\height(v)}),
\end{eqnarray}
in light of \eqref{eq:K:subtrees} again. Furthermore, if $\theta\in\subtrees(T_i)\setminus\{T_i[z]~:~z\in\fami(u)\cup\fami(v)\}$, then $\num_\theta(T_i^z) = \num_\theta(T_i)$, $z\in\{u,v\}$, and
\begin{eqnarray}
\sum_{\theta\in\subtrees(T_i)\setminus\{T_i[z]~:~z\in\fami(u)\cup\fami(v)\}} w_\theta \num_\theta(T_i^u) \num_\theta(T_i^v) &=& \sum_{\theta\in\subtrees(T_i)\setminus\{T_i[z]~:~z\in\fami(u)\cup\fami(v)\}} w_\theta \num_\theta(T_i) \num_\theta(T_i) \nonumber \\
&=& \sum_{\theta\in\subtrees(T_i)} w_\theta \num_\theta(T_i) \num_\theta(T_i) \nonumber  \\
&-& \sum_{\theta\in\{T_i[u]~:~u\in\fami(v)\cup\fami(w)\}} w_\theta\num_\theta(T_i) \num_\theta(T_i) \nonumber \\
&=& K(T_i,T_i) - \sum_{\theta\in\{T_i[z]~:~z\in\fami(u)\cup\fami(v)\}} w_\theta, \label{eq:calculint:Kvw2}
\end{eqnarray}
since $\num_\theta(T_i) = 1$ because of the first assumption on $T_i$. \eqref{eq:calculint:Kvw1} and \eqref{eq:calculint:Kvw2} state the first result. When $u=v$, the decomposition is slightly different,
$$
\subtrees(T_i^u) ~=~ \big[\subtrees(T_i)\setminus\{T_i[z]~:~z\in\{u\}\cup\des(u)\}\big]~ \cup~ \subtrees(\tau_{\height(u)}),
$$
but the rest of the proof is similar. Finally, the formula for $K(T_1^u,T_2^v)$ is a direct consequence of the third assumption on $T_1$, $T_2$ and the sequence $(\tau_h)$.\end{proof}

By virtue of the previous lemma, one can derive the following result on the quantity $\Delta_x^i$ defined by \eqref{eq:delta:def}.

\begin{lemma}\label{lem:delta:calcul}
Let $x\in T_i$, $i\in\{1,2\}$. One has
$$\Delta_x^i = K(T_i , T_i ) - \mathbb{E}_u\left[ \sum_{z \in \mathcal{B}_{x,u}}w_{T_i[z]}\right].$$
\end{lemma}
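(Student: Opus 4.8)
The plan is to substitute the two formulas of Lemma~\ref{lem:formula:delta} into the definition~\eqref{eq:delta:def} of $\Delta_x^i$ and then take expectations, the decisive point being that the heights of the randomly edited vertices follow the same law $P_\rho$ in both classes, so that the ``new subtree'' contributions cancel out.

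First I would fix $x\in T_i$ and apply the first formula of Lemma~\ref{lem:formula:delta} to the pair $(x,u)$; this is legitimate whether or not $x=u$, since that formula already treats both cases through the definition~\eqref{eq:def:B} of $\mathcal{B}$. It gives
$$K(T_i^x,T_i^u) = K(T_i,T_i) - \sum_{z\in\mathcal{B}_{x,u}}w_{T_i[z]} + K(\tau_{\height(x)},\tau_{\height(u)}).$$
Next, the second formula of Lemma~\ref{lem:formula:delta} together with the symmetry of $K$ gives $K(T_i^x,T_{1-i}^v) = K(\tau_{\height(x)},\tau_{\height(v)})$. Subtracting these, and using that the random trees of class $i$ and of class $1-i$ are drawn independently (so the expectation factorises) and that the sum over $\mathcal{B}_{x,u}$ does not involve $v$, I would obtain from~\eqref{eq:delta:def}
\begin{align*}
\Delta_x^i &= \mathbb{E}_{u,v}\Big[K(T_i,T_i) - \sum_{z\in\mathcal{B}_{x,u}}w_{T_i[z]} + K(\tau_{\height(x)},\tau_{\height(u)}) - K(\tau_{\height(x)},\tau_{\height(v)})\Big]\\
&= K(T_i,T_i) - \mathbb{E}_u\Big[\sum_{z\in\mathcal{B}_{x,u}}w_{T_i[z]}\Big] + \mathbb{E}_u\big[K(\tau_{\height(x)},\tau_{\height(u)})\big] - \mathbb{E}_v\big[K(\tau_{\height(x)},\tau_{\height(v)})\big].
\end{align*}

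The last step is to check that the two remaining expectations cancel. In the stochastic model, $u$ is obtained by first drawing $h\sim P_\rho$ and then picking a vertex uniformly among those of height $h$ in $T_i$; since $\height(T_i)=H$, a vertex of height $h$ exists for every $h\in\{0,\dots,H\}$ (for instance along a longest root-to-leaf path), so $\height(u)$ has distribution exactly $P_\rho$. The same argument applied to $T_{1-i}$, which also has height $H$, shows $\height(v)\sim P_\rho$. Hence
$$\mathbb{E}_u\big[K(\tau_{\height(x)},\tau_{\height(u)})\big] = \sum_{h=0}^H P_\rho(h)\,K(\tau_{\height(x)},\tau_h) = \mathbb{E}_v\big[K(\tau_{\height(x)},\tau_{\height(v)})\big],$$
the two terms cancel, and the announced identity drops out.

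The computation itself is routine, so I do not anticipate any serious difficulty; the one place needing genuine care — and which I regard as the crux — is the cancellation of $\mathbb{E}_u[K(\tau_{\height(x)},\tau_{\height(u)})]$ against $\mathbb{E}_v[K(\tau_{\height(x)},\tau_{\height(v)})]$. It rests squarely on the modelling convention that $T_0$ and $T_1$ share the same height $H$, which forces the edited-vertex heights to obey the common law $P_\rho$; without this, the ``new subtree'' terms would survive and leave a residual bias in $\Delta_x^i$.
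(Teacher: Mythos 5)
Your proof is correct and follows essentially the same route as the paper: substitute the two formulas of Lemma~\ref{lem:formula:delta} into \eqref{eq:delta:def}, take expectations, and cancel $\mathbb{E}_u[K(\tau_{\height(x)},\tau_{\height(u)})]$ against $\mathbb{E}_v[K(\tau_{\height(x)},\tau_{\height(v)})]$ using that $\height(u)$ and $\height(v)$ share the law $P_\rho$. Your added justification that a vertex of every height $h\in\{0,\dots,H\}$ exists in each $T_i$ (so the two heights genuinely have the same distribution) is a welcome point the paper leaves implicit.
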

\begin{proof}
In light of Lemma~\ref{lem:formula:delta}, one has
$$\Delta_x^i = K(T_i , T_i ) - \mathbb{E}_u\left[ \sum_{z \in \mathcal{B}_{x,u}}w_{T_i[z]}\right] + \mathbb{E}_u\left[ K(\tau_{\height(x)} , \tau_{\height(u)})\right] - \mathbb{E}_v\left[ K(\tau_{\height(x)} , \tau_{\height(v)})\right] .$$
By assumption on the stochastic model of random trees, $\height(u)$ and $\height(v)$ have the same distribution and thus $ \mathbb{E}_u[ K(\tau_{\height(x)} , \tau_{\height(u)})] =  \mathbb{E}_v[ K(\tau_{\height(x)} , \tau_{\height(v)})]$, which states the expected result.
\end{proof}

The next decomposition is useful to prove the result of interest. If $c_h^i$ denotes the number of subtrees of height $h$ appearing in $T_i$, $h\geq0$, then the probability of picking a particular vertex $u$ is $P_\rho(\height(u))/c_{\height(u)}^i$ and thus
$$
	\mathbb{E}_u\left[ \sum_{z \in \mathcal{B}_{x,u}}w_{T_i[z]}\right] = \frac{P_\rho(\height(x))}{c_{\height(x)}^i} \sum_{z\in\{x\}\cup\des(x)} w_{T_i[z]} + \sum_{u\in T_i\setminus\{x\}}  \frac{P_\rho(\height(u))}{c_{\height(u)}^i} \sum_{z\in\mathcal{B}_{x,u}} w_{T_i[z]}.
$$

In addition, for $u\in T_i\setminus\{x\}$,
\begin{eqnarray}
\sum_{z\in\{x\}\cup\des(x)} \omega_{T_i[z]} &=& K(T_i[x],T_i[x]) \label{eq:proof:sum1}, \\
\sum_{z\in\mathcal{B}_{x,u}} \omega_{T_i[z]} &=& K(T_i,T_i) -  \sum_{z\notin\fami(x)\cup\fami(u)} \omega_{T_i[z]} . \label{eq:proof:sum2}
\end{eqnarray}
\eqref{eq:proof:sum1} and \eqref{eq:proof:sum2} together with Lemma~\ref{lem:delta:calcul} show that
$$ \Delta_x^i = \frac{P_\rho(\height(x))}{c_{\height(x)}^i} \left(K(T_i,T_i) - K(T_i[x],T_i[x]\right) + \sum_{u\in T_i\setminus\{x\}}  \frac{P_\rho(\height(u))}{c_{\height(u)}^i} \sum_{z\notin\fami(x)\cup\fami(u)} \omega_{T_i[z]}.$$

The left-hand term (and the right-hand term when $w_{T_i}>0$) is null if and only if $x=\roottree(T_i)$, which shows the first result. In addition,
$$ \Delta_x^i \geq \frac{P_\rho(\height(x))}{c_{\height(x)}^i}\left(K(T_i,T_i) - K(T_i[x],T_i[x]\right) ,$$
which states the expected formula \eqref{eq:delta:min} with $P_\rho(0)\leq P_\rho(\height(x))$ (true if $\rho>H/2$) and $c_{\height(x)}^i\leq\#\leaves(T_i)$. The conclusion comes from the fact that the probability of drawing a vertex $x$ of height greater 
than $h$ is $G_\rho(h)$.

%%%%%%%%%%%%%%%%%%%%%%%%%%%%%%%%%%%%%%%%%%%%%%%%%%
%%%%%%%%%%%%%%%%%%%%%%%%%%%%%%%%%%%%%%%%%%%%%%%%%%
%%%%%%%%%%%%%%%%%%%%%%%%%%%%%%%%%%%%%%%%%%%%%%%%%%

\section{Proof of Proposition~\ref{prop:dag:recompression:complexity} }\label{app:proof:dag:recompression}

We denote by $D^h$ the set of vertices at height $h$ in any DAG $D$, and $\ast\in\lbrace \text{ordered},\text{unordered}\rbrace$ the type of isomorphism considered. From the forest $(D_1,\dots,D_N)$, we construct the DAG $\Delta$ such that (i) $D_i$ is a subDAG of $\Delta$ for all $i$, (ii) $\height(\Delta) = \max_i \height(D_i)$, (iii) all vertices in $\Delta$ have degree $\max_i \degr(D_i)$, and (iv) at each height except $0$ and $\height(D)$, $\# \Delta^h = \max_i \#D_i^h$. If $\Delta$ is placed $N$ times under an artificial root, and then recompressed by the algorithm, indeed the output contains the recompression of the original forest. Therefore, this case is the worst possible for the algorithm, and we claim that it achieves the proposed complexity.

Let $\Delta$ be now a DAG with following properties : $\# \Delta =m$, $\height(\Delta)=H$, at each height $h \notin \lbrace 0,H \rbrace$, $\#\Delta^h = n$ (so that $n(H-2)+2 = m$), and all vertices have degree $d$. $\sdag$ is the super-DAG obtained after placing $N$ copies of $\Delta$ under an artificial root. We then have $\#\sdag = 1+Nm$ so that $\O(\#\sdag)=\O(Nm)=\O(NHn)$ and $\degr(\mathcal{F}) = \degr(\Delta) = d$.

At the beginning of the algorithm, constructing the mapping $h\mapsto \sdag^h$ in one exploration of $\sdag$ has complexity $\O(\#\sdag)$. We will now examine the complexity of the further steps, with respect to $n,m,d,H$ and $N$. We introduce the following lemma :

\begin{lemma}\label{lemma:recompression:merge}
Constructing $\ntm(h)$ has time-complexity:
\begin{enumerate}
\item $\O\big(\sum_{\nu \in \sdag^h} \#\child(\nu)\log\#\child(\nu)\big)$ for unordered trees;
\item $\O\big(\sum_{\nu \in \sdag^h} \#\child(\nu)\big)$ for ordered trees.
\end{enumerate}
\end{lemma}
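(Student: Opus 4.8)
The plan is to reduce the construction of $\ntm(h)$ to a grouping (bucketing) problem and to use, for the two cases, the appropriate canonical encoding of the children of a vertex. By line~1 of Algorithm~\ref{algo:dag:recompression} the sets $\sdag^h$ are already available, and after the iterations for heights $0,\dots,h-1$ the vertices of $\sdag$ below height $h$ have all been fully merged, so each such vertex is its own canonical representative and may be identified with its index in the (array) representation of $\sdag$. Then $\child(\nu)$ for $\nu\in\sdag^h$ is, in the ordered case, an explicit list of such indices, and in the unordered case a multiset of them. Building $\ntm(h)$ amounts to partitioning $\sdag^h$ into the fibres of $f\colon\nu\mapsto\child(\nu)$ --- where two children-collections are identified when they are equal as lists ($\ast=\text{ordered}$) or as multisets ($\ast=\text{unordered}$) --- and keeping the fibres of size at least $2$.

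For the \emph{ordered} case there is nothing to canonicalise: each $\child(\nu)$ is already a string over the alphabet $\sdag^{<h}$, and the collection of these strings has total length $\sum_{\nu\in\sdag^h}\#\child(\nu)$. Using the linear-time lexicographic sorting of variable-length strings over an integer alphabet that underlies the $\O(\#T)$ DAG-construction algorithm for ordered trees \citep{Downey:1980:VCS:322217.322228}, one sorts --- hence groups --- these strings in time $\O\big(\sum_{\nu\in\sdag^h}\#\child(\nu)\big)$; a final linear scan over the sorted list extracts the maximal runs of equal strings and discards those of length $1$, yielding $\ntm(h)$. This proves item~2.

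For the \emph{unordered} case, the only additional step is to replace each multiset $\child(\nu)$ by its canonical sorted list before applying the grouping above. Sorting the multiset attached to a single vertex $\nu$ costs $\O\big(\#\child(\nu)\log\#\child(\nu)\big)$ by any comparison sort, so the canonicalisation of all vertices of $\sdag^h$ costs $\O\big(\sum_{\nu\in\sdag^h}\#\child(\nu)\log\#\child(\nu)\big)$; the subsequent grouping is then $\O\big(\sum_{\nu\in\sdag^h}\#\child(\nu)\big)$ and is absorbed, which gives item~1. Summed over $h=0,\dots,\height(\sdag)-1$ together with the other per-height costs handled in the remainder of the appendix, this $\log$ factor is exactly what separates the unordered and ordered bounds of Proposition~\ref{prop:dag:recompression:complexity}.

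The point that needs the most care is the linear bound for the grouping step: the children-lists have unequal lengths and their entries range over all of $\sdag^{<h}$, so a carelessly implemented column-by-column counting sort would pay for the whole of $\sdag^{<h}$ at every column. This is avoided by the classical variable-length radix sort --- process the columns from the last to the first, so that the counting-sort pass at column $j$ scans only the sublist of children-lists long enough to reach column $j$, and clear only the counting-array cells actually touched --- whose passes telescope to $\O(\text{total length})$; this is precisely the ingredient of the reference cited above. Everything else is bookkeeping: constructing $h\mapsto\sdag^h$ is done once before the loop, and reading off the fibres of $f$ and flagging those with $\#f^{-1}(\{S\})\ge 2$ is one more linear scan, contributing nothing beyond the stated bounds.
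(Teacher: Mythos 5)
Your proof is correct and follows essentially the same route as the paper's: the $\log$ factor in the unordered case comes from comparison-sorting each vertex's children multiset into a canonical form, and the remaining work of partitioning $\sdag^h$ into the fibres of $f$ is linear in $\sum_{\nu\in\sdag^h}\#\child(\nu)$. The only difference is that you spell out a worst-case linear grouping mechanism (variable-length radix sort over the integer alphabet $\sdag^{<h}$), where the paper simply asserts that one pass over the image of $f$ suffices; this is a welcome tightening rather than a different argument.
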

\begin{proof}
When sorting lists of size $L$, merge sort is known to have $\O(L\log L)$ complexity in the worst case \citep{skiena2012sorting}. Accordingly, we introduce

$$g^\ast(x) = \begin{cases} \hfil x & \text{if } \ast = \text{ordered};\\ x(1+\log x) & \text{if } \ast = \text{unordered}. \end{cases}$$

At height $h$, we construct $\ntm(h)=\lbrace f^{-1}(S)\,:\,S \in \text{Im}(f),\,\#f^{-1}(S) \geq 2\rbrace$ where $f : \nu \in \sdag^h \mapsto \child(\nu)$. Finding the preimage of $f$ requires first to construct $f$, by copying the children of each vertex in $\sdag^h$ (in the unordered case, we also need to sort them, so that we get rid of the order and can properly compare them). Then we only need to explore once the image and check whether an element has two or more antecedents. The global cost is then $\O(\sum_{\nu \in \sdag^h} g^\ast(\#\child(\nu)))$. 
\end{proof}

We reuse the notation $g^\ast$ from the proof of Lemma~\ref{lemma:recompression:merge}. With respect to $\Delta$, the complexity for constructing $\ntm(\cdot)$ is $\O(Nng^\ast(d))$. Exploring the elements of $\ntm(h)$ for (i) choosing a vertex $\nu_M$ to remain, and (ii) delete the other elements $\delta_M$ has complexity $\O(Nn)$. In addition, at height $h'>h$, exploring the children to replace them or not costs $\O(\sum_{\nu \in \sdag^{h'}} \#\child(\nu))=\O(Ndn)$.

The global complexity $C(\sdag)$ of the algorithm is then
$$C(\sdag) =\O(\#\sdag) + \sum_{h=0}^{\height(\Delta)} \O(Nng^\ast(d)) +\O(Nn)+ \sum_{h' >h} \O(Ndn).$$

Remark that $\sum_{h=0}^{\height(D)} \O(Nn) = \O(Nm)=\O(\#\sdag)$, this leads to
$$C(\sdag)=\O(\#\sdag g^\ast(\degr(\mathcal{F})))+ \O\Bigg( Ndn \sum_{h=0}^{\height(D)}\sum_{h'>h} 1\Bigg).$$

The right-hand inner sum is in $\O(H^2)$. As $$\O(NdnH^2) = \O(\#\sdag Hd) =\O(\#\sdag \height(\sdag) \degr(\mathcal{F})),$$ this leads to our statement.

\vskip 0.2in
\bibliography{main}

\end{document}